\newtheorem{assumption}{Assumption}
\newtheorem{prop}{Proposition}
\newtheorem{lem}{Lemma}
\newtheorem{theorem}{Theorem}
\def\eqref#1{eqn.~(\ref{#1})}
\def\1{\bm{1}}
\def\rk{{\textnormal{k}}}
\def\vg{{\bm{g}}}
\def\vx{{\bm{x}}}
\DeclareMathAlphabet{\mathsfit}{\encodingdefault}{\sfdefault}{m}{sl}
\SetMathAlphabet{\mathsfit}{bold}{\encodingdefault}{\sfdefault}{bx}{n}
\def\A{{\bf A}}
\def\B{{\bf B}}
\def\bb{{\bf b}}
\def\E{{\bf E}}
\def\e{{\bf e}}
\def\g{{\bf g}}
\def\I{{\bf I}}
\def\M{{\bf M}}
\def\T{{\bf T}}
\def\V{{\bf V}}
\def\v{{\bf v}}
\def\W{{\bf W}}
\def\w{{\bf w}}
\def\X{{\bf X}}
\def\x{{\bf x}}
\def\Y{{\bf Y}}
\def\y{{\bf y}}
\def\0{{\bf 0}}
\def\1{{\bf 1}}
\def\DM{{\mathcal D}}
\def\IM{{\mathcal I}}
\def\HM{{\mathcal H}}
\def\NM{{\mathcal N}}
\def\OM{{\mathcal O}}
\def\SM{{\mathcal S}}
\def\RB{{\mathbb R}}
\def\EB{{\mathbb E}}
\def\PB{{\mathbb P}}
\def\argmax{\mathop{\rm argmax}}
\def\sumk{\sum_{k \in \mathcal{S}_t}}
\def\rk{\mathrm{rank}}
\def\diag{\mathrm{diag}}
\newcommand{\aw}{\overline{\w}}
\newcommand{\IB}[1]{\mathbb{I}\left\{#1\right\}}
\newcommand{\myave}{\sum_{k=1}^N p_k}
\newcommand{\ESt}{\EB_{\SM_t}}
\newcommand{\av}{\overline{\v}}
\title{On the Convergence of FedAvg on Non-IID Data}
\author{%
	Xiang Li\thanks{Equal contribution.}\\
	School of Mathematical Sciences\\
	Peking University\\
	Beijing, 100871, China\\
	\texttt{smslixiang@pku.edu.cn} \\
	\And
	Kaixuan Huang{$^*$}\\
	School of Mathematical Sciences\\
	Peking University\\
	Beijing, 100871, China\\
	\texttt{hackyhuang@pku.edu.cn} \\
	\And
	Wenhao Yang{$^*$}\\
	Center for Data Science\\
	Peking University\\
	Beijing, 100871, China\\
	\texttt{yangwenhaosms@pku.edu.cn}\\
	\And
	Shusen Wang\\
	Department of Computer Science\\
	Stevens Institute of Technology\\
	Hoboken, NJ 07030, USA\\
	\texttt{shusen.wang@stevens.edu}\\
	\And
	Zhihua Zhang\\
	School of Mathematical Sciences\\
	Peking University\\
	Beijing, 100871, China\\
	\texttt{zhzhang@math.pku.edu.cn} }
\begin{document}

\maketitle

\begin{abstract}
	Federated learning enables a large amount of edge computing devices to jointly learn a model without data sharing. As a leading algorithm in this setting, Federated Averaging (\texttt{FedAvg}) runs Stochastic Gradient Descent (SGD) in parallel on a small subset of the total devices and averages the sequences only once in a while. Despite its simplicity, it lacks theoretical guarantees under realistic settings. In this paper, we analyze the convergence of \texttt{FedAvg} on non-iid data and establish a convergence rate of $\mathcal{O}(\frac{1}{T})$ for strongly convex and smooth problems, where $T$ is the number of SGDs. Importantly, our bound demonstrates a trade-off between communication-efficiency and convergence rate. As user devices may be disconnected from the server, we relax the assumption of full device participation to partial device participation and study different averaging schemes; low device participation rate can be achieved without severely slowing down the learning.  Our results indicates that heterogeneity of data slows down the convergence, which matches empirical observations. Furthermore, we provide a necessary condition for \texttt{FedAvg} on non-iid data: the learning rate $\eta$ must decay, even if full-gradient is used; otherwise, the solution will be $\Omega (\eta)$ away from the optimal.
	
\end{abstract}

\section{Introduction}
\label{sec:intro}

Federated Learning (FL), also known as federated optimization, allows multiple parties to collaboratively train a model without data sharing \citep{konevcny2015federated,shokri2015privacy,mcmahan2016communication,konevcny2017stochastic,sahu2018convergence,zhuo2019federated}.
Similar to the centralized parallel optimization \citep{jakovetic2013distributed,li2014scaling,li2014communication,shamir2014communication,zhang2015disco,meng2016mllib,reddi2016aide,richtarik2016distributed,smith2016cocoa,zheng2016general,wang2018giant},
FL let the user devices (aka worker nodes) perform most of the computation and a central parameter server update the model parameters using the descending directions returned by the user devices.
Nevertheless, FL has three unique characters that distinguish it from the standard parallel optimization~\cite{li2019federated}.

First, the training data are {massively distributed} over an incredibly large number of devices, and the connection between the central server and a device is slow.
A direct consequence is the slow communication, which motivated communication-efficient FL algorithms  \citep{mcmahan2016communication,smith2017federated,sahu2018convergence,sattler2019robust}.
Federated averaging (\texttt{FedAvg}) is the first and perhaps the most widely used FL algorithm.
It runs $E$ steps of SGD in parallel on a small sampled subset of devices and then averages the resulting model updates via a central server once in a while.\footnote{In original paper~\citep{mcmahan2016communication}, $E$ epochs of SGD are performed in parallel. For theoretical analyses, we denote by $E$ the times of updates rather than epochs.}
In comparison with SGD and its variants, \texttt{FedAvg} performs more local computation and less communication.

Second, unlike the traditional distributed learning systems, the FL system does not have control over users' devices.
For example, when a mobile phone is turned off or WiFi access is unavailable, the central server will lose connection to this device.
When this happens during training, such a non-responding/inactive device, which is called a straggler, appears tremendously slower than the other devices.
Unfortunately, since it has no control over the devices, the system can do nothing but waiting or ignoring the stragglers.
Waiting for all the devices' response is obviously infeasible;
it is thus impractical to require all the devices be active.

Third, the training data are non-iid\footnote{Throughout this paper, ``non-iid'' means data are not identically distributed. More precisely, the data distributions in the $k$-th and $l$-th devices, denote $D_k$ and $D_l$, can be different.}, that is, a device's local data cannot be regarded as samples drawn from the overall distribution.
The data available locally fail to represent the overall distribution. 
This does not only bring challenges to algorithm design but also make theoretical analysis much harder.
While \texttt{FedAvg} actually works when the data are non-iid \cite{mcmahan2016communication}, \texttt{FedAvg} on non-iid data lacks theoretical guarantee even in convex optimization setting.

There have been much efforts developing convergence guarantees for FL algorithm based on the assumptions that (1) the data are iid and (2) all the devices are active. \citet{khaled2019first,yu2018parallel,wang2019adaptive} made the latter assumption, while \citet{zhou2017convergence,stich2018local,wang2018cooperative,woodworth2018graph} made both assumptions. The two assumptions violates the second and third characters of FL.
Previous algorithm \texttt{Fedprox}~\cite{sahu2018convergence} doesn't require the two mentioned assumptions and incorporates \texttt{FedAvg} as a special case when the added proximal term vanishes.
However, their theory fails to cover \texttt{FedAvg}.

\paragraph{Notation.}
Let $N$ be the total number of user devices and $K$ ($\leq N$) be the maximal number of devices that participate in every round's communication.
Let $T$ be the total number of every device's SGDs, $E$ be the number of local iterations performed in a device between two communications, and thus $\frac{T}{E}$ is the number of communications.

\paragraph{Contributions.}
For strongly convex and smooth problems, we establish a convergence guarantee for \texttt{FedAvg} without making the two impractical assumptions: (1) the data are iid, and (2) all the devices are active. To the best of our knowledge, this work is the first to show the convergence rate of \texttt{FedAvg} without making the two assumptions.

We show in Theorem~\ref{thm:full}, \ref{thm:w_replace}, and \ref{thm:wo_replace} that \texttt{FedAvg} has $\OM (\frac{1}{T})$ convergence rate.
In particular, Theorem~\ref{thm:wo_replace} shows that to attain a fixed precision $\epsilon$, the number of communications is 
\begin{equation}
\frac{T}{E}
\: = \:
\OM \left[ \frac{1}{\epsilon} \Bigg( \left(1+\frac{1}{K}\right) EG^2 + \frac{\myave ^2\sigma_k^2 + \Gamma + G^2}{E} + G^2 \Bigg)\right].
\end{equation}
Here, $G$, $\Gamma$, $p_k$, and $\sigma_k$ are problem-related constants defined in Section~\ref{sec:convergence:notation}.
The most interesting insight is that $E$ is a knob controlling the convergence rate:
neither setting $E$ over-small ($E=1$ makes \texttt{FedAvg} equivalent to SGD) nor setting $E$ over-large is good for the convergence.

This work also makes algorithmic contributions. 
We summarize the existing sampling\footnote{Throughout this paper, ``sampling'' refers to how the server chooses $K$ user devices and use their outputs for updating the model parameters. ``Sampling'' does not mean how a device randomly selects training samples.} and averaging schemes for \texttt{FedAvg} (which do not have convergence bounds before this work) and propose a new scheme (see Table~\ref{tab:conv}). 
We point out that a suitable sampling and averaging scheme is crucial for the convergence of \texttt{FedAvg}. 
To the best of our knowledge, we are the first to theoretically demonstrate that \texttt{FedAvg} with certain schemes (see Table~\ref{tab:conv}) can achieve $\OM(\frac{1}{T})$ convergence rate in non-iid federated setting.
We show that heterogeneity of training data and partial device participation slow down the convergence. We empirically verify our results through numerical experiments.

Our theoretical analysis requires the decay of learning rate (which is known to hinder the convergence rate.)
Unfortunately, we show in Theorem~\ref{thm:failure} that the decay of learning rate is necessary for \texttt{FedAvg} with $E>1$, even if full gradient descent is used.\footnote{It is well know that the full gradient descent (which is equivalent to \texttt{FedAvg} with $E=1$ and full batch) do not require the decay of learning rate.}
If the learning rate is fixed to $\eta$ throughout, \texttt{FedAvg} would converge to a solution at least $\Omega (\eta (E-1))$ away from the optimal.
To establish Theorem~\ref{thm:failure}, we construct a specific $\ell_2$-norm regularized linear regression model which satisfies our strong convexity and smoothness assumptions.

\begin{table}
	\caption{Sampling and averaging schemes for \texttt{FedAvg}.
		$\SM_t \sim \mathcal{U}(N,K)$ means $\SM_t$ is a size-$K$ subset uniformly sampled \textbf{without replacement} from $[N]$. 
		$\SM_t \sim \mathcal{W}(N,K,\mathbf{p})$ means $\SM_t$ contains $K$ elements that are iid sampled \textbf{with replacement} from $[N]$ with probabilities $\{p_k\}$. In the latter scheme, $\SM_t$ is not a set. }
	\label{tab:conv}
	\centering
	\begin{tabular}{cccc}
		\toprule
		Paper & Sampling & Averaging & Convergence rate \\
		\midrule
		\citet{mcmahan2016communication} & $\SM_t \sim \mathcal{U}(N,K) $ & $ \sum_{k\notin \SM_t} p_k \w_t + \sum_{k \in \SM_t} p_k \w_t^k $   & - \\
		
		\citet{sahu2018convergence} & $\SM_t \sim \mathcal{W}(N,K,\mathbf{p})$ & $\frac{1}{K}\sum_{k\in \SM_t} \w_t^k$  & $\OM(\frac{1}{T})$\tablefootnote{The sampling scheme is proposed by \citet{sahu2018convergence} for \texttt{FedAvg} as a baseline, but this convergence rate is our contribution.}  \\
		
		Ours & $\SM_t \sim \mathcal{U}(N,K) $ &  $\sum_{k \in \SM_t} p_k \frac{N}{K} \w_t^k$ & $\OM(\frac{1}{T})$\tablefootnote{The convergence relies on the assumption that data are balanced, i.e., $n_1=n_2=\cdots=n_N$. However, we can use a rescaling trick to get rid of this assumption. We will discuss this point later in Section~\ref{sec:convergence}.}    \\
		\bottomrule
	\end{tabular}
\end{table}

\paragraph{Paper organization.}
In Section~\ref{sec:alg}, we elaborate on \texttt{FedAvg}.
In Section~\ref{sec:convergence}, we present our main convergence bounds for \texttt{FedAvg}.
In Section~\ref{sec:example}, we construct a special example to show the necessity of learning rate decay.
In Section~\ref{sec:related}, we discuss and compare with prior work.
In Section~\ref{sec:experiment}, we conduct empirical study to verify our theories.
All the proofs are left to the appendix.

\section{Federated Averaging (\texttt{FedAvg})} \label{sec:alg}

\paragraph{Problem formulation.} 
In this work, we consider the following distributed optimization model:
\begin{equation}
\label{eq:loss}
\min_{\w} \;
\Big\{ F (\w )  \, \triangleq \,
\sum_{k=1}^N \, p_k F_k(\w) \Big\},
\end{equation}
where $N$ is the number of devices, and $p_k$ is the weight of the $k$-th device such that $p_k  \ge  0$ and $\sum_{k=1}^N p_k = 1$. 
Suppose the $k$-th device holds the $n_k$ training data: $x_{k, 1}, x_{k, 2}, \cdots , x_{k, n_k}$.
The local objective $F_k (\cdot)$ is defined by
\begin{equation}
\label{eq:local-f}
F_k (\w) \triangleq \frac{1}{n_k} \sum_{j=1}^{n_k}\ell(\w; x_{k, j}),
\end{equation}
where $\ell (\cdot ; \cdot)$ is a user-specified loss function.

\paragraph{Algorithm description.}
Here, we describe one around (say the $t$-th) of the \textit{standard} \texttt{FedAvg} algorithm.
First, the central server \textbf{broadcasts} the latest model, $\w_t$, to all the devices.
Second, every device (say the $k$-th) lets $\w_t^k = \w_t$ and then performs $E$ ($\geq 1$) \textbf{local updates}:
\begin{equation*}
\w_{t+i+1}^k \: \longleftarrow \: \w_{t+i}^k - \eta_{t+i} \nabla F_k(\w_{t+i}^k, \xi_{t+i}^k) , i=0,1,\cdots,E-1
\end{equation*}
where $\eta_{t+i}$ is the learning rate (a.k.a. step size) and $\xi_{t+i}^k$ is a sample uniformly chosen from the local data.
Last, the server \textbf{aggregates} the local models, $\w_{t+E}^1 , \cdots, \w_{t+E}^N$, to produce the new global model, $\w_{t+E}$.
Because of the non-iid and partial device participation issues, the aggregation step can vary.

\paragraph{IID versus non-iid.}
Suppose the data in the $k$-th device are i.i.d.\ sampled from the distribution $\DM_k$.
Then the overall distribution is a mixture of all local data distributions: $\DM = \sum_{k=1}^N p_k \DM_k$.
The prior work \cite{zhang2015deep,zhou2017convergence,stich2018local,wang2018cooperative,woodworth2018graph} assumes the data are iid generated by or partitioned among the $N$ devices, that is, $\DM_k = \DM$ for all $k \in [N]$.
However, real-world applications do not typically satisfy the iid assumption.
One of our theoretical contributions is avoiding making the iid assumption.

\paragraph{Full device participation.}
The prior work \cite{coppola2015iterative,zhou2017convergence,stich2018local,yu2018parallel,wang2018cooperative,wang2019adaptive} requires the {full device participation} in the aggregation step of \texttt{FedAvg}.
In this case, the aggregation step performs
\begin{equation*}
\w_{t+E} \: \longleftarrow \: \sum_{k=1}^N p_k \, \w_{t+E}^k .
\end{equation*}
Unfortunately, the full device participation requirement suffers from serious ``straggler's effect'' (which means everyone waits for the slowest) in real-world applications.
For example, if there are thousands of users' devices in the FL system, there are always a small portion of devices offline.
Full device participation means the central server must wait for these ``stragglers'', which is obviously unrealistic.

\paragraph{Partial device participation.} 
This strategy is much more realistic because it does not require all the devices' output.
We can set a threshold $K$ ($1\leq K < N$) and let the central server collect the outputs of the first $K$ responded devices.
After collecting $K$ outputs, the server stops waiting for the rest; the $K+1$-th to $N$-th devices are regarded stragglers in this iteration.
Let $\SM_t$ ($| \SM_t | = K$) be the set of the indices of the first $K$ responded devices in the $t$-th iteration.
The aggregation step performs
\begin{equation*}
\w_{t+E} \: \longleftarrow \: \frac{N}{K} \sum_{k\in \SM_t } p_k  \, \w_{t+E}^k .
\end{equation*}
It can be proved that $\frac{N}{K}  \sum_{k\in \SM_t } p_k $ equals one in expectation.

\paragraph{Communication cost.}
The \texttt{FedAvg} requires two rounds communications--- one broadcast and one aggregation--- per $E$ iterations.
If $T$ iterations are performed totally, then the number of communications is $\lfloor \frac{2T}{E} \rfloor$.
During the broadcast, the central server sends $\w_t$ to all the devices.
During the aggregation, all or part of the $N$ devices sends its output, say $\w_{t+E}^k$, to the server.

\section{Convergence Analysis of \texttt{FedAvg} in Non-iid Setting}
\label{sec:convergence}

In this section, we show that \texttt{FedAvg} converges to the global optimum at a rate of $\OM(1/T)$ for strongly convex and smooth functions and non-iid data. The main observation is that when the learning rate is sufficiently small, the effect of $E$ steps of local updates is similar to one step update with a larger learning rate. This coupled with appropriate sampling and averaging schemes would make each global update behave like an SGD update. Partial device participation ($K < N$) only makes the averaged sequence $\{\w_t\}$ have a larger variance, which, however, can be controlled by learning rates. These imply the convergence property of \texttt{FedAvg} should not differ too much from SGD. Next, we will first give the convergence result with full device participation (i.e., $K=N$) and then extend this result to partial device participation (i.e., $K<N$).

\subsection{Notation and Assumptions} \label{sec:convergence:notation}

We make the following assumptions on the functions $F_1, \cdots , F_N$.
Assumption~\ref{asm:smooth} and \ref{asm:strong_cvx} are standard; 
typical examples are the $\ell_2$-norm regularized linear regression, logistic regression, and softmax classifier.

\begin{assumption} \label{asm:smooth}
	$F_1, \cdots, F_N$ are all $L$-smooth:
	for all $\v$ and $\w$, $F_k(\v)  \leq F_k(\w) + (\v - \w)^T \nabla F_k(\w) + \frac{L}{2} \| \v - \w\|_2^2$.
\end{assumption}

\begin{assumption} \label{asm:strong_cvx}
	$F_1, \cdots, F_N$ are all $\mu$-strongly convex:
	for all $\v$ and $\w$, $F_k(\v)  \geq F_k(\w) + (\v - \w)^T \nabla F_k(\w) + \frac{\mu }{2} \| \v - \w\|_2^2$.
\end{assumption}

Assumptions~\ref{asm:sgd_var} and \ref{asm:sgd_norm} have been made by the works \cite{zhang2013communication,stich2018local,stich2018sparsified,yu2018parallel}.

\begin{assumption} \label{asm:sgd_var}
	Let $\xi_t^k$ be sampled from the $k$-th device's local data uniformly at random.
	The variance of stochastic gradients in each device is bounded: $\EB \left\| \nabla F_k(\w_t^k,\xi_t^k) - \nabla F_k(\w_t^k) \right\|^2 \le \sigma_k^2$ for $k=1,\cdots,N$.
\end{assumption}

\begin{assumption} \label{asm:sgd_norm}
	The expected squared norm of stochastic gradients is uniformly bounded, i.e., $\EB \left\| \nabla F_k(\w_t^k,\xi_t^k) \right\|^2  \le G^2$ for all $k=1,\cdots,N$ and $t=1,\cdots, T-1$
\end{assumption}

\paragraph{Quantifying the degree of non-iid (heterogeneity).}
Let $F^*$ and $F_k^*$ be the minimum values of $F$ and $F_k$, respectively.
We use the term $\Gamma = F^* - \myave F_k^*$ for quantifying the degree of non-iid.
If the data are iid, then $\Gamma$ obviously goes to zero as the number of samples grows.
If the data are non-iid, then $\Gamma$ is nonzero, and its magnitude reflects the heterogeneity of the data distribution.

\subsection{Convergence Result: Full Device Participation}
\label{subsec:convres_full}

Here we analyze the case that all the devices participate in the aggregation step; see Section~\ref{sec:alg} for the algorithm description.
Let the \texttt{FedAvg} algorithm terminate after $T$ iterations and return $\w_T$ as the solution. 
We always require $T$ is evenly divisible by $E$ so that \texttt{FedAvg} can output $\w_T$ as expected.

\begin{theorem}
	\label{thm:full}
	Let Assumptions~\ref{asm:smooth} to \ref{asm:sgd_norm} hold and $L, \mu, \sigma_k, G$ be defined therein. Choose $\kappa = \frac{L}{\mu}$, $\gamma = \max\{8\kappa, E\}$ and the learning rate $\eta_t = \frac{2}{\mu (\gamma+t)}$. 
	Then $\texttt{FedAvg}$ with {full device participation} satisfies
	\begin{equation}
	\label{eq:bound_K=N}
	\EB \left[ F(\w_T)\right] - F^* 
	\: \leq \: 
	\frac{\kappa}{\gamma +T-1} \left( \frac{2B}{\mu} + \frac{\mu \gamma}{2} \EB \|\w_1 - \w^*\|^2 \right),
	\end{equation}
	where
	\begin{equation}
	\label{eq:bound_B}
	B = \sum_{k=1}^N p_k^2 \sigma_k^2 + 6L \Gamma + 8  (E-1)^2G^2.
	\end{equation}
\end{theorem}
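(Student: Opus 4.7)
The plan is to introduce a virtual averaged sequence $\bar{\w}_t \triangleq \sum_{k=1}^N p_k \w_t^k$, defined for every $t$ (not only aggregation times). This sequence coincides with the actual global iterate whenever $t$ is a multiple of $E$, and in between it evolves by a simple weighted-gradient step: $\bar{\w}_{t+1} = \bar{\w}_t - \eta_t \g_t$, where $\g_t = \sum_k p_k \nabla F_k(\w_t^k,\xi_t^k)$ is an unbiased estimator of $\bar{\g}_t = \sum_k p_k \nabla F_k(\w_t^k)$. Working with $\bar{\w}_t$ instead of the per-device iterates lets me treat \texttt{FedAvg} as a perturbed SGD on $F$, where the perturbation measures how much the local iterates have drifted apart between communications.

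The core of the argument is a one-step recursion of the form
\begin{equation*}
\EB \|\bar{\w}_{t+1} - \w^*\|^2 \;\le\; (1 - \mu\eta_t)\,\EB\|\bar{\w}_t - \w^*\|^2 + \eta_t^2 B,
\end{equation*}
which I establish via three ingredients. (i) A stochastic-gradient variance bound $\EB\|\g_t - \bar{\g}_t\|^2 \le \sum_k p_k^2 \sigma_k^2$, using independence of $\xi_t^k$ across $k$ and Assumption~\ref{asm:sgd_var}. (ii) A client-drift bound $\sum_k p_k \EB \|\bar{\w}_t - \w_t^k\|^2 \le 4\eta_t^2(E-1)^2 G^2$: since the last synchronization occurred at most $E-1$ steps ago and each local step has squared norm at most $\eta^2 G^2$ by Assumption~\ref{asm:sgd_norm}, telescoping and Jensen give the factor $(E-1)^2$, and monotonicity of $\eta_t$ lets me pin everything to the current $\eta_t$. (iii) A descent identity expanding $\|\bar{\w}_{t+1} - \w^*\|^2$ and bounding the cross term $-2\eta_t \sum_k p_k \langle \bar{\w}_t - \w^*, \nabla F_k(\w_t^k)\rangle$ by splitting into a $\langle \w_t^k - \w^*, \nabla F_k(\w_t^k)\rangle$ piece (controlled by $\mu$-strong convexity, yielding the contraction $1 - \mu\eta_t$) and a $\langle \bar{\w}_t - \w_t^k, \nabla F_k(\w_t^k)\rangle$ piece (controlled by $L$-smoothness and AM–GM). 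Collecting terms produces $B = \sum_k p_k^2 \sigma_k^2 + 6L\Gamma + 8(E-1)^2 G^2$.

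The cross-term bound is where I expect the real work. To lower-bound $\sum_k p_k F_k(\w_t^k) - \sum_k p_k F_k(\w^*)$ I use strong convexity per device, producing $\sum_k p_k F_k^*$ which I then rewrite as $F^* - \Gamma$; this is how the heterogeneity constant $\Gamma$ enters. The smoothness-based detour through $F_k(\w_t^k) \le F_k(\bar{\w}_t) + \langle \cdots \rangle + \frac{L}{2}\|\bar{\w}_t - \w_t^k\|^2$ introduces the $6L\Gamma$ coefficient once the drift bound (ii) is plugged in. I will need to be careful that the drift quadratic is absorbed into $\eta_t^2 B$ rather than into the contraction factor, which is why the assumption $\gamma \ge 8\kappa$ (hence $\eta_t \le 1/(4L)$) is needed.

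Finally, I finish by induction on $t$. Setting $v = \max\{\gamma \EB\|\w_1-\w^*\|^2,\, 4B/\mu^2\}$ and assuming $\EB\|\bar{\w}_t - \w^*\|^2 \le v/(\gamma+t-1)$, a direct computation using $\eta_t = 2/(\mu(\gamma+t))$ shows that the recursion propagates the bound to step $t+1$; the algebra closes precisely because $\gamma \ge 8\kappa$. Applying $L$-smoothness of $F$ at $\w^*$,
\begin{equation*}
\EB F(\w_T) - F^* \;\le\; \tfrac{L}{2}\,\EB\|\w_T - \w^*\|^2 \;\le\; \tfrac{L}{2}\cdot\frac{v}{\gamma+T-1},
\end{equation*}
and substituting $v \le \gamma \EB\|\w_1-\w^*\|^2 + 4B/\mu^2$ yields the claimed bound \eqref{eq:bound_K=N}. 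The main obstacle, I expect, is handling the cross term in step (iii) tightly enough to isolate $\Gamma$ cleanly; the drift bound and the induction are then routine given the learning-rate schedule.
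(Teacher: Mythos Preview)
Your proposal is correct and follows essentially the same route as the paper: introduce the virtual average $\bar{\w}_t$, establish the three ingredients (stochastic variance bound, client-drift bound, and a one-step contraction lemma under $\eta_t\le 1/(4L)$), combine them into the recursion $\Delta_{t+1}\le(1-\mu\eta_t)\Delta_t+\eta_t^2 B$, and close by induction with the stated step-size schedule and $L$-smoothness. Your sketch of how $\Gamma$ enters in step~(iii) is slightly off in detail---in the paper, $F_k^*$ appears not from strong convexity but from the smoothness inequality $\|\nabla F_k(\w_t^k)\|^2\le 2L(F_k(\w_t^k)-F_k^*)$ applied to the $\|\bar{\g}_t\|^2$ term, and the passage from $\w_t^k$ to $\bar{\w}_t$ uses \emph{convexity} (a lower bound on $F_k(\w_t^k)-F_k(\bar{\w}_t)$), not the smoothness upper bound you wrote---but these are bookkeeping issues within the same argument, not a different strategy.
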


\subsection{Convergence Result: Partial Device Participation}
\label{subsec:conv_partial}

As discussed in Section~\ref{sec:alg}, partial device participation has more practical interest than full device participation. Let the set $\SM_t$ ($\subset [N]$) index the active devices in the $t$-th iteration.
To establish the convergence bound, we need to make assumptions on $\SM_t$.

Assumption~\ref{asm:w_replace} assumes the $K$ indices are selected from the distribution $p_k$ independently and with replacement. The aggregation step is simply averaging. This is first proposed in \citep{sahu2018convergence}, but they did not provide theoretical analysis.

\begin{assumption} [Scheme I]
	\label{asm:w_replace}
	Assume $\SM_t$ contains a subset of $K$ indices randomly selected {with replacement} according to the sampling probabilities $p_1, \cdots , p_N$. The aggregation step of \texttt{FedAvg} performs
	$\w_{t} \longleftarrow \frac{1}{K} \sum_{k\in \SM_t } \w_{t}^k $.
\end{assumption}

\begin{theorem}
	\label{thm:w_replace}
	Let Assumptions~\ref{asm:smooth} to \ref{asm:sgd_norm} hold and $L, \mu, \sigma_k, G$ be defined therein. Let $\kappa, \gamma$, $\eta_t$, and $B$ be defined in Theorem~\ref{thm:full}.
	Let Assumption~\ref{asm:w_replace} hold and define $C =  \frac{4}{K} E^2G^2$.
	Then 
	\begin{equation}
	\label{eq:bound_K<N}
	\EB \left[ F(\w_T)\right] - F^* 
	\: \leq \: 
	\frac{\kappa}{\gamma +T - 1} \left( \frac{2(B+C)}{\mu} + \frac{\mu \gamma}{2} \EB \|\w_1 - \w^*\|^2 \right),
	\end{equation}
\end{theorem}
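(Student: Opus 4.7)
The plan is to reduce the partial-participation analysis to the full-participation bound of Theorem~\ref{thm:full} plus an additive variance contribution coming from the random choice of $\SM_t$. Let $\v_{t+1}^k = \w_t^k - \eta_t \nabla F_k(\w_t^k,\xi_t^k)$ denote the one-step local update \emph{before} any aggregation and set $\overline{\v}_{t+1} = \sum_{k=1}^N p_k\, \v_{t+1}^k$; this $\overline{\v}_{t+1}$ is exactly the virtual averaged sequence already controlled inside the proof of Theorem~\ref{thm:full}. Write $\overline{\w}_{t+1}$ for the actual $p$-weighted average of the local iterates produced by Scheme~I (so $\overline{\w}_{t+1} = \overline{\v}_{t+1}$ at non-communication steps). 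The crucial identity is that sampling with replacement is unbiased, conditionally on the local iterates at time $t$:
\[
\EB_{\SM_{t+1}}\!\left[\overline{\w}_{t+1}\right] \: = \: \EB_{\SM_{t+1}}\!\left[\tfrac{1}{K}\sum_{j\in\SM_{t+1}}\v_{t+1}^j\right] \: = \: \sum_{k=1}^N p_k\, \v_{t+1}^k \: = \: \overline{\v}_{t+1}.
\]

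Exploiting unbiasedness, the cross-term $2\,\EB\langle \overline{\w}_{t+1} - \overline{\v}_{t+1},\, \overline{\v}_{t+1} - \w^*\rangle$ vanishes and one obtains the clean decomposition
\[
\EB\|\overline{\w}_{t+1}-\w^*\|^2 \: = \: \EB\|\overline{\w}_{t+1}-\overline{\v}_{t+1}\|^2 \: + \: \EB\|\overline{\v}_{t+1}-\w^*\|^2 .
\]
The second summand inherits the recursion $(1-\mu\eta_t)\,\EB\|\overline{\w}_t-\w^*\|^2 + \eta_t^2 B$ already established inside the proof of Theorem~\ref{thm:full}, so the entire task reduces to bounding the new sampling-variance term. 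Since the $K$ indices in $\SM_{t+1}$ are i.i.d.\ with $\PB(j=k)=p_k$, the variance of the sample mean equals
\[
\EB\|\overline{\w}_{t+1}-\overline{\v}_{t+1}\|^2 \: = \: \frac{1}{K}\sum_{k=1}^N p_k\, \EB\|\v_{t+1}^k - \overline{\v}_{t+1}\|^2 .
\]

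To control the per-device deviation, let $t_0 \le t$ be the most recent synchronisation time, so that $\w_{t_0}^k$ coincides across $k$. Because at most $E$ local steps separate $t_0$ from $t+1$, unrolling $\v_{t+1}^k = \w_{t_0} - \sum_{s=t_0}^{t}\eta_s \nabla F_k(\w_s^k,\xi_s^k)$, applying Cauchy--Schwarz to the length-$(t+1-t_0)$ sum, and invoking Assumption~\ref{asm:sgd_norm} on each gradient yields $\EB\|\v_{t+1}^k - \w_{t_0}\|^2 \le E^2 \eta_{t_0}^2 G^2$. Because $\overline{\v}_{t+1}$ is the $p$-weighted mean, the elementary identity $\sum_k p_k\|X_k-\overline{X}\|^2 \le \sum_k p_k\|X_k - c\|^2$ (applied with $c=\w_{t_0}$) propagates this bound to $\sum_k p_k \EB\|\v_{t+1}^k - \overline{\v}_{t+1}\|^2 \le E^2 \eta_{t_0}^2 G^2$. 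The hypothesis $\gamma \ge E$ guarantees $\eta_{t_0}^2 \le 4\eta_t^2$ throughout a round (indeed $\gamma + t \le 2(\gamma + t_0)$ whenever $t-t_0 < E \le \gamma$), so the sampling variance is at most $\frac{4}{K}E^2 \eta_t^2 G^2 = \eta_t^2 C$. Substituting back produces
\[
\EB\|\overline{\w}_{t+1}-\w^*\|^2 \: \le \: (1-\mu\eta_t)\,\EB\|\overline{\w}_t-\w^*\|^2 \: + \: \eta_t^2 (B+C),
\]
which has the same shape as the recursion proved in Theorem~\ref{thm:full} with $B$ simply replaced by $B+C$. The induction on $t$ and the closing $L$-smoothness step used there transfer verbatim and deliver~\eqref{eq:bound_K<N}.

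The main obstacle is keeping both the constant and the learning-rate conversion sharp in the drift step: one has to obtain $\sum_k p_k \EB\|\v_{t+1}^k - \overline{\v}_{t+1}\|^2 \le E^2 \eta_{t_0}^2 G^2$ with the correct power of $E$ (a naive sum-of-norm-bounds would give $E^3$), and one must verify that the learning-rate decay over an $E$-step round is mild enough to convert $\eta_{t_0}$ into $\eta_t$ while losing only the factor $4$ that has been absorbed into $C$. Both are precisely what the conditions $\gamma \ge \max\{8\kappa,E\}$ and Assumption~\ref{asm:sgd_norm} are designed to supply.
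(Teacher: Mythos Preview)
Your proposal is correct and follows essentially the same route as the paper: the same unbiasedness observation (the paper's Lemma~4), the same decomposition $\EB\|\overline{\w}_{t+1}-\w^*\|^2 = \EB\|\overline{\w}_{t+1}-\overline{\v}_{t+1}\|^2 + \EB\|\overline{\v}_{t+1}-\w^*\|^2$, and the same variance bound via drift from the last synchronisation time combined with $\eta_{t_0}\le 2\eta_t$ (the paper's Lemma~5). The only cosmetic difference is that the paper explicitly distinguishes the cases $t+1\in\IM_E$ and $t+1\notin\IM_E$ before passing to the uniform recursion with $B+C$, whereas you fold this in by noting $\overline{\w}_{t+1}=\overline{\v}_{t+1}$ at non-communication steps.
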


Alternatively, we can select $K$ indices from $[N]$ uniformly at random without replacement.
As a consequence, we need a different aggregation strategy. Assumption~\ref{asm:wo_replace} assumes the $K$ indices are selected uniformly without replacement and the aggregation step is the same as in Section~\ref{sec:alg}. However, to guarantee convergence, we require an additional assumption of balanced data.

\begin{assumption}[Scheme II]
	\label{asm:wo_replace}
	Assume $\SM_t$ contains a subset of $K$ indices uniformly sampled from $[N]$ {without replacement}. Assume the data is balanced in the sense that $p_1 = \cdots = p_N = \frac{1}{N}$. The aggregation step of \texttt{FedAvg} performs
	$\w_{t} \longleftarrow \frac{N}{K} \sum_{k\in \SM_t } p_k \, \w_{t}^k $.
\end{assumption}

\begin{theorem} 
	\label{thm:wo_replace}
	Replace Assumption~\ref{asm:w_replace} by Assumption~\ref{asm:wo_replace} and $C$ by $C = \frac{N-K}{N-1} \frac{4}{K} E^2G^2$. Then the same bound in Theorem~\ref{thm:w_replace} holds.
\end{theorem}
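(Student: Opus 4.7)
}
Inspecting the structure of Theorems~\ref{thm:full} and~\ref{thm:w_replace}, the only place the sampling scheme enters the argument is in bounding the variance of the partial-participation aggregate around the full average. My plan is therefore to reuse the entire machinery of Theorem~\ref{thm:w_replace} verbatim and only re-derive that one variance bound under Assumption~\ref{asm:wo_replace}, showing it carries an extra factor $\frac{N-K}{N-1}$ relative to Scheme~I.

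\paragraph{Step 1: Set up the virtual sequences.} Following the proof of Theorem~\ref{thm:w_replace}, introduce the per-device iterates $\{\w_t^k\}_{k=1}^N$ evolved by local SGD as if all $N$ devices were active, and the virtual average $\bar{\w}_t = \sum_{k=1}^N p_k \w_t^k = \frac{1}{N}\sum_{k=1}^N \w_t^k$, where the second equality uses the balanced data assumption $p_k = 1/N$. In non-aggregation steps, the actually maintained iterate equals $\bar{\w}_t$; in an aggregation step $t$, it is replaced by $\w_t = \frac{N}{K}\sum_{k\in \SM_t} p_k \w_t^k = \frac{1}{K}\sum_{k\in \SM_t} \w_t^k$. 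Decompose
\[
\EB\|\w_t - \w^*\|^2 \;=\; \EB\|\bar{\w}_t - \w^*\|^2 + \EB\|\w_t - \bar{\w}_t\|^2,
\]
using that $\EB_{\SM_t}[\w_t \mid \{\w_t^k\}_k] = \bar{\w}_t$ under uniform sampling without replacement, so the cross term vanishes. The term $\EB\|\bar{\w}_t - \w^*\|^2$ is controlled exactly as in the full-participation analysis of Theorem~\ref{thm:full}, yielding the $B$ contribution. It remains to bound the sampling variance $\EB\|\w_t - \bar{\w}_t\|^2$.

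\paragraph{Step 2: Finite-population variance under Scheme~II.} Condition on $\{\w_t^k\}_k$ and regard them as fixed vectors. For a uniform size-$K$ subset $\SM_t$ drawn without replacement from $[N]$, the classical finite-population variance formula gives, coordinate-wise,
\[
\EB_{\SM_t}\Big\|\tfrac{1}{K}\sum_{k\in \SM_t}\w_t^k - \bar{\w}_t\Big\|^2
\;=\;
\frac{1}{K}\cdot\frac{N-K}{N-1}\cdot\frac{1}{N}\sum_{k=1}^N \|\w_t^k - \bar{\w}_t\|^2.
\]
This is the key step and essentially the only new ingredient: the factor $\frac{N-K}{N-1}$ is the finite-population correction that replaces the with-replacement factor of $1$ used in Theorem~\ref{thm:w_replace}. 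I would verify it by expanding the square using indicator variables $\mathbf{1}\{k\in \SM_t\}$ and computing $\EB[\mathbf{1}\{k\in\SM_t\}] = K/N$ and $\EB[\mathbf{1}\{k\in\SM_t\}\mathbf{1}\{j\in\SM_t\}] = K(K-1)/[N(N-1)]$ for $k\neq j$.

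\paragraph{Step 3: Bound the local drift.} Use the lemma from the proof of Theorem~\ref{thm:full} that controls $\frac{1}{N}\sum_k\|\w_t^k - \bar{\w}_t\|^2$ via Assumption~\ref{asm:sgd_norm}: between two consecutive communications each $\w_t^k$ moves by at most $\eta_t E G$ in expectation, yielding a bound of the form $4\eta_t^2(E-1)^2 G^2$, hence $\EB\|\w_t - \bar{\w}_t\|^2 \leq \frac{N-K}{N-1}\cdot\frac{4\eta_t^2 E^2 G^2}{K}$. This contributes the corrected $C = \frac{N-K}{N-1}\cdot\frac{4}{K}E^2 G^2$ term. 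Plugging this into the same recursion and induction argument used for Theorem~\ref{thm:w_replace} with the learning rate $\eta_t = 2/[\mu(\gamma+t)]$ produces the advertised bound.

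\paragraph{Main obstacle.} The recursion, induction, and choice of $\eta_t$ are inherited unchanged from Theorem~\ref{thm:w_replace}, so the only non-routine step is Step~2: establishing unbiasedness of $\w_t$ around $\bar{\w}_t$ under Scheme~II (which relies critically on the balanced-data assumption, without which one would pick up weights $p_k\neq 1/N$ and the sampling would become biased) and computing the correct variance with finite-population correction. Once that is in place the remainder is bookkeeping: replacing $\frac{1}{K}$ by $\frac{N-K}{K(N-1)}$ throughout the derivation of the $C$ term in Theorem~\ref{thm:w_replace}.
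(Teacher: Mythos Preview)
Your proposal is correct and mirrors the paper's proof essentially step for step: the paper likewise reduces everything to bounding $\EB_{\SM_{t+1}}\|\aw_{t+1}-\av_{t+1}\|^2$ under Scheme~II via the same indicator-variable computation with $\PB(i\in\SM_{t+1})=K/N$ and $\PB(i,j\in\SM_{t+1})=K(K-1)/[N(N-1)]$, obtains the finite-population factor $\frac{N-K}{N-1}$, and then reuses the recursion from Theorem~\ref{thm:w_replace} verbatim. One small bookkeeping point in Step~3: at an aggregation step a full $E$ local updates have elapsed since the last sync, so the drift bound used here (and in the paper's Lemma~5) is $4\eta_t^2 E^2 G^2$, not the $(E-1)^2$ version from the within-round Lemma~3 you cite---but you already land on the correct $C$.
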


Scheme II requires $p_1 = \cdots = p_N = \frac{1}{N}$ which obviously violates the unbalance nature of FL.
Fortunately, this can be addressed by the following transformation.
Let $\widetilde{F}_k(\w)= p_kNF_k(\w)$ be a scaled local objective $F_k$. Then the global objective becomes a simple average of all scaled local objectives:  
\begin{equation*}
F(\w) 
\: = \: \sum_{k=1}^N p_k {F}_k(\w)
\: = \: \frac{1}{N} \sum_{k=1}^N \widetilde{F}_k(\w) .
\end{equation*}
Theorem~\ref{thm:wo_replace} still holds if $L, \mu, \sigma_k, G$ are replaced by $\widetilde{L} \triangleq \nu L$, $\widetilde{\mu} \triangleq \varsigma \mu$, $\widetilde{\sigma}_k = \sqrt{\nu} \sigma$, and $\widetilde{G} = \sqrt{\nu} G$, respectively.
Here, $\nu =  N \cdot \max_{k}p_k$ and $ \varsigma = N \cdot \min_{k} p_k $.

\subsection{Discussions}

\paragraph{Choice of $E$.}
Since $\| \w_0 - \w^* \|^2 \le \frac{4}{\mu^2} G^2$ for $\mu$-strongly convex $F$, the dominating term in \eqref{eq:bound_K<N} is
\begin{equation}
\label{eq:bound_dom}
\mathcal{O}\left(\frac{\sum _{k=1}^N p_k^2 \sigma_k^2 + L \Gamma + \left(1+\frac{1}{K}\right)E^2G^2 + \gamma G^2}{\mu T}\right).
\end{equation}
Let $T_\epsilon$ denote the number of required steps for \texttt{FedAvg} to achieve an $\epsilon$ accuracy. 
It follows from \eqref{eq:bound_dom} that the number of required communication rounds is roughly\footnote{Here we use $\gamma = \OM(\kappa + E)$.}
\begin{equation}
\label{eq:communication_round}
\frac{T_\epsilon}{E} 
\: \propto \:   \left(1+\frac{1}{K}\right)EG^2 + \frac{\myave ^2\sigma_k^2 + L \Gamma + \kappa G^2}{E} + G^2 .
\end{equation}
Thus, $\frac{T_\epsilon}{E}$ is a function of $E$ that first decreases and then increases, which implies that over-small or over-large $E$ may lead to high communication cost and that the optimal $E$ exists.

\citet{stich2018local} showed that if the data are iid, then $E$ can be set to $\OM (\sqrt{T} )$.
However, this setting does not work if the data are non-iid.
Theorem~\ref{thm:full} implies that $E$ must not exceed $\Omega (\sqrt{T})$; otherwise, convergence is not guaranteed.
Here we give an intuitive explanation.
If $E$ is set big, then $\w_t^k$ can converge to the minimizer of $F_k$, and thus \texttt{FedAvg} becomes the one-shot average \cite{zhang2013communication} of the local solutions.
If the data are non-iid, the one-shot averaging does not work because weighted average of the minimizers of $F_1, \cdots , F_N$ can be very different from the minimizer of $F$.

\paragraph{Choice of $K$.}
\citet{stich2018local} showed that if the data are iid, the convergence rate improves substantially as $K$ increases.
However, under the non-iid setting, the convergence rate has a weak dependence on $K$, as we show in Theorems~\ref{thm:w_replace} and \ref{thm:wo_replace}. This implies \texttt{FedAvg} is unable to achieve linear speedup. We have empirically observed this phenomenon (see Section~\ref{sec:experiment}).
Thus, in practice, the participation ratio $\frac{K}{N}$ can be set small to alleviate the straggler's effect without affecting the convergence rate.

\paragraph{Choice of sampling schemes.}
We considered two sampling and averaging schemes in Theorems~\ref{thm:w_replace} and \ref{thm:wo_replace}.
Scheme I selects $K$ devices according to the probabilities $p_1, \cdots, p_N$ with replacement.
The non-uniform sampling results in faster convergence than uniform sampling, especially when $p_1, \cdots, p_N$ are highly non-uniform.
If the system can choose to activate any of the $N$ devices at any time, then Scheme I should be used.

However, oftentimes the system has no control over the sampling; instead, the server simply uses the first $K$ returned results for the update.
In this case, we can assume the $K$ devices are uniformly sampled from all the $N$ devices and use Theorem~\ref{thm:wo_replace} to guarantee the convergence.
If $p_1, \cdots, p_N$ are highly non-uniform, then $\nu =  N \cdot \max_{k}p_k$ is big and $ \varsigma = N \cdot \min_{k} p_k $ is small, which makes the convergence of \texttt{FedAvg} slow.
This point of view is empirically verified in our experiments.

%
%
%


\section{Necessity of Learning Rate Decay}
\label{sec:example}
In this section, we point out that diminishing learning rates are crucial for the convergence of \texttt{FedAvg} in the non-iid setting. Specifically, we establish the following theorem by constructing a ridge regression model (which is strongly convex and smooth).

\begin{theorem}
	\label{thm:failure} 
	We artificially construct a strongly convex and smooth distributed optimization problem.
	With full batch size, $E>1$, and any {\it fixed} step size, \texttt{FedAvg} will converge to sub-optimal points. Specifically, let $\tilde{\w}^*$ be the solution produced by \texttt{FedAvg} with a small enough and constant $\eta$, and $\w^*$ the optimal solution. Then we have 
	\[ \|\tilde{\w}^* - \w^* \|_2 = \Omega (  (E-1)\eta ) \cdot \| \w^* \|_2. \]
	where we hide some problem dependent constants.
\end{theorem}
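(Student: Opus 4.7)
The plan is to construct a simple ridge-regression instance for which the FedAvg update with full batch and constant step size becomes an explicit affine recurrence, then compute its unique fixed point and expand in $\eta$ to read off a $\Theta((E-1)\eta)$ bias. Take $N$ devices, each holding a single data pair $(a_k,b_k)$, with local objective $F_k(w)=\tfrac{1}{2}(a_k^\top w - b_k)^2 + \tfrac{\lambda}{2}\|w\|_2^2$, where $\lambda>0$ (for strong convexity) and the vectors $a_k$ are not identical across $k$. Setting $H_k = a_k a_k^\top + \lambda I$ and $g_k = a_k b_k$, one full-batch local gradient step at device $k$ is the affine map $T_k w = (I-\eta H_k)w + \eta g_k$, and its $E$-fold iterate $T_k^E w = (I-\eta H_k)^E w + \eta\bigl(\sum_{i=0}^{E-1}(I-\eta H_k)^i\bigr)g_k$ is again affine. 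Hence one full-participation FedAvg round is $\Phi(w) = \sum_k p_k T_k^E w = Mw + c$ with $M := \sum_k p_k (I-\eta H_k)^E$ and $c := \eta\sum_k p_k \sum_{i=0}^{E-1}(I-\eta H_k)^i g_k$. For $\eta$ small enough that $\|M\|_2<1$, $\Phi$ is a strict contraction and $\tilde w^* = (I-M)^{-1}c$ is its unique fixed point, i.e., exactly where \texttt{FedAvg} converges.

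Next I would Taylor expand $M$ and $c$ in $\eta$: $(I-\eta H_k)^E = I - E\eta H_k + \binom{E}{2}\eta^2 H_k^2 + O(\eta^3)$ and $\sum_{i=0}^{E-1}(I-\eta H_k)^i = EI - \binom{E}{2}\eta H_k + O(\eta^2)$. Writing $\bar X := \sum_k p_k X_k$, this gives $I - M = E\eta\,\bar H - \binom{E}{2}\eta^2\,\overline{H^2} + O(\eta^3)$ and $c = E\eta\,\bar g - \binom{E}{2}\eta^2\,\overline{Hg} + O(\eta^3)$. Using the exact minimizer $w^* = \bar H^{-1}\bar g$ of $F=\sum_k p_k F_k$ together with a Neumann expansion of $(I-M)^{-1}$, a direct manipulation yields
$\tilde w^* - w^* = \tfrac{E-1}{2}\,\eta\,\bar H^{-1}\bigl(\overline{H^2}w^* - \overline{Hg}\bigr) + O(\eta^2) = \tfrac{E-1}{2}\,\eta\,\bar H^{-1}\sum_k p_k H_k\,\nabla F_k(w^*) + O(\eta^2)$,
where I used $\nabla F_k(w^*) = H_k w^* - g_k$. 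This leading coefficient is a weighted ``covariance'' between the local Hessians and the local residual gradients at $w^*$; it vanishes only in the homogeneous case where every $\nabla F_k(w^*)=0$, i.e., where all local problems already share the same minimizer.

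For the lower bound I would specialize to the scalar instance $N=2$, $p_1=p_2=\tfrac{1}{2}$, $a_1\neq a_2$, $b_1=b_2\neq 0$, and $\lambda>0$, where every quantity above becomes an explicit rational function of $(a_1,a_2,b,\lambda)$ and one can verify that $\bigl|\sum_k p_k H_k \nabla F_k(w^*)\bigr| \geq C_0\,|w^*|$ for an explicit $C_0>0$ depending only on $(a_1,a_2,\lambda)$. Choosing $\eta$ small enough that the $O(\eta^2)$ remainder is at most half of the $\Theta((E-1)\eta)$ leading term then yields $\|\tilde w^* - w^*\|_2 \geq \Omega((E-1)\eta)\cdot\|w^*\|_2$. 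The main technical obstacle I anticipate is precisely this non-degeneracy check: one must choose the instance so that the first-order coefficient is both nonzero and scales proportionally to $\|w^*\|_2$ (rather than being artificially suppressed by symmetries of the problem); the two-device scalar construction is the cleanest way to make this transparent, and it automatically inherits strong convexity from the $\tfrac{\lambda}{2}\|w\|^2$ term and smoothness with $L=\max_k(a_k^2+\lambda)$, so Assumptions~\ref{asm:smooth}–\ref{asm:strong_cvx} are satisfied throughout.
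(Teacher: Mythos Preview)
Your proposal is correct and follows the same analytical skeleton as the paper: construct a quadratic (ridge-regression) problem so that one full-batch \texttt{FedAvg} round is an affine map $w\mapsto Mw+c$, check that $\|M\|_2<1$ for small $\eta$, identify the fixed point $\tilde w^*=(I-M)^{-1}c$, and Taylor-expand $(I-\eta H_k)^E$ and $\sum_{i=0}^{E-1}(I-\eta H_k)^i$ in $\eta$ to extract a leading bias of order $(E-1)\eta$. Your first-order formula $\tilde w^*-w^*=\tfrac{E-1}{2}\eta\,\bar H^{-1}\sum_k p_k H_k\nabla F_k(w^*)+O(\eta^2)$ is exactly the general version of the computation the paper carries out on its specific instance.

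Where you differ is in the concrete construction. The paper builds a rather elaborate $(Np+1)$-dimensional example: a tri-diagonal matrix $\A$ is split into blocks $\A_k$ whose supports overlap only on single coordinates, and only the first device has a nonzero target vector. After the same expansion, their leading term becomes $\frac{(E-1)\eta}{2}\A^{-1}(\V-\A_1\A)\w^*$ with $\V=\sum_k\A_k^2$, and the block structure lets them reduce the non-degeneracy check to $\A_1\A_2\w^*\neq 0$. You instead take a generic rank-one ridge regression and then specialize to a scalar two-device instance to verify non-degeneracy. Your route is shorter and arguably more transparent---the ``covariance between local Hessians and local residual gradients'' interpretation makes clear why heterogeneity is exactly what drives the bias---while the paper's construction buys a multi-device, multi-dimensional example with an interpretable distributed linear-regression story and an explicit constant $1/16$ in the lower bound. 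Either instance suffices to prove the theorem as stated.
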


Theorem~\ref{thm:failure} and its proof provide several implications.
First, the decay of learning rate is necessary of \texttt{FedAvg}.
On the one hand, Theorem~\ref{thm:full} shows with $E>1$ and a decaying learning rate, \texttt{FedAvg} converges to the optimum.
On the other hand, Theorem~\ref{thm:failure} shows that with $E>1$ and any fixed learning rate, \texttt{FedAvg} does not converges to the optimum.

Second, \texttt{FedAvg} behaves very differently from gradient descent.
Note that \texttt{FedAvg} with $E=1$ and full batch size is exactly the \texttt{Full Gradient Descent};
with a proper and fixed learning rate, its global convergence to the optimum is guaranteed~\cite{nesterov2013introductory}. 
However, Theorem~\ref{thm:failure} shows that \texttt{FedAvg} with $E>1$ and full batch size cannot possibly converge to the optimum.
This conclusion doesn't contradict with Theorem 1 in~\cite{khaled2019first}, which, when translated into our case, asserts that $\tilde{\w}^*$ will locate in the neighborhood of $\w^*$ with a constant learning rate.

Third, Theorem~\ref{thm:failure} shows the requirement of learning rate decay is not an artifact of our analysis; instead, it is inherently required by \texttt{FedAvg}. An explanation is that constant learning rates, combined with $E$ steps of possibly-biased local updates, form a sub-optimal update scheme, but a diminishing learning rate can gradually eliminate such bias.

The efficiency of \texttt{FedAvg} principally results from the fact that it performs several update steps on a local model before communicating with other workers, which saves communication. 
Diminishing step sizes often hinders fast convergence, which may counteract the benefit of performing multiple local updates.
Theorem~\ref{thm:failure} motivates more efficient alternatives to \texttt{FedAvg}.


\section{Related Work}
\label{sec:related}

Federated learning (FL) was first proposed by \cite{mcmahan2016communication} for collaboratively learning a model without collecting users' data.
The research work on FL is focused on the communication-efficiency \cite{konevcny2016federated,mcmahan2016communication,sahu2018convergence,smith2017federated} and data privacy \cite{bagdasaryan2018backdoor,bonawitz2017practical,geyer2017differentially,hitaj2017deep,melisexploiting}.
This work is focused on the communication-efficiency issue.

\texttt{FedAvg}, a synchronous distributed optimization algorithm, was proposed by \cite{mcmahan2016communication} as an effective heuristic. \citet{sattler2019robust,zhao2018federated} studied the non-iid setting, however, they do not have convergence rate. 
A contemporaneous and independent work \cite{xie2019asynchronous} analyzed asynchronous \texttt{FedAvg};
while they did not require iid data, their bound do not guarantee convergence to saddle point or local minimum.
\citet{sahu2018convergence} proposed a federated optimization framework called \texttt{FedProx} to deal with statistical heterogeneity and provided the convergence guarantees in non-iid setting. \texttt{FedProx} adds a proximal term to each local objective. When these proximal terms vanish, \texttt{FedProx} is reduced to \texttt{FedAvg}. However, their convergence theory requires the proximal terms always exist and hence fails to cover \texttt{FedAvg}.

When data are iid distributed and all devices are active, \texttt{FedAvg} is referred to as \texttt{LocalSGD}. 
Due to the two assumptions, theoretical analysis of \texttt{LocalSGD} is easier than \texttt{FedAvg}. 
\citet{stich2018local} demonstrated \texttt{LocalSGD} provably achieves the same linear speedup with strictly less communication for strongly-convex stochastic optimization. \citet{coppola2015iterative,zhou2017convergence,wang2018cooperative} studied \texttt{LocalSGD} in the non-convex setting and established convergence results. \citet{yu2018parallel,wang2019adaptive} recently analyzed \texttt{LocalSGD} for non-convex functions in heterogeneous settings. 
In particular, \citet{yu2018parallel} demonstrated \texttt{LocalSGD} also achieves $\mathcal{O}(1/\sqrt{NT})$ convergence (i.e., linear speedup) for non-convex optimization. 
\citet{lin2018don} empirically shows variants of \texttt{LocalSGD} increase training efficiency and improve the generalization performance of large batch sizes while reducing communication.
For \texttt{LocalGD} on non-iid data (as opposed to \texttt{LocalSGD}), the best result is by the contemporaneous work (but slightly later than our first version)~\citep{khaled2019first}.
\citet{khaled2019first} used fixed learning rate $\eta$ and showed $\OM (\frac{1}{T})$ convergence to a point $\OM (\eta^2 E^2)$ away from the optimal.
In fact, the suboptimality is due to their fixed learning rate.
As we show in Theorem~\ref{thm:failure}, using a fixed learning rate $\eta$ throughout, the solution by \texttt{LocalGD} is at least $\Omega ((E-1) \eta)$ away from the optimal.



If the data are iid, distributed optimization can be efficiently solved by the second-order algorithms \cite{mahajan2018efficient,reddi2016aide,shamir2014communication,wang2018giant,zhang2015disco}
and the one-shot methods \cite{lee2017communication,lin2017distributed,wang2019sharper,zhang2013communication,zhang2015divide}.
The primal-dual algorithms \cite{hong2018gradient,smith2016cocoa,smith2017federated} are more generally applicable and more relevant to FL.

\section{Numerical Experiments}
\label{sec:experiment}

\paragraph{Models and datasets}
We examine our theoretical results on a logistic regression with weight decay $\lambda=1e-4$. This is a stochastic convex optimization problem. We distribute MNIST dataset~\citep{lecun1998gradient} among $N=100$ workers in a non-iid fashion  such that each device contains samples of only two digits. We further obtain two datasets: \texttt{mnist balanced} and \texttt{mnist unbalanced}. 
The former is balanced such that the number of samples in each device is the same, while
the latter is highly unbalanced with the number of samples among devices following a power law.
To manipulate heterogeneity more precisly, we synthesize unbalanced datasets following the setup in~\citet{sahu2018convergence} and denote it as \texttt{synthetic($\alpha$, $\beta$)} where $\alpha$ controls how much local models differ from each other and $\beta$ controls how much the local data at each device differs from that of other devices. We obtain two datasets: \texttt{synthetic(0,0)} and \texttt{\texttt{synthetic(1,1)}}. Details can be found in Appendix~\ref{sec:appen:exp}.

\begin{figure}[th]
	\centering
	\hspace{-.2in} 
	\subfloat[The impact of $E$]{
		\includegraphics[width=0.26\textwidth] {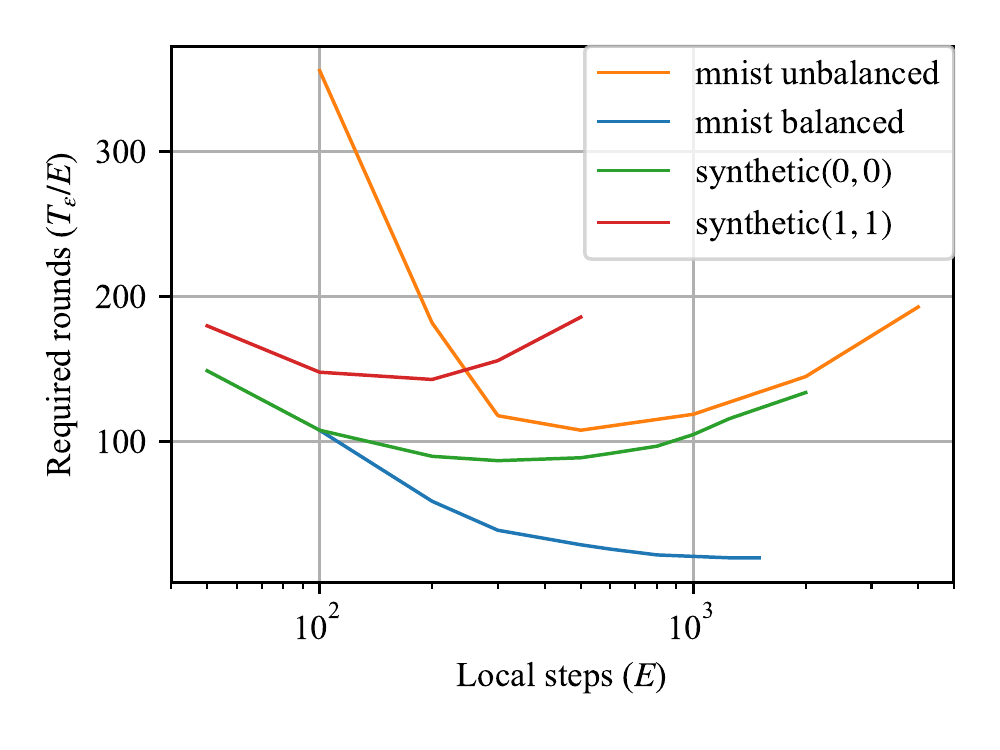}
		\label{fig:test_figure_a}
	}
	\hspace{-.15in} 
	\subfloat[The impact of $K$]{
		\includegraphics[width=0.26\textwidth] {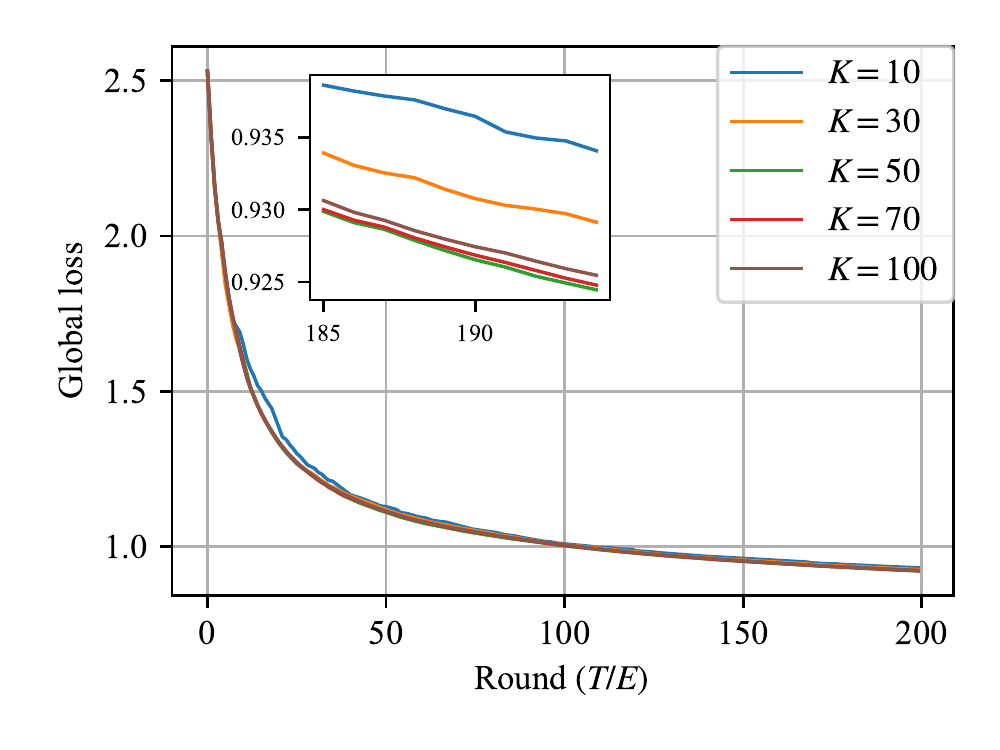}
		\label{fig:test_figure_b}
	}
	\hspace{-.15in} 
	\subfloat[Different schemes]{
		\includegraphics[width=0.26\textwidth] {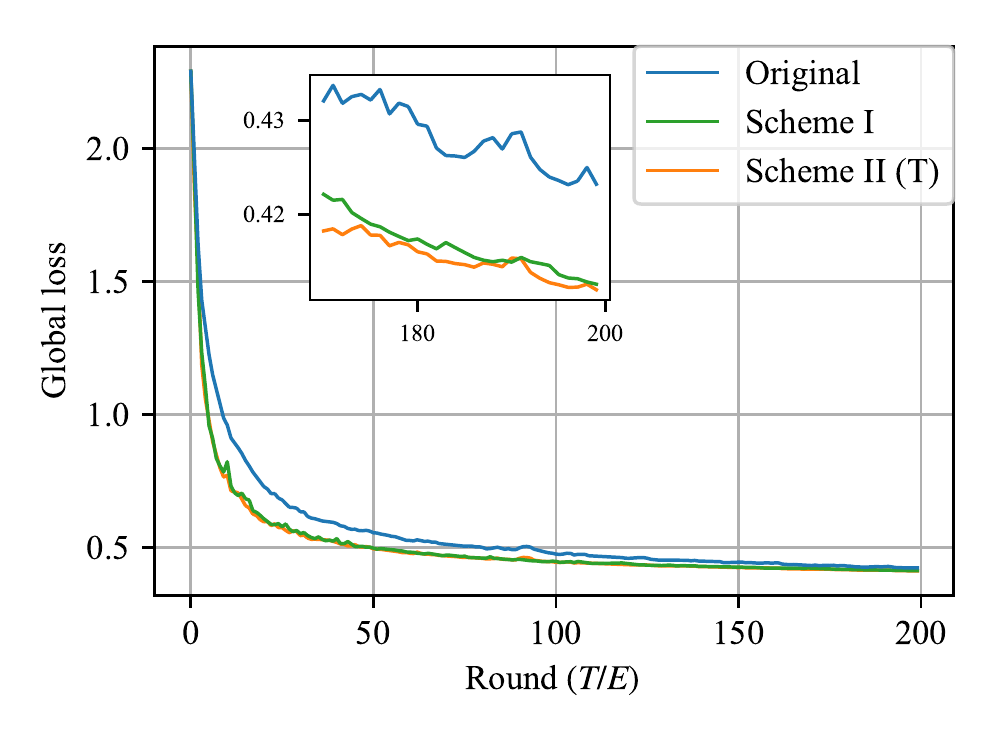}
		\label{fig:test_figure_c}
	}
	\hspace{-.15in}
	\subfloat[Different schemes]{
		\includegraphics[width=0.26\textwidth] {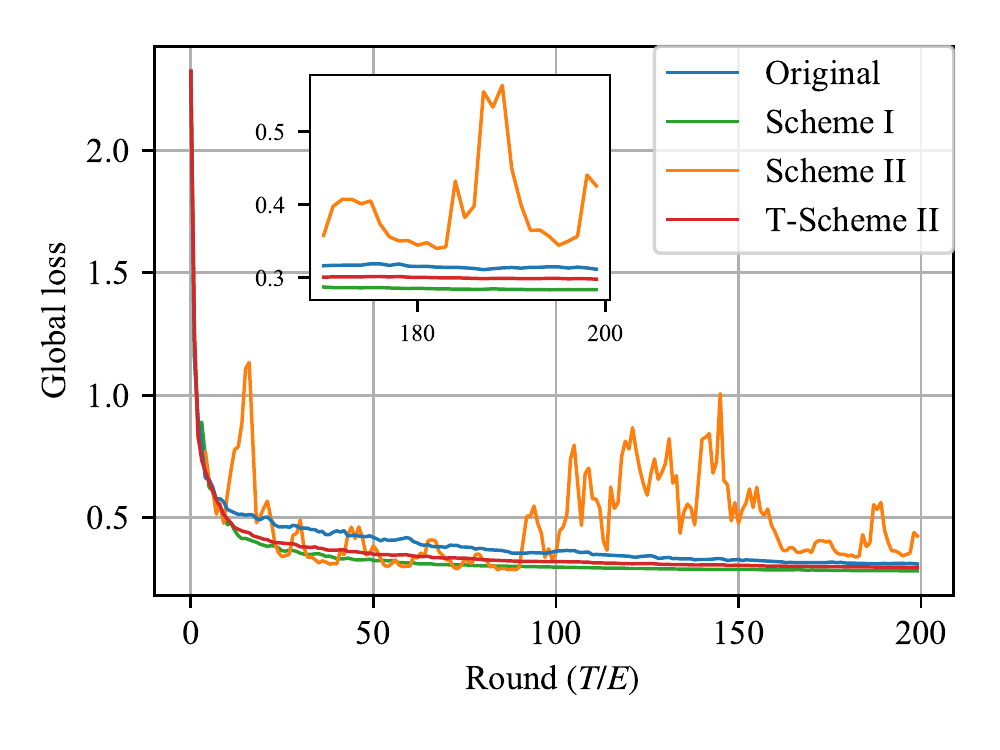}
		\label{fig:test_figure_d}
	}
	\hspace{-.2in}\\
	\caption{(a) To obtain an $\epsilon$ accuracy, the required rounds first decrease and then increase when we increase the local steps $E$. (b) In \texttt{Synthetic(0,0)} dataset, decreasing the numbers of active devices each round has little effect on the convergence process. (c) In \texttt{mnist balanced} dataset, Scheme I slightly outperforms Scheme II. They both performs better than the original scheme. Here transformed Scheme II coincides with Scheme II due to the balanced data. (d) In \texttt{mnist unbalanced} dataset, Scheme I performs better than Scheme II and the original scheme. Scheme II suffers from instability while transformed Scheme II has a lower convergence rate.}
	\label{fig:test_figure}
\end{figure}

\paragraph{Experiment settings}

For all experiments, we initialize all runnings with $\w_0 = 0$. In each round, all selected devices run $E$ steps of SGD in parallel. We decay the learning rate at the end of each round by the following scheme $\eta_t = \frac{\eta_0}{1+t}$, where $\eta_0$ is chosen from the set $\{1, 0.1, 0.01\}$. We evaluate the averaged model after each global synchronization on the corresponding global objective. For fair comparison, we control all randomness in experiments so that the set of activated devices is the same across all different algorithms on one configuration.




\paragraph{Impact of $E$}
We expect that $T_\epsilon/E$, the required communication round to achieve curtain accuracy, is a hyperbolic finction of $E$ as equ~(\ref{eq:communication_round}) indicates. Intuitively, a small $E$ means a heavy communication burden, while a large $E$ means a low convergence rate. One needs to trade off between communication efficiency and fast convergence. We empirically observe this phenomenon on unbalanced datasets in Figure~\ref{fig:test_figure_a}. The reason why the phenomenon does not appear in \texttt{mnist balanced} dataset requires future investigations.

\paragraph{Impact of $K$} 
Our theory suggests that a larger $K$ may slightly accelerate convergence since $T_\epsilon/E $ contains a term $\mathcal{O}\left( \frac{EG^2}{K}\right)$. Figure~\ref{fig:test_figure_b} shows that $K$ has limited influence on the convergence of \texttt{FedAvg} in \texttt{synthetic(0,0)} dataset. It reveals that the curve of a large enough $K$ is slightly better. We observe similar phenomenon among the other three datasets and attach additional results in Appendix~\ref{sec:appen:exp}. This justifies that when the variance resulting sampling is not too large (i.e., $B \gg C$), one can use a small number of devices without severely harming the training process, which also removes the need to sample as many devices as possible in convex federated optimization.

\paragraph{Effect of sampling and averaging schemes.} 

We compare four schemes among four federated datasets. Since the original scheme involves a history term and may be conservative, we carefully set the initial learning rate for it. Figure~\ref{fig:test_figure_c} indicates that when data are balanced, Schemes I and II achieve nearly the same performance, both better than the original scheme. Figure~\ref{fig:test_figure_d} shows that when the data are unbalanced, i.e., $p_k$'s are uneven, Scheme I performs the best. Scheme II suffers from some instability in this case. This is not contradictory with our theory since we don't guarantee the convergence of Scheme II when data is unbalanced. As expected, transformed Scheme II performs stably at the price of a lower convergence rate. Compared to Scheme I, the original scheme converges at a slower speed even if its learning rate is fine tuned. All the results show the crucial position of appropriate sampling and averaging schemes for \texttt{FedAvg}.

\section{Conclusion}

Federated learning becomes increasingly popular in machine learning and optimization communities. In this paper we have studied the convergence of \texttt{FedAvg}, a heuristic algorithm suitable for federated setting. We have investigated the influence of sampling and averaging schemes. We have provided theoretical guarantees for two schemes and empirically demonstrated their performances. Our work sheds light on theoretical understanding of \texttt{FedAvg} and provides insights for algorithm design in realistic applications. Though our analyses are constrained in convex problems, we hope our insights and proof techniques can inspire future work.

\section*{Acknowledgements}
Li, Yang and Zhang have been supported by the National Natural Science Foundation of China (No. 11771002 and 61572017), Beijing Natural Science Foundation (Z190001),  the Key Project of MOST of China (No. 2018AAA0101000),  and Beijing Academy of Artificial Intelligence (BAAI).


\bibliographystyle{plainnat}
\bibliography{refer}

\begin{thebibliography}{48}
\providecommand{\natexlab}[1]{#1}
\providecommand{\url}[1]{\texttt{#1}}
\expandafter\ifx\csname urlstyle\endcsname\relax
  \providecommand{\doi}[1]{doi: #1}\else
  \providecommand{\doi}{doi: \begingroup \urlstyle{rm}\Url}\fi

\bibitem[Bagdasaryan et~al.(2018)Bagdasaryan, Veit, Hua, Estrin, and
  Shmatikov]{bagdasaryan2018backdoor}
Eugene Bagdasaryan, Andreas Veit, Yiqing Hua, Deborah Estrin, and Vitaly
  Shmatikov.
\newblock How to backdoor federated learning.
\newblock \emph{arXiv preprint arXiv:1807.00459}, 2018.

\bibitem[Bonawitz et~al.(2017)Bonawitz, Ivanov, Kreuter, Marcedone, McMahan,
  Patel, Ramage, Segal, and Seth]{bonawitz2017practical}
Keith Bonawitz, Vladimir Ivanov, Ben Kreuter, Antonio Marcedone, H~Brendan
  McMahan, Sarvar Patel, Daniel Ramage, Aaron Segal, and Karn Seth.
\newblock Practical secure aggregation for privacy-preserving machine learning.
\newblock In \emph{Proceedings of the 2017 ACM SIGSAC Conference on Computer
  and Communications Security}, 2017.

\bibitem[Coppola(2015)]{coppola2015iterative}
Gregory~Francis Coppola.
\newblock Iterative parameter mixing for distributed large-margin training of
  structured predictors for natural language processing.
\newblock \emph{PhD thesis}, 2015.

\bibitem[Geyer et~al.(2017)Geyer, Klein, Nabi, and SE]{geyer2017differentially}
Robin~C Geyer, Tassilo Klein, Moin Nabi, and SAP SE.
\newblock Differentially private federated learning: A client level
  perspective.
\newblock \emph{arXiv preprint arXiv:1712.07557}, 2017.

\bibitem[Hitaj et~al.(2017)Hitaj, Ateniese, and P{\'e}rez-Cruz]{hitaj2017deep}
Briland Hitaj, Giuseppe Ateniese, and Fernando P{\'e}rez-Cruz.
\newblock Deep models under the {GAN}: information leakage from collaborative
  deep learning.
\newblock In \emph{ACM SIGSAC Conference on Computer and Communications
  Security}, 2017.

\bibitem[Hong et~al.(2018)Hong, Razaviyayn, and Lee]{hong2018gradient}
Mingyi Hong, Meisam Razaviyayn, and Jason Lee.
\newblock Gradient primal-dual algorithm converges to second-order stationary
  solution for nonconvex distributed optimization over networks.
\newblock In \emph{International Conference on Machine Learning (ICML)}, 2018.

\bibitem[Jakovetic(2013)]{jakovetic2013distributed}
Du{s}an Jakovetic.
\newblock Distributed optimization: algorithms and convergence rates.
\newblock \emph{PhD, Carnegie Mellon University, Pittsburgh PA, USA}, 2013.

\bibitem[Khaled et~al.(2019)Khaled, Mishchenko, and
  Richt{\'a}rik]{khaled2019first}
Ahmed Khaled, Konstantin Mishchenko, and Peter Richt{\'a}rik.
\newblock First analysis of local gd on heterogeneous data.
\newblock \emph{arXiv preprint arXiv:1909.04715}, 2019.

\bibitem[Kone{\v{c}}n{\`y}(2017)]{konevcny2017stochastic}
Jakub Kone{\v{c}}n{\`y}.
\newblock Stochastic, distributed and federated optimization for machine
  learning.
\newblock \emph{arXiv preprint arXiv:1707.01155}, 2017.

\bibitem[Kone{\v{c}}n{\`y} et~al.(2015)Kone{\v{c}}n{\`y}, McMahan, and
  Ramage]{konevcny2015federated}
Jakub Kone{\v{c}}n{\`y}, Brendan McMahan, and Daniel Ramage.
\newblock Federated optimization: distributed optimization beyond the
  datacenter.
\newblock \emph{arXiv preprint arXiv:1511.03575}, 2015.

\bibitem[Kone{\v{c}}n{\`y} et~al.(2016)Kone{\v{c}}n{\`y}, McMahan, Yu,
  Richt{\'a}rik, Suresh, and Bacon]{konevcny2016federated}
Jakub Kone{\v{c}}n{\`y}, H~Brendan McMahan, Felix~X Yu, Peter Richt{\'a}rik,
  Ananda~Theertha Suresh, and Dave Bacon.
\newblock Federated learning: strategies for improving communication
  efficiency.
\newblock \emph{arXiv preprint arXiv:1610.05492}, 2016.

\bibitem[LeCun et~al.(1998)LeCun, Bottou, Bengio, Haffner,
  et~al.]{lecun1998gradient}
Yann LeCun, L{\'e}on Bottou, Yoshua Bengio, Patrick Haffner, et~al.
\newblock Gradient-based learning applied to document recognition.
\newblock \emph{Proceedings of the IEEE}, 86\penalty0 (11):\penalty0
  2278--2324, 1998.

\bibitem[Lee et~al.(2017)Lee, Liu, Sun, and Taylor]{lee2017communication}
Jason~D Lee, Qiang Liu, Yuekai Sun, and Jonathan~E Taylor.
\newblock Communication-efficient sparse regression.
\newblock \emph{The Journal of Machine Learning Research}, 18\penalty0
  (1):\penalty0 115--144, 2017.

\bibitem[Li et~al.(2014{\natexlab{a}})Li, Andersen, Park, Smola, Ahmed,
  Josifovski, Long, Shekita, and Su]{li2014scaling}
Mu~Li, David~G Andersen, Jun~Woo Park, Alexander~J Smola, Amr Ahmed, Vanja
  Josifovski, James Long, Eugene~J Shekita, and Bor-Yiing Su.
\newblock Scaling distributed machine learning with the parameter server.
\newblock In \emph{11th $\{$USENIX$\}$ Symposium on Operating Systems Design
  and Implementation ($\{$OSDI$\}$ 14)}, pages 583--598, 2014{\natexlab{a}}.

\bibitem[Li et~al.(2014{\natexlab{b}})Li, Andersen, Smola, and
  Yu]{li2014communication}
Mu~Li, David~G Andersen, Alexander~J Smola, and Kai Yu.
\newblock Communication efficient distributed machine learning with the
  parameter server.
\newblock In \emph{Advances in Neural Information Processing Systems (NIPS)},
  2014{\natexlab{b}}.

\bibitem[Li et~al.(2019)Li, Sahu, Talwalkar, and Smith]{li2019federated}
Tian Li, Anit~Kumar Sahu, Ameet Talwalkar, and Virginia Smith.
\newblock Federated learning: Challenges, methods, and future directions.
\newblock \emph{arXiv preprint arXiv:1908.07873}, 2019.

\bibitem[Lin et~al.(2017)Lin, Guo, and Zhou]{lin2017distributed}
Shao-Bo Lin, Xin Guo, and Ding-Xuan Zhou.
\newblock Distributed learning with regularized least squares.
\newblock \emph{Journal of Machine Learning Research}, 18\penalty0
  (1):\penalty0 3202--3232, 2017.

\bibitem[Lin et~al.(2018)Lin, Stich, and Jaggi]{lin2018don}
Tao Lin, Sebastian~U Stich, and Martin Jaggi.
\newblock Don't use large mini-batches, use local sgd.
\newblock \emph{arXiv preprint arXiv:1808.07217}, 2018.

\bibitem[Mahajan et~al.(2018)Mahajan, Agrawal, Keerthi, Sellamanickam, and
  Bottou]{mahajan2018efficient}
Dhruv Mahajan, Nikunj Agrawal, S~Sathiya Keerthi, Sundararajan Sellamanickam,
  and L{\'e}on Bottou.
\newblock An efficient distributed learning algorithm based on effective local
  functional approximations.
\newblock \emph{Journal of Machine Learning Research}, 19\penalty0
  (1):\penalty0 2942--2978, 2018.

\bibitem[McMahan et~al.(2017)McMahan, Moore, Ramage, Hampson, and
  y~Arcas]{mcmahan2016communication}
Brendan McMahan, Eider Moore, Daniel Ramage, Seth Hampson, and Blaise~Aguera
  y~Arcas.
\newblock {Communication-Efficient Learning of Deep Networks from Decentralized
  Data}.
\newblock In \emph{International Conference on Artificial Intelligence and
  Statistics (AISTATS)}, 2017.

\bibitem[Melis et~al.(2019)Melis, Song, De~Cristofaro, and
  Shmatikov]{melisexploiting}
Luca Melis, Congzheng Song, Emiliano De~Cristofaro, and Vitaly Shmatikov.
\newblock Exploiting unintended feature leakage in collaborative learning.
\newblock In \emph{IEEE Symposium on Security \& Privacy (S\&P)}. IEEE, 2019.

\bibitem[Meng et~al.(2016)Meng, Bradley, Yavuz, Sparks, Venkataraman, Liu,
  Freeman, Tsai, Amde, and Owen]{meng2016mllib}
Xiangrui Meng, Joseph Bradley, Burak Yavuz, Evan Sparks, Shivaram Venkataraman,
  Davies Liu, Jeremy Freeman, DB~Tsai, Manish Amde, and Sean Owen.
\newblock {MLlib}: machine learning in {A}pache {S}park.
\newblock \emph{Journal of Machine Learning Research}, 17\penalty0
  (34):\penalty0 1--7, 2016.

\bibitem[Nesterov(2013)]{nesterov2013introductory}
Yurii Nesterov.
\newblock \emph{Introductory lectures on convex optimization: a basic course},
  volume~87.
\newblock Springer Science \& Business Media, 2013.

\bibitem[Reddi et~al.(2016)Reddi, Kone{{c}}n{\`y}, Richt{\'a}rik,
  P{\'o}cz{\'o}s, and Smola]{reddi2016aide}
Sashank~J Reddi, Jakub Kone{{c}}n{\`y}, Peter Richt{\'a}rik, Barnab{\'a}s
  P{\'o}cz{\'o}s, and Alex Smola.
\newblock {AIDE}: {f}ast and communication efficient distributed optimization.
\newblock \emph{arXiv preprint arXiv:1608.06879}, 2016.

\bibitem[Richt{\'a}rik and Tak{\'a}{{c}}(2016)]{richtarik2016distributed}
Peter Richt{\'a}rik and Martin Tak{\'a}{{c}}.
\newblock Distributed coordinate descent method for learning with big data.
\newblock \emph{Journal of Machine Learning Research}, 17\penalty0
  (1):\penalty0 2657--2681, 2016.

\bibitem[Sahu et~al.(2018)Sahu, Li, Sanjabi, Zaheer, Talwalkar, and
  Smith]{sahu2018convergence}
Anit~Kumar Sahu, Tian Li, Maziar Sanjabi, Manzil Zaheer, Ameet Talwalkar, and
  Virginia Smith.
\newblock Federated optimization for heterogeneous networks.
\newblock \emph{arXiv preprint arXiv:1812.06127}, 2018.

\bibitem[Sattler et~al.(2019)Sattler, Wiedemann, M{\"u}ller, and
  Samek]{sattler2019robust}
Felix Sattler, Simon Wiedemann, Klaus-Robert M{\"u}ller, and Wojciech Samek.
\newblock Robust and communication-efficient federated learning from non-iid
  data.
\newblock \emph{arXiv preprint arXiv:1903.02891}, 2019.

\bibitem[Shamir et~al.(2014)Shamir, Srebro, and Zhang]{shamir2014communication}
Ohad Shamir, Nati Srebro, and Tong Zhang.
\newblock Communication-efficient distributed optimization using an approximate
  {N}ewton-type method.
\newblock In \emph{International conference on machine learning (ICML)}, 2014.

\bibitem[Shokri and Shmatikov(2015)]{shokri2015privacy}
Reza Shokri and Vitaly Shmatikov.
\newblock Privacy-preserving deep learning.
\newblock In \emph{Proceedings of the 22nd ACM SIGSAC Conference on Computer
  and Communications Security}, 2015.

\bibitem[{S}husen {W}ang et~al.(2018){S}husen {W}ang, {F}arbod~{R}oosta
  {K}horasani, {P}eng {X}u, and {M}ichael {W}.~{M}ahoney]{wang2018giant}
{S}husen {W}ang, {F}arbod~{R}oosta {K}horasani, {P}eng {X}u, and {M}ichael
  {W}.~{M}ahoney.
\newblock {GIANT}: Globally improved approximate newton method for distributed
  optimization.
\newblock In \emph{{C}onference on {N}eural {I}nformation {P}rocessing
  {S}ystems ({{NeurIPS}})}, 2018.

\bibitem[Smith et~al.(2016)Smith, Forte, Ma, Takac, Jordan, and
  Jaggi]{smith2016cocoa}
Virginia Smith, Simone Forte, Chenxin Ma, Martin Takac, Michael~I Jordan, and
  Martin Jaggi.
\newblock {CoCoA}: A general framework for communication-efficient distributed
  optimization.
\newblock \emph{arXiv preprint arXiv:1611.02189}, 2016.

\bibitem[Smith et~al.(2017)Smith, Chiang, Sanjabi, and
  Talwalkar]{smith2017federated}
Virginia Smith, Chao-Kai Chiang, Maziar Sanjabi, and Ameet~S Talwalkar.
\newblock Federated multi-task learning.
\newblock In \emph{Advances in Neural Information Processing Systems (NIPS)},
  2017.

\bibitem[Stich(2018)]{stich2018local}
Sebastian~U Stich.
\newblock Local {SGD} converges fast and communicates little.
\newblock \emph{arXiv preprint arXiv:1805.09767}, 2018.

\bibitem[Stich et~al.(2018)Stich, Cordonnier, and Jaggi]{stich2018sparsified}
Sebastian~U Stich, Jean-Baptiste Cordonnier, and Martin Jaggi.
\newblock Sparsified {SGD} with memory.
\newblock In \emph{Advances in Neural Information Processing Systems (NIPS)},
  pages 4447--4458, 2018.

\bibitem[Wang and Joshi(2018)]{wang2018cooperative}
Jianyu Wang and Gauri Joshi.
\newblock Cooperative {SGD}: A unified framework for the design and analysis of
  communication-efficient {SGD} algorithms.
\newblock \emph{arXiv preprint arXiv:1808.07576}, 2018.

\bibitem[Wang et~al.(2019)Wang, Tuor, Salonidis, Leung, Makaya, He, and
  Chan]{wang2019adaptive}
Shiqiang Wang, Tiffany Tuor, Theodoros Salonidis, Kin~K Leung, Christian
  Makaya, Ting He, and Kevin Chan.
\newblock Adaptive federated learning in resource constrained edge computing
  systems.
\newblock \emph{IEEE Journal on Selected Areas in Communications}, 2019.

\bibitem[Wang(2019)]{wang2019sharper}
Shusen Wang.
\newblock A sharper generalization bound for divide-and-conquer ridge
  regression.
\newblock In \emph{The Thirty-Third AAAI Conference on Artificial Intelligence
  (AAAI)}, 2019.

\bibitem[Woodworth et~al.(2018)Woodworth, Wang, Smith, McMahan, and
  Srebro]{woodworth2018graph}
Blake~E Woodworth, Jialei Wang, Adam Smith, Brendan McMahan, and Nati Srebro.
\newblock Graph oracle models, lower bounds, and gaps for parallel stochastic
  optimization.
\newblock In \emph{Advances in Neural Information Processing Systems
  (NeurIPS)}, 2018.

\bibitem[Xie et~al.(2019)Xie, Koyejo, and Gupta]{xie2019asynchronous}
Cong Xie, Sanmi Koyejo, and Indranil Gupta.
\newblock Asynchronous federated optimization.
\newblock \emph{arXiv preprint arXiv:1903.03934}, 2019.

\bibitem[Yu et~al.(2019)Yu, Yang, and Zhu]{yu2018parallel}
Hao Yu, Sen Yang, and Shenghuo Zhu.
\newblock Parallel restarted sgd with faster convergence and less
  communication: Demystifying why model averaging works for deep learning.
\newblock In \emph{AAAI Conference on Artificial Intelligence}, 2019.

\bibitem[Zhang et~al.(2015{\natexlab{a}})Zhang, Choromanska, and
  LeCun]{zhang2015deep}
Sixin Zhang, Anna~E Choromanska, and Yann LeCun.
\newblock Deep learning with elastic averaging {SGD}.
\newblock In \emph{Advances in Neural Information Processing Systems (NIPS)},
  2015{\natexlab{a}}.

\bibitem[Zhang and Lin(2015)]{zhang2015disco}
Yuchen Zhang and Xiao Lin.
\newblock {DiSCO}: distributed optimization for self-concordant empirical loss.
\newblock In \emph{International Conference on Machine Learning (ICML)}, 2015.

\bibitem[Zhang et~al.(2013)Zhang, Duchi, and
  Wainwright]{zhang2013communication}
Yuchen Zhang, John~C. Duchi, and Martin~J. Wainwright.
\newblock Communication-efficient algorithms for statistical optimization.
\newblock \emph{Journal of Machine Learning Research}, 14:\penalty0 3321--3363,
  2013.

\bibitem[Zhang et~al.(2015{\natexlab{b}})Zhang, Duchi, and
  Wainwright]{zhang2015divide}
Yuchen Zhang, John Duchi, and Martin Wainwright.
\newblock Divide and conquer kernel ridge regression: a distributed algorithm
  with minimax optimal rates.
\newblock \emph{Journal of Machine Learning Research}, 16:\penalty0 3299--3340,
  2015{\natexlab{b}}.

\bibitem[Zhao et~al.(2018)Zhao, Li, Lai, Suda, Civin, and
  Chandra]{zhao2018federated}
Yue Zhao, Meng Li, Liangzhen Lai, Naveen Suda, Damon Civin, and Vikas Chandra.
\newblock Federated learning with non-iid data.
\newblock \emph{arXiv preprint arXiv:1806.00582}, 2018.

\bibitem[Zheng et~al.(2016)Zheng, Xia, Xu, and Zhang]{zheng2016general}
Shun Zheng, Fen Xia, Wei Xu, and Tong Zhang.
\newblock A general distributed dual coordinate optimization framework for
  regularized loss minimization.
\newblock \emph{arXiv preprint arXiv:1604.03763}, 2016.

\bibitem[Zhou and Cong(2017)]{zhou2017convergence}
Fan Zhou and Guojing Cong.
\newblock On the convergence properties of a k-step averaging stochastic
  gradient descent algorithm for nonconvex optimization.
\newblock \emph{arXiv preprint arXiv:1708.01012}, 2017.

\bibitem[Zhuo et~al.(2019)Zhuo, Feng, Xu, Yang, and Lin]{zhuo2019federated}
Hankz~Hankui Zhuo, Wenfeng Feng, Qian Xu, Qiang Yang, and Yufeng Lin.
\newblock Federated reinforcement learning.
\newblock \emph{arXiv preprint arXiv:1901.08277}, 2019.

\end{thebibliography}

\appendix

\section{Proof of Theorem~\ref{thm:full}}
\label{appen:full}
We analyze \texttt{FedAvg} in the setting of full device participation in this section.

\subsection{Additional Notation}
Let $\w_t^k$ be the model parameter maintained in the $k$-th device at the $t$-th step. Let $\IM_E$ be the set of global synchronization steps, i.e., $\IM_E = \{nE \  | \ n=1,2,\cdots\}$. If $t+1 \in \IM_E$, i.e., the time step to communication, \texttt{FedAvg} activates all devices. Then the update of \texttt{FedAvg} with partial devices active can be described as
\begin{align}
& \v_{t+1}^k = \w_t^k - \eta_t \nabla F_k(\w_t^k, \xi_t^k), 
\label{eq:update_pre}\\
& \w_{t+1}^k =\left\{\begin{array}{ll}
\v_{t+1}^k           & \text{ if } t+1 \notin \IM_E, \\
\sum_{k=1}^N p_k \v_{t+1}^k   & \text{ if } t+1  \in \IM_E.
\label{eq:ave_pre}
\end{array}\right. 
\end{align}
Here, an additional variable $\v_{t+1}^k$ is introduced to represent the immediate result of one step SGD update from $\w_t^k$. We interpret $\w_{t+1}^k$ as the parameter obtained after communication steps (if possible).

In our analysis, we define two virtual sequences $\av_t = \myave \v_t^k$ and $\aw_t = \myave \w_t^k$. This is motivated by \citep{stich2018local}. $\av_{t+1}$ results from an single step of SGD from $\aw_{t}$. 
When $t+1 \notin \IM_E$, both are inaccessible. When $t+1 \in \IM_E$, we can only fetch $\aw_{t+1}$. For convenience, we define $\overline{\g}_t = \myave \nabla F_k(\w_t^k)$ and $\g_t = \myave \nabla F_k(\w_t^k, \xi_t^k)$.  
Therefore, $\overline{\v}_{t+1} = \overline{\w}_t - \eta_t \g_t$ and $\EB \g_t = \overline{\g}_t$.

\subsection{Key Lemmas}
To convey our proof clearly, it would be necessary to prove certain useful lemmas. We defer the proof of these lemmas to latter section and focus on proving the main theorem.

\begin{lem}[Results of one step SGD]
\label{lem:conv_main} Assume Assumption~\ref{asm:smooth} and~\ref{asm:strong_cvx}. If $\eta_t\leq \frac{1}{4L}$, we have
\[  
\EB \left\|\overline{\v}_{t+1}-\w^{\star}\right\|^{2} 
\leq (1-\eta_t \mu) \EB \left\|\overline{\w}_{t}-\w^{\star}\right\|^{2} 
+ \eta_{t}^{2} \EB \left\|\mathbf{g}_{t}-\overline{\mathbf{g}}_{t}\right\|^{2} + 6L \eta_t^2 \Gamma 
+ 2 \EB \myave \left\|\overline{\w}_{t}-\w_{k}^{t}\right\|^{2} 
\]
where $\Gamma = F^* - \myave F_k^{\star} \ge 0$.
\end{lem}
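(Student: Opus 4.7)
}

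The plan is to follow a standard ``one-step SGD'' analysis, carefully handling the non-iid nature (captured by $\Gamma$) and the local drift $\|\overline{\w}_t - \w_t^k\|$. I will start from the identity $\overline{\v}_{t+1} - \w^\star = (\overline{\w}_t - \w^\star) - \eta_t \g_t$ and expand the squared norm. Since $\EB\g_t = \overline{\g}_t$ (conditional on the $\w_t^k$'s), taking expectation and splitting via $\EB\|\g_t\|^2 = \|\overline{\g}_t\|^2 + \EB\|\g_t-\overline{\g}_t\|^2$ gives
\[
\EB\|\overline{\v}_{t+1}-\w^\star\|^2
\;=\;
\|\overline{\w}_t-\w^\star\|^2 \;-\; 2\eta_t\langle \overline{\w}_t-\w^\star,\overline{\g}_t\rangle \;+\;\eta_t^2\|\overline{\g}_t\|^2 \;+\;\eta_t^2 \EB\|\g_t-\overline{\g}_t\|^2 .
\]
The first and last terms are already in the desired form; the remaining $R_1 := -2\eta_t\langle \overline{\w}_t-\w^\star,\overline{\g}_t\rangle+\eta_t^2\|\overline{\g}_t\|^2$ must produce the $(-\eta_t\mu)\|\overline{\w}_t-\w^\star\|^2$, $6L\eta_t^2\Gamma$, and $2\myave\|\overline{\w}_t-\w_t^k\|^2$ pieces.

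Next I would write $\overline{\g}_t=\myave \nabla F_k(\w_t^k)$ and decompose $\overline{\w}_t-\w^\star=(\overline{\w}_t-\w_t^k)+(\w_t^k-\w^\star)$ inside the inner product. The $\langle \overline{\w}_t-\w_t^k,\nabla F_k(\w_t^k)\rangle$ piece is controlled by an AM-GM inequality of the form $2\langle a,b\rangle\le \alpha\|a\|^2+\alpha^{-1}\|b\|^2$ (choosing $\alpha$ so that the coefficient of $\|\overline{\w}_t-\w_t^k\|^2$ matches the ``$2$'' in the lemma), followed by the smoothness bound $\|\nabla F_k(\w_t^k)\|^2\le 2L(F_k(\w_t^k)-F_k^\star)$ (this is the standard consequence of Assumption~\ref{asm:smooth} together with $F_k(\w_t^k)\ge F_k^\star$). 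The $\langle \w_t^k-\w^\star,\nabla F_k(\w_t^k)\rangle$ piece is handled by $\mu$-strong convexity, yielding $-2\eta_t(F_k(\w_t^k)-F_k(\w^\star)) - \eta_t\mu\|\w_t^k-\w^\star\|^2$. The quadratic term $\eta_t^2\|\overline{\g}_t\|^2$ is bounded by convexity of $\|\cdot\|^2$ and the same smoothness inequality.

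The crucial algebraic step is then to combine all the $F_k$-function terms. After collecting coefficients they take the form
\[
\bigl(cL\eta_t^2-2\eta_t\bigr)\myave\bigl(F_k(\w_t^k)-F_k^\star\bigr)+2\eta_t\bigl(F(\w^\star)-\myave F_k^\star\bigr),
\]
where the second term equals $2\eta_t\Gamma$ by the definition of $\Gamma$, and $c$ is a small absolute constant fixed by my choice of $\alpha$. The assumption $\eta_t\le\tfrac{1}{4L}$ makes the leading coefficient $cL\eta_t^2-2\eta_t$ negative, and I would then use the lower bound $\myave(F_k(\w_t^k)-F_k^\star)\ge \Gamma$, which follows from Jensen (convexity gives $\myave F_k(\w_t^k)\ge F(\overline{\w}_t)\ge F^\star$) plus $F^\star -\myave F_k^\star=\Gamma$. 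This collapses all the function-value terms into a multiple of $\eta_t^2\Gamma$ (bounded by $6L\eta_t^2\Gamma$ with room to spare). Finally I convert $-\eta_t\mu\myave\|\w_t^k-\w^\star\|^2\le -\eta_t\mu\|\overline{\w}_t-\w^\star\|^2$ by convexity of $\|\cdot\|^2$ and assemble the stated bound.

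The main obstacle will be bookkeeping: several coefficients on $\myave(F_k(\w_t^k)-F_k^\star)$ accumulate from three different sources (the AM-GM split, the expansion of $\|\overline{\g}_t\|^2$, and the strong-convexity inner product), so I need to choose $\alpha$ and use $\eta_t\le\tfrac{1}{4L}$ carefully to ensure the net coefficient is non-positive, enabling the use of the $\Gamma$ lower bound without leaving stray positive function-gap terms. Once that is done the rest is straightforward substitution.
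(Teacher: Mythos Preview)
Your plan is correct and essentially identical to the paper's up through the point where you collect the function-value terms into
\[
(cL\eta_t^2 - 2\eta_t)\,\myave\bigl(F_k(\w_t^k)-F_k^\star\bigr) + 2\eta_t\,\Gamma .
\]
The gap is in the next step. You claim the lower bound $\myave(F_k(\w_t^k)-F_k^\star)\ge \Gamma$, justified by ``Jensen: convexity gives $\myave F_k(\w_t^k)\ge F(\overline{\w}_t)\ge F^\star$.'' This inequality is false. Jensen for each convex $F_k$ gives $F_k(\overline{\w}_t)\le \sum_j p_j F_k(\w_t^j)$, and summing in $k$ yields $F(\overline{\w}_t)\le \sum_{k,j}p_kp_jF_k(\w_t^j)$, which is \emph{not} the diagonal sum $\myave F_k(\w_t^k)$. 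A concrete counterexample: $N=2$, $p_1=p_2=\tfrac12$, $F_1(w)=(w-1)^2$, $F_2(w)=(w+1)^2$, $\w_t^1=1$, $\w_t^2=-1$. Then $\overline{\w}_t=0$, $\myave F_k(\w_t^k)=0$, but $F(\overline{\w}_t)=1$ and $\Gamma=1$, so $\myave(F_k(\w_t^k)-F_k^\star)=0<\Gamma$. Without your claimed bound you are left with the trivial $\myave(F_k(\w_t^k)-F_k^\star)\ge 0$, which only gives a $2\eta_t\Gamma$ term---order $\eta_t$, not the $O(\eta_t^2)$ the lemma requires.

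The paper fixes exactly this step by a different lower bound. It writes $\myave(F_k(\w_t^k)-F^\star)=\myave(F_k(\w_t^k)-F_k(\overline{\w}_t))+(F(\overline{\w}_t)-F^\star)$, applies convexity of $F_k$ (now legitimately, at the single point $\overline{\w}_t$) to get $F_k(\w_t^k)-F_k(\overline{\w}_t)\ge \langle \nabla F_k(\overline{\w}_t),\w_t^k-\overline{\w}_t\rangle$, then an AM--GM split and the smoothness bound $\|\nabla F_k(\overline{\w}_t)\|^2\le 2L(F_k(\overline{\w}_t)-F_k^\star)$. This produces a \emph{second} copy of the drift term $\myave\|\w_t^k-\overline{\w}_t\|^2$ (which is why the lemma has the coefficient $2$) and leaves a term $\gamma_t(\eta_tL-1)(F(\overline{\w}_t)-F^\star)$ that is nonpositive because $F(\overline{\w}_t)\ge F^\star$---the inequality that \emph{is} true. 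The remaining $\Gamma$ terms then combine to at most $6L\eta_t^2\Gamma$. So the missing idea in your plan is to pass through $F_k(\overline{\w}_t)$ rather than trying to compare $\myave F_k(\w_t^k)$ directly to $F^\star$.
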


\begin{lem}[Bounding the variance]
Assume Assumption~\ref{asm:sgd_var} holds. It follows that
	\label{lem:conv_variance}
	\[\EB \left\|\mathbf{g}_{t}-\overline{\mathbf{g}}_{t}\right\|^{2}  \leq \sum_{k=1}^N p_k^2 \sigma_k^2.\]
\end{lem}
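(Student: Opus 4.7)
The plan is to exploit two facts: (i) the stochastic gradient noise on each device has zero mean (a direct consequence of Assumption~\ref{asm:sgd_var}, since $\xi_t^k$ is drawn uniformly from the local data so $\EB [\nabla F_k(\w_t^k,\xi_t^k) \mid \w_t^k] = \nabla F_k(\w_t^k)$), and (ii) the noises across different devices are mutually independent, because each worker samples $\xi_t^k$ from its own local dataset independently of the others.

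Concretely, I would first rewrite the difference as a single sum, using $\g_t = \sum_{k=1}^N p_k \nabla F_k(\w_t^k,\xi_t^k)$ and $\overline{\g}_t = \sum_{k=1}^N p_k \nabla F_k(\w_t^k)$:
\[
\g_t - \overline{\g}_t \;=\; \sum_{k=1}^N p_k \bigl[\nabla F_k(\w_t^k,\xi_t^k) - \nabla F_k(\w_t^k)\bigr].
\]
Then I would expand $\EB \|\g_t - \overline{\g}_t\|^2$ as a double sum over $k$ and $\ell$. For the cross terms $k \neq \ell$, independence of $\xi_t^k$ and $\xi_t^\ell$ combined with the zero-mean property would make each inner product vanish in expectation. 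Only the diagonal ($k=\ell$) terms survive, giving
\[
\EB \|\g_t - \overline{\g}_t\|^2 \;=\; \sum_{k=1}^N p_k^2 \,\EB \|\nabla F_k(\w_t^k,\xi_t^k) - \nabla F_k(\w_t^k)\|^2.
\]
Applying Assumption~\ref{asm:sgd_var} to each summand yields the claimed bound $\sum_{k=1}^N p_k^2 \sigma_k^2$.

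There is no real obstacle here; the lemma is essentially a standard variance decomposition for sums of independent zero-mean random vectors. The one subtlety worth being explicit about is the conditional/unconditional distinction: one should condition on the iterates $\{\w_t^k\}_{k=1}^N$ first (so that the $\w_t^k$ are treated as fixed and the only randomness is in the $\xi_t^k$), apply the independence argument to obtain the bound pointwise in $\{\w_t^k\}$, and then take outer expectation. Since the bound $\sum_k p_k^2 \sigma_k^2$ does not depend on $\{\w_t^k\}$, the tower property gives the unconditional result.
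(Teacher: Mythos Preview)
Your proposal is correct and follows exactly the same approach as the paper: write $\g_t-\overline{\g}_t$ as $\sum_k p_k[\nabla F_k(\w_t^k,\xi_t^k)-\nabla F_k(\w_t^k)]$, use independence and zero mean across devices to drop the cross terms, and bound each diagonal term by $p_k^2\sigma_k^2$ via Assumption~\ref{asm:sgd_var}. If anything, you are more explicit than the paper about the conditioning argument and why the cross terms vanish.
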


\begin{lem}[Bounding the divergence of $\{\w_t^k\}$]
	\label{lem:conv_diversity}
	Assume Assumption~\ref{asm:sgd_norm}, that $\eta_t$ is non-increasing and $\eta_{t} \le 2 \eta_{t+E}$ for all $t\geq 0$. It follows that
	\[\EB \left[ \myave \left\|\overline{\w}_{t}-\w_{k}^{t}\right\|^{2} \right]
	\: \leq \: 4 \eta_t^2 (E-1)^2 G^2.\]
\end{lem}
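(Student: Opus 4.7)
The plan is to reduce the problem to analyzing at most $E-1$ consecutive local SGD steps starting from the last synchronization point, and then to bound the resulting aggregate deviation using bounded stochastic gradient norms and the learning-rate comparability assumption.

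First, fix $t$ and let $t_0 \le t$ be the largest multiple of $E$ with $t_0 \le t$, i.e. the most recent global averaging step. By construction, $t - t_0 \le E - 1$, and because all devices were synchronized at $t_0$, we have $\w_{t_0}^k = \overline{\w}_{t_0}$ for every $k$. Iterating the local update rule in~\eqref{eq:update_pre} (no averaging occurs strictly between $t_0$ and $t$) yields
\begin{equation*}
\w_t^k - \overline{\w}_{t_0}
\: = \: - \sum_{s=t_0}^{t-1} \eta_s \nabla F_k(\w_s^k, \xi_s^k).
\end{equation*}

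Next, I would exploit the fact that the weighted mean $\overline{\w}_t = \sum_k p_k \w_t^k$ is the minimizer of $y \mapsto \sum_k p_k \| \w_t^k - y \|^2$; consequently, for the particular choice $y = \overline{\w}_{t_0}$,
\begin{equation*}
\sum_{k=1}^N p_k \| \w_t^k - \overline{\w}_t \|^2
\: \le \: \sum_{k=1}^N p_k \| \w_t^k - \overline{\w}_{t_0} \|^2.
\end{equation*}
Applying Cauchy--Schwarz (or Jensen's inequality) to each term with the $(t - t_0)$-fold sum, and then taking expectations and invoking Assumption~\ref{asm:sgd_norm}, I would obtain
\begin{equation*}
\EB \sum_{k=1}^N p_k \| \w_t^k - \overline{\w}_t \|^2
\: \le \: (t - t_0) \sum_{s=t_0}^{t-1} \eta_s^2 \, G^2.
\end{equation*}

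Finally, I would translate the bound from $\eta_s$ (for $s \in [t_0, t-1]$) into $\eta_t$ using the two hypotheses on the schedule. Monotonicity gives $\eta_s \le \eta_{t_0}$ for $s \ge t_0$. Since $t \le t_0 + E - 1 < t_0 + E$, monotonicity again gives $\eta_{t_0 + E} \le \eta_t$, and combined with the assumption $\eta_{t_0} \le 2 \eta_{t_0 + E}$ this yields $\eta_s \le \eta_{t_0} \le 2 \eta_t$. Plugging into the previous display, and using $t - t_0 \le E - 1$, leads to the desired $(t-t_0) \cdot (t-t_0) \cdot (2\eta_t)^2 \, G^2 \le 4 \eta_t^2 (E-1)^2 G^2$.

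The argument is essentially routine; the only mildly delicate step is the learning-rate comparability chain $\eta_s \le \eta_{t_0} \le 2\eta_{t_0 + E} \le 2 \eta_t$, which is precisely why the lemma requires the hypothesis $\eta_t \le 2 \eta_{t+E}$ in addition to non-increasingness. Without it, the $\eta_s$ at the start of a local round could be arbitrarily larger than $\eta_t$, and the final $(E-1)^2 \eta_t^2$ scaling would fail.
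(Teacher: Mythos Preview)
Your proposal is correct and follows essentially the same route as the paper's own proof: identify the last synchronization time $t_0$ with $t-t_0\le E-1$, drop from $\overline{\w}_t$ to $\overline{\w}_{t_0}$ via the variance-minimizing property of the mean (the paper phrases this as $\EB\|X-\EB X\|^2\le \EB\|X\|^2$), expand the telescoped local updates with Jensen, invoke Assumption~\ref{asm:sgd_norm}, and finish with the chain $\eta_s\le\eta_{t_0}\le 2\eta_{t_0+E}\le 2\eta_t$. Your write-up is in fact slightly more explicit than the paper's about why the last inequality in that chain holds.
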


\subsection{Completing the Proof of Theorem~\ref{thm:full}}
\begin{proof}
It is clear that no matter whether $t+1 \in \IM_E$ or $t+1 \notin \IM_E$, we always have $\aw_{t+1}= \av_{t+1}$. Let $\Delta_t=\EB \left\|\aw_{t}-\w^{\star}\right\|^{2}$. From Lemma~\ref{lem:conv_main}, Lemma~\ref{lem:conv_variance} and Lemma~\ref{lem:conv_diversity}, it follows that
\begin{equation}
\label{eq:conv_full_final}
\Delta_{t+1} \le (1-\eta_t \mu) \Delta_t + \eta_t^2 B
\end{equation}
where
\[ B = \sum_{k=1}^N p_k^2 \sigma_k^2 +  6L \Gamma + 8 (E-1)^2G^2. \]

 For a diminishing stepsize, $\eta_t = \frac{\beta}{t + \gamma}$ for some $\beta > \frac{1}{\mu}$ and $\gamma > 0$ such that $\eta_1 \le \min\{\frac{1}{\mu}, \frac{1}{4L}\}=\frac{1}{4L}$ and $\eta_{t} \le 2 \eta_{t+E}$. We will prove $\Delta_t \le \frac{v}{\gamma + t}$ where $v = \max\left\{ \frac{\beta^2 B}{\beta \mu -1}, (\gamma+1)\Delta_1 \right\}$.

We prove it by induction. Firstly, the definition of $v$ ensures that it holds for $t = 1$. Assume the conclusion holds for some $t$, it follows that
\begin{align*}
\Delta_{t+1} 
&\le (1-\eta_t\mu )\Delta_t + \eta_t^2 B \\
&\le \left(1- \frac{\beta\mu}{t + \gamma}\right)\frac{v}{t + \gamma} + \frac{\beta^2B}{(t + \gamma)^2}\\
&=\frac{t + \gamma -1}{(t + \gamma)^2} v  + \left[ \frac{\beta^2B}{(t + \gamma)^2} - \frac{\beta\mu-1}{(t+\gamma)^2}v \right] \\
& \le \frac{v}{t + \gamma +1}.
\end{align*}

Then by the $L$-smoothness of $F(\cdot)$,
\[  \EB [F(\aw_t)] - F^* \le \frac{L}{2} \Delta_t  \le \frac{L}{2} \frac{v}{\gamma + t}.\]
Specifically, if we choose $\beta = \frac{2}{\mu}, \gamma = \max\{8 \frac{L}{\mu}, E\}-1$ and denote $\kappa = \frac{L}{\mu}$, then  $\eta_t = \frac{2}{\mu} \frac{1}{\gamma + t}$.
One can verify that the choice of $\eta_{t}$ satisfies $\eta_{t} \le 2\eta_{t+E}$ for $t \ge 1$.
Then, we have
\[
v = \max\left\{ \frac{\beta^2 B}{\beta \mu -1}, (\gamma+1)\Delta_1 \right\} 
\le \frac{\beta^2 B}{\beta \mu -1} +  (\gamma+1)\Delta_1 \le 
\frac{4B}{\mu^2} + (\gamma+1) \Delta_1,
\]
and 
\[  \EB [F(\aw_t)] - F^*
 \le   \frac{L}{2} \frac{v}{\gamma + t} 
 \le   \frac{\kappa}{\gamma + t} \left( \frac{2B}{\mu} +   \frac{\mu (\gamma+1)}{2} \Delta_1 \right). \]
\end{proof}

\subsection{Deferred proofs of key lemmas}
\begin{proof}[Proof of Lemma~\ref{lem:conv_main}.]
	Notice that $\av_{t+1} = \aw_t - \eta_t \vg_t$, then
	\begin{align}
	\label{eq:partial_main}
	\left\|\overline{\v}_{t+1}-\w^{\star}\right\|^{2} \nonumber 
	&=\left\|\overline{\w}_{t}-\eta_{t} \mathbf{g}_{t}-\w^{\star}-\eta_{t} \overline{\mathbf{g}}_{t}+\eta_{t} \overline{\mathbf{g}}_{t}\right\|^{2} \nonumber \\ 
	&=\underbrace{\left\|\overline{\w}_{t}-\w^{\star}-\eta_{t} \overline{\mathbf{g}}_{t}\right\|^{2}}_{A_1}+
	\underbrace{2\eta_{t}\left\langle\overline{\w}_{t}-\w^{\star}-\eta_{t} \overline{\mathbf{g}}_{t}, \overline{\mathbf{g}}_{t}-\mathbf{g}_{t}\right\rangle}_{A_2} +
	\eta_{t}^{2}\left\|\mathbf{g}_{t}-\overline{\mathbf{g}}_{t}\right\|^{2}
	\end{align}
	Note that $\EB A_2 = 0$. We next focus on bounding $A_1$. Again we split $A_1$ into three terms:
	\begin{equation}
	\label{eq:partial_proof_main_term}
	\left\|\overline{\w}_{t}-\w^{\star}-\eta_{t} \overline{\mathbf{g}}_{t}\right\|^{2} =\left\|\overline{\w}_{t}-\w^{\star}\right\|^{2}
	\underbrace{-2 \eta_{t}\left\langle\overline{\w}_{t}-\w^{\star}, \overline{\mathbf{g}}_{t}\right\rangle}_{B_1}
	+  \underbrace{\eta_{t}^{2}\left\|\overline{\mathbf{g}}_{t}\right\|^{2}}_{B_2}
	\end{equation}
	
	From the the $L$-smoothness of $F_k(\cdot)$, it follows that
	\begin{equation}
	\label{eq:partial_proof_L}
	  \left\|\nabla F_k\left(\w_{t}^{k}\right)\right\|^{2} \leq 2 L\left(F_k\left(\w_{t}^{k}\right)-F_k^{\star}\right).
	\end{equation}
	
	By the convexity of $\left\| \cdot \right \|^2$ and~\eqref{eq:partial_proof_L}, we have
	\[
	B_2 = \eta_{t}^{2}\left\|\overline{\mathbf{g}}_{t}\right\|^{2} 
	\leq \eta_{t}^{2}\myave \left\|\nabla F_k\left(\w_{t}^{k}\right)\right\|^{2}
	\leq 2L\eta_t^2 \myave \left( F_k(\w_t^k) - F_k^*\right).
	\]
	
	Note that 
	\begin{align}
	\label{eq:partial_B1}
	  B_1 
	  &= -2 \eta_{t}\left\langle\overline{\w}_{t}-\w^{\star}, \overline{\mathbf{g}}_{t}\right\rangle 
	  = -2\eta_t \myave \left\langle\overline{\w}_{t}-\w^{\star}, \nabla F_k(\w_t^k)\right\rangle \nonumber \\
	  &=-2\eta_t \myave \left\langle\overline{\w}_{t}-\w_t^k, \nabla F_k(\w_t^k)\right\rangle
	  -2\eta_t \myave \left\langle\w_t^k-\w^{\star}, \nabla F_k(\w_t^k)\right\rangle.
	\end{align}
	
	By Cauchy-Schwarz inequality and AM-GM inequality, we have
	\begin{equation}
	\label{eq:partial_B11}
	-2 \left\langle\overline{\w}_{t}-\w_t^{k}, \nabla F_k \left(\w_{t}^{k}\right)\right\rangle \leq \frac{1}{\eta_t} \left\|\overline{\w}_{t}-\w_{t}^{k}\right\|^{2}
	+ \eta_t \left\|\nabla F_k\left(\w_{t}^{k}\right)\right\|^{2}.	
	\end{equation}
	
	By the $\mu$-strong convexity of $F_k(\cdot)$, we have
	\begin{equation}
	\label{eq:partial_B12}
	-\left\langle\w_{t}^{k}-\w^{\star}, \nabla F_k\left(\w_{t}^{k}\right)\right\rangle \leq -\left(F_k\left(\w_{t}^{k}\right)-F_k(\w^*)\right)-\frac{\mu}{2}\left\|\w_{t}^{k}-\w^{\star}\right\|^{2}.
	\end{equation}
	
	By combining ~\eqref{eq:partial_proof_main_term},~\eqref{eq:partial_B1},~\eqref{eq:partial_B11} and~\eqref{eq:partial_B12}, it follows that
	\begin{align*}
	 A_1 
	 =  \left\|\overline{\w}_{t}-\w^{\star}-\eta_{t} \overline{\mathbf{g}}_{t}\right\|^{2} 
	 &\leq \left\|\overline{\w}_{t}-\w^{\star}\right\|^{2} 
	 + 2L\eta_t^2 \myave \left( F_k(\w_t^k) - F_k^*\right) \\
	 & \quad +\eta_t \myave \left(\frac{1}{\eta_t} \left\|\overline{\w}_{t}-\w_{k}^{t}\right\|^{2}+\eta_t \left\|\nabla F_k\left(\w_{t}^{k}\right)\right\|^{2}\right)  \\ 
	&  \quad -2\eta_t \myave \left( F_k\left(\w_{t}^{k}\right)-F_k(\w^*) + \frac{\mu}{2}\left\|\w_{t}^{k}-\w^{\star}\right\|^{2} \right) \\
	& = (1-\mu\eta_t)\left\|\overline{\w}_{t}-\w^{\star}\right\|^{2} 
	+ \myave \left\|\overline{\w}_{t}-\w_{k}^{t}\right\|^{2} \\
	& \quad + \underbrace{4L\eta_t^2 \myave \left( F_k(\w_t^k) - F_k^*\right) 
	 - 2\eta_t \myave \left(F_k\left(\w_{t}^{k}\right)-F_k(\w^*)\right)}_{C}
	\end{align*}
	where we use~\eqref{eq:partial_proof_L} again.

	We next aim to bound $C$. We define $\gamma_t = 2\eta_t(1-2L\eta_t)$. Since $\eta_t \leq \frac{1}{4L}$, $\eta_t \le \gamma_t \leq 2\eta_t$. Then we split $C$ into two terms:
	\begin{align*}
	 C &= - 2\eta_t(1-2L\eta_t) \myave \left( F_k(\w_t^k) - F_k^* \right) 
	 + 2\eta_t \myave \left(F_k(\w^*) - F_k^*\right) \\
	 &=-\gamma_t \myave \left( F_k(\w_t^k) - F^* \right)  
	  + (2\eta_t-\gamma_t)\myave  \left( F^*-F_k^*\right)  \\
	 &=\underbrace{-\gamma_t \myave \left( F_k(\w_t^k) - F^* \right)}_{D}
	 +4L\eta_t^2 \Gamma
	\end{align*}
	where in the last equation, we use the notation $\Gamma = \myave \left( F^*-F_k^*\right)=F^* - \myave F_k^*$.
	
	To bound $D$, we have
	\begin{align*}
	\myave \left( F_k(\w_t^k) - F^*  \right)
	 &=\myave \left( F_k(\w_t^k) - F_k(\overline{\w}_t)  \right) 
	 + \myave \left( F_k(\overline{\w}_t) - F^*  \right)\\
	 & \ge \myave\left\langle \nabla F_k(\overline{\w}_t), \overline{\w}_t^k - \overline{\w}_t \right\rangle
	 + \left( F(\aw_t) - F^* \right) \\
	 & \ge - \frac{1}{2} \myave \left[ \eta_t\left\|  \nabla F_k(\overline{\w}_t) \right\|^2 + \frac{1}{\eta_t}\left\|\w_t^k -\overline{\w}_t\right\|^2 \right]
	 + \left( F(\aw_t) - F^* \right) \\
	 & \ge - \myave \left[ \eta_tL \left(F_k(\overline{\w}_t) -F_k^*   \right)
	 + \frac{1}{2\eta_t}\left\|\w_t^k -\overline{\w}_t\right\|^2 \right]
	 + \left( F(\aw_t) - F^* \right)
	\end{align*}
	where the first inequality results from the convexity of $F_k(\cdot)$, the second inequality from AM-GM inequality and the third inequality from~\eqref{eq:partial_proof_L}.
	
	Therefore 
	\begin{align*}
	  C 
	 &= \gamma_t \myave \left[ \eta_tL \left(F_k(\overline{\w}_t) -F_k^*   \right)
	 + \frac{1}{2\eta_t}\left\|\w_t^k -\overline{\w}_t\right\|^2 \right]
	 - \gamma_t \left( F(\aw_t) - F^* \right) + 4L\eta_t^2 \Gamma\\
	 &= \gamma_t(\eta_t L -1 ) \myave \left(F_k(\aw_t) -F^* \right)
	 + \left( 4L\eta_t^2 + \gamma_t\eta_tL \right) \Gamma + \frac{\gamma_t}{2\eta_t } \myave \left\|\w_t^k -\overline{\w}_t\right\|^2 \\
	 &\le 6L\eta_t^2 \Gamma
     + \myave \left\|\w_t^k -\overline{\w}_t\right\|^2 
	\end{align*}
	where in the last inequality, we use the following facts: (1) $\eta_t L -1 \le - \frac{3}{4} \le 0$ and  $\myave \left(F_k(\aw_t) -F^* \right) = F(\aw_t) - F^*\ge 0$ (2) $\Gamma \ge 0$ and $4L\eta_t^2 + \gamma_t\eta_tL \le 6\eta_t^2L$ and (3) $\frac{\gamma_t}{2\eta_t} \le 1$.
	
	Recalling the expression of $A_1$ and plugging $C$ into it, we have
	\begin{align}
	\label{eq:partial_proof_A1}
	  A_1 
	  &= \left\|\overline{\w}_{t}-\w^{\star}-\eta_{t} \overline{\mathbf{g}}_{t}\right\|^{2} \nonumber\\
      &\le (1-\mu\eta_t)\left\|\overline{\w}_{t}-\w^{\star}\right\|^{2} 
	  + 2 \myave \left\|\overline{\w}_{t}-\w_{k}^{t}\right\|^{2} + 6\eta_t^2L \Gamma 
	\end{align}
	Using~\eqref{eq:partial_proof_A1} and taking expectation on both sides of~\eqref{eq:partial_main}, we erase the randomness from stochastic gradients, we complete the proof.
\end{proof}

\begin{proof}[Proof of Lemma~\ref{lem:conv_variance}]
    From Assumption~\ref{asm:sgd_var}, the variance of the stochastic gradients in device $k$ is bounded by $\sigma_k^2$, then
	\begin{align*}
	\EB \left\|\mathbf{g}_{t}-\overline{\mathbf{g}}_{t}\right\|^{2}  & = \EB \left\| \myave (\nabla F_k(\w_t^k,\xi_t^k) - \nabla F_k(\w_t^k))  \right\|^{2}, \\
	&= \myave^2 \EB \left\|  \nabla F_k(\w_t^k,\xi_t^k) - \nabla F_k(\w_t^k)  \right\|^{2}, \\
	& \leq \sum_{k=1}^N p_k^2 \sigma_k^2.
	\end{align*}
\end{proof}

\begin{proof}[Proof of Lemma~\ref{lem:conv_diversity}]
    Since \texttt{FedAvg} requires a communication each $E$ steps. Therefore, for any $t \ge 0$, there exists a $t_0 \leq t$, such that $t-t_0 \leq E-1$ and $\w_{t_0}^k=\aw_{t_0}$ for all $k=1,2,\cdots,N$. Also, we use the fact that $\eta_t$ is non-increasing and $\eta_{t_0} \leq 2 \eta_t$ for all $t-t_0 \leq E-1$, then
	\begin{align*}
	\EB  \myave \left\|\overline{\w}_{t}-\w_{t}^{k}\right\|^{2} &= \EB  \myave \left\|  (\w_{t}^{k} - \aw_{t_0})  - (\overline{\w}_{t} - \aw_{t_0})   \right\|^{2} \\
	&\leq \EB \myave \left\|  \w_{t}^{k} - \aw_{t_0}  \right\|^{2} \\
	& \leq \myave \EB \sum_{t = t_0}^{t-1}(E-1) \eta_t^2 \left\|  \nabla F_k(\w_t^{k},\xi_t^k)     \right\|^2 \\
	& \leq \myave \sum_{t=t_0} ^{t-1} (E-1) \eta_{t_0}^2 G^2 \\
	& \leq \myave  \eta_{t_0}^2 (E-1)^2 G^2 \\
	& \leq 4 \eta_t^2 (E-1)^2 G^2.
	\end{align*}
	Here in the first inequality, we use $\EB \| X - \EB X \|^2 \le \EB \|X\|^2$ where $X = \w_t^{k} - \overline{\w}_{t_0}$ with probability $p_k$.
	In the second inequality, we use Jensen inequality:
	\[
	\left\|  \w_{t}^{k} - \aw_{t_0}  \right\|^{2} = 
	\left\|  \sum_{t=t_0}^{t-1} \eta_{t}  \nabla F_k(\w_t^{k},\xi_t^k)    \right\|^{2}
	\le (t-t_0) \sum_{t=t_0}^{t-1} \eta_{t}^2 \left\|  \nabla F_k(\w_t^{k},\xi_t^k)   \right\|^2.
	\]
	In the third inequality, we use $\eta_{t} \le \eta_{t_0}$ for $t \ge t_0$ and $\EB \|\nabla F_k(\w_t^{k},\xi_t^k)  \|^2 \le G^2$ for $k=1,2,\cdots N$ and $t \ge 1$.
	In the last inequality, we use $\eta_{t_0} \le 2 \eta_{t_0+E} \le 2 \eta_{t}$ for $t_0 \le t \le t_0 + E$.
\end{proof}

\section{Proofs of Theorems~\ref{thm:w_replace} and~\ref{thm:wo_replace}}
\label{appen:prov}

We analyze \texttt{FedAvg} in the setting of partial device participation in this section.

\subsection{Additional Notation}
Recall that $\w_t^k$ is the model parameter maintained in the $k$-th device at the $t$-th step. $\IM_E = \{nE \  | \ n=1,2,\cdots\}$ is the set of global synchronization steps. Unlike the setting in Appendix~\ref{appen:full}, when it is the time to communicate, i.e., $t+1 \in \IM_E$, the scenario considered here is that \texttt{FedAvg} randomly activates a subset of devices according to some sampling schemes. Again, $\overline{\g}_t = \myave \nabla F_k(\w_t^k)$ and $\g_t = \myave F_k(\w_t^k, \xi_t^k)$.  Therefore, $\overline{\v}_{t+1} = \overline{\w}_t - \eta_t \g_t$ and $\EB \g_t = \overline{\g}_t$.

\paragraph{Multiset selected.} 
All sampling schemes can be divided into two groups, one with replacement and the other without replacement. For those with replacement,  it is possible for a device to be activated several times in a round of communication, even though each activation is independent with the rest. We denote by $\HM_t$ the multiset selected which allows any element to appear more than once. Note that $\HM_t$ is only well defined for $t \in \IM_E$. For convenience, we denote by $\SM_t = \HM_{N(t, E)}$ the most recent set of chosen devices where $N(t, E) = \max\{n| n \le t, n \in \IM_E\}$.

\paragraph{Updating scheme.}
Limited to realistic scenarios (for communication efficiency and low straggler effect), \text{FedAvg} first samples a random multiset $\SM_t$ of devices and then only perform updates on them. This make the analysis a little bit intricate, since $\SM_t$ varies each $E$ steps. However, we can use a thought trick to circumvent this difficulty. We assume that \texttt{FedAvg} always activates \textbf{all devices} at the beginning of each round and then uses the parameters maintained in only a few sampled devices to produce the next-round parameter. It is clear that this updating scheme is equivalent to the original. Then the update of \texttt{FedAvg} with partial devices active can be described as: for all $k \in [N]$,
\begin{align}
& \v_{t+1}^k = \w_t^k - \eta_t \nabla F_k(\w_t^k, \xi_t^k), 
\label{eq:partial_update_before}\\
& \w_{t+1}^k =\left\{\begin{array}{ll}
\v_{t+1}^k           & \text{ if } t+1 \notin \IM_E, \\
\text{samples} \ \SM_{t+1} \text{ and average }  \{ \v_{t+1}^k\}_{k \in \SM_{t+1}}  & \text{ if } t+1  \in \IM_E.
\label{eq:partial_update_after}
\end{array}\right. 
\end{align}

\paragraph{Sources of randomness.}
In our analysis, there are two sources of randomness. One results from the stochastic gradients and the other is from the random sampling of devices. 
All the analysis in Appendix~\ref{appen:full} only involve the former. To distinguish them, we use the notation $\EB_{\SM_t}(\cdot)$, when we take expectation to erase the latter type of randomness.

\subsection{Key Lemmas}
\paragraph{Two schemes.}
For full device participation, we always have $\aw_{t+1} = \av_{t+1}$. This is true when $t+1 \notin \IM_E$ for partial device participation. When $t{+}1 \in \IM_E$, we hope this relation establish in the sense of expectation. To that end, we require the sampling and averaging scheme to be unbiased in the sense that
\[  \EB_{\SM_{t+1}}  \aw_{t+1} = \av_{t+1}.  \]

We find two sampling and averaging schemes satisfying the requirement and provide convergence guarantees.
\begin{enumerate}[(I)]
	\item The server establishes $\SM_{t+1}$ by i.i.d. \textbf{with replacement} sampling an index $k \in \{1,\cdots,N\}$ with probabilities $p_1,\cdots,p_N$ for $K$ times. Hence $\SM_{t+1}$ is a multiset which allows a element to occur more than once. Then the server averages the parameters by $\w_{t+1}^k  = \frac{1}{K} \sum_{k \in \SM_{t+1}} \v_{t+1}^k$. This is first proposed in \citep{sahu2018convergence} but lacks theoretical analysis.
	\item The server samples $\SM_{t+1}$ uniformly in a \textbf{without replacement} fashion. Hence each element in $\SM_{t+1}$ only occurs once.Then server averages the parameters by $\w_{t+1}^k = \sum_{k \in \SM_{t+1}} p_{k} \frac{N}{K} \v_{t+1}^{k}$. Note that when the $p_k$'s are not all the same, one cannot ensure $\sum_{k \in \SM_{t+1}} p_{k} \frac{N}{K} = 1$.
\end{enumerate}

\paragraph{Unbiasedness and bounded variance.}
Lemma~\ref{lem:conv_v_w_e} shows the mentioned two sampling and averaging schemes are unbiased.
 In expectation, the next-round parameter (i.e., $\aw_{t+1}$) is equal to the weighted average of parameters in \textbf{all devices} after SGD updates (i.e., $\av_{t+1}$). However, the original scheme in \citep{mcmahan2016communication} (see Table~\ref{tab:conv}) does not enjoy this property. But it is very similar to Scheme II except the averaging scheme. Hence our analysis cannot cover the original scheme.
 
 Lemma~\ref{lem:conv_v_w} shows the expected difference between $\av_{t+1}$ and $\aw_{t+1}$ is bounded. $\ESt \left\| \av_{t+1} - \aw_{t+1} \right\|^2$ is actually the variance of $\aw_{t+1}$.

\begin{lem}[Unbiased sampling scheme]
	\label{lem:conv_v_w_e}
	If $t+1 \in \IM_E$, for Scheme I and Scheme II, we have
	\[
	\ESt (\aw_{t+1}) = \av_{t+1}.
	\]
\end{lem}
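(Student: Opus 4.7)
The plan is to verify the identity $\ESt[\aw_{t+1}] = \av_{t+1} = \sum_{k=1}^N p_k \v_{t+1}^k$ separately for each of the two schemes by a direct computation, using only linearity of expectation and the definition of the sampling distribution. Since at a synchronization step every device has already performed its SGD update before sampling, the vectors $\{\v_{t+1}^k\}_{k=1}^N$ are fixed with respect to the randomness $\ESt$ in $\SM_{t+1}$, so the expectation reduces to averaging indicator or count variables against the $\v_{t+1}^k$'s.

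For Scheme I, I would write $\SM_{t+1} = (i_1,\ldots,i_K)$ where the $i_j$ are i.i.d.\ draws from $[N]$ with $\Pr(i_j = k) = p_k$. The averaging rule gives $\aw_{t+1} = \tfrac{1}{K}\sum_{j=1}^K \v_{t+1}^{i_j}$. Taking $\ESt$ and using linearity together with the identical distribution of the $i_j$'s yields
\begin{equation*}
\ESt[\aw_{t+1}] \;=\; \frac{1}{K}\sum_{j=1}^K \ESt[\v_{t+1}^{i_j}] \;=\; \frac{1}{K}\sum_{j=1}^K \sum_{k=1}^N p_k \v_{t+1}^k \;=\; \sum_{k=1}^N p_k \v_{t+1}^k \;=\; \av_{t+1},
\end{equation*}
which is the claim. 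Note that the with-replacement nature of the sampling is what makes each coordinate $i_j$ individually $p$-distributed, so no inclusion-exclusion is needed.

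For Scheme II, I would rewrite $\aw_{t+1} = \frac{N}{K}\sum_{k=1}^N p_k \mathbb{I}\{k \in \SM_{t+1}\}\, \v_{t+1}^k$. Since $\SM_{t+1}$ is a uniform size-$K$ subset of $[N]$ drawn without replacement, a standard counting argument gives $\Pr(k \in \SM_{t+1}) = K/N$ for every $k$. Therefore
\begin{equation*}
\ESt[\aw_{t+1}] \;=\; \frac{N}{K}\sum_{k=1}^N p_k \Pr(k \in \SM_{t+1})\, \v_{t+1}^k \;=\; \frac{N}{K}\sum_{k=1}^N p_k \cdot \frac{K}{N}\, \v_{t+1}^k \;=\; \av_{t+1}.
\end{equation*}
Note that this calculation does not actually need the balanced-data assumption $p_k = 1/N$ from Assumption~\ref{asm:wo_replace}; that assumption is only used later to bound the variance (Lemma~\ref{lem:conv_v_w}) and does not enter this unbiasedness argument.

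There is no serious obstacle here: the whole proof is essentially bookkeeping. The only subtlety worth flagging is the contrast with the scheme of \citet{mcmahan2016communication} listed in Table~\ref{tab:conv}, which mixes $\w_t$ on the non-sampled coordinates with sampled $\w_t^k$'s and hence does \emph{not} satisfy $\ESt[\aw_{t+1}] = \av_{t+1}$; this is what the lemma is really isolating, and it explains why subsequent analysis can treat the partial-participation case as an unbiased perturbation of the full-participation dynamics studied in Appendix~\ref{appen:full}.
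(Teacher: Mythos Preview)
Your proof is correct and follows essentially the same approach as the paper's: both arguments reduce to the fact that each sampled index has a known marginal distribution ($p_k$ for Scheme~I, $1/N$ for Scheme~II), after which linearity of expectation finishes. The paper packages both cases into a single ``identical marginals'' observation (noting independence is not needed), whereas you handle Scheme~II via the inclusion probability $\Pr(k\in\SM_{t+1})=K/N$, but these are the same computation in different clothing; your remark that the balanced-data assumption is not used here is also accurate.
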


\begin{lem}[Bounding the variance of $\aw_t$]
	\label{lem:conv_v_w}
	For $t+1 \in \IM$, assume that $\eta_t$ is non-increasing and $\eta_{t} \le 2 \eta_{t+E}$ for all $t\geq 0$. We have the following results.
	\begin{enumerate}[(1)]
		\item For Scheme I, the expected difference between $\av_{t+1}$ and $\aw_{t+1}$ is bounded by
		\[
		\ESt \left\| \av_{t+1} - \aw_{t+1} \right\|^2 \leq \frac{4}{K} \eta_t^2 E^2 G^2.
		\]
		\item For Scheme II, assuming $p_1=p_2=\cdots=p_N=\frac{1}{N}$, the expected difference between $\av_{t+1}$ and $\aw_{t+1}$ is bounded by
		\[
		\ESt \left\| \av_{t+1} - \aw_{t+1} \right\|^2 \leq  \frac{N-K}{N-1} \frac{4}{K}\eta_t^2 E^2 G^2.
		\]
	\end{enumerate}
\end{lem}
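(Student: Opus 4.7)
The plan is to handle the two schemes in parallel, exploiting the fact (Lemma~\ref{lem:conv_v_w_e}) that under each scheme $\aw_{t+1}$ is a random average whose mean over the sampling randomness is $\av_{t+1}$. Let $t_0 \in \IM_E$ be the most recent synchronization step before $t+1$, so that $t+1 - t_0 \le E$ and $\w_{t_0}^k = \aw_{t_0}$ for every $k$. Unwinding the local SGD recursion gives $\v_{t+1}^k - \aw_{t_0} = -\sum_{\tau=t_0}^{t}\eta_\tau\nabla F_k(\w_\tau^k,\xi_\tau^k)$, with at most $E$ summands. Applying Jensen's inequality, Assumption~\ref{asm:sgd_norm}, and the step-size monotonicity $\eta_{t_0}\le 2\eta_t$ yields the bookkeeping estimate
\begin{equation*}
\EB\bigl\|\v_{t+1}^k - \aw_{t_0}\bigr\|^2 \;\le\; (t{+}1{-}t_0)\sum_{\tau=t_0}^{t}\eta_\tau^2\, G^2 \;\le\; E^2\eta_{t_0}^2 G^2 \;\le\; 4E^2\eta_t^2 G^2,
\end{equation*}
which is the variance-style analogue of Lemma~\ref{lem:conv_diversity} applied to the post-SGD iterates $\v_{t+1}^k$.

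For Scheme~I, I would write $\aw_{t+1} = \frac{1}{K}\sum_{\ell=1}^K \v_{t+1}^{i_\ell}$, where the indices $i_1,\ldots,i_K$ are i.i.d.\ draws with distribution $\{p_k\}$. Because the draws are independent and unbiased for $\av_{t+1}$,
\begin{equation*}
\ESt\bigl\|\aw_{t+1}-\av_{t+1}\bigr\|^2 \;=\; \frac{1}{K}\sum_{k=1}^N p_k\bigl\|\v_{t+1}^k-\av_{t+1}\bigr\|^2.
\end{equation*}
Since $\av_{t+1} = \sum_k p_k \v_{t+1}^k$ is the $p$-weighted mean of $\{\v_{t+1}^k\}$, it minimizes $\sum_k p_k\|\v_{t+1}^k - \mathbf{u}\|^2$ over $\mathbf{u}$; in particular,
\begin{equation*}
\sum_{k=1}^N p_k\bigl\|\v_{t+1}^k-\av_{t+1}\bigr\|^2 \;\le\; \sum_{k=1}^N p_k\bigl\|\v_{t+1}^k-\aw_{t_0}\bigr\|^2.
\end{equation*}
Combining with the preceding per-device bound delivers $\ESt\|\aw_{t+1}-\av_{t+1}\|^2 \le \frac{4}{K}\eta_t^2 E^2 G^2$.

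For Scheme~II, $\SM_{t+1}$ is a size-$K$ uniformly random subset of $[N]$ drawn without replacement and, with $p_k = 1/N$, we have $\aw_{t+1} = \frac{1}{K}\sum_{k \in \SM_{t+1}}\v_{t+1}^k$. The analogous variance computation now requires the simple-random-sampling (without replacement) identity. I would derive it by expanding
\begin{equation*}
\ESt\bigl\|\aw_{t+1} - \av_{t+1}\bigr\|^2 = \frac{1}{K^2}\sum_{k,l=1}^N \bigl(\PB(k,l \in \SM_{t+1}) - \tfrac{K^2}{N^2}\bigr)\bigl\langle \v_{t+1}^k - \av_{t+1},\ \v_{t+1}^l - \av_{t+1}\bigr\rangle,
\end{equation*}
using $\PB(k\in\SM_{t+1})=K/N$ and $\PB(k,l\in\SM_{t+1}) = K(K-1)/(N(N-1))$ for $k\ne l$, together with $\sum_k(\v_{t+1}^k-\av_{t+1}) = \mathbf{0}$, to collapse the double sum into $\frac{N-K}{K(N-1)}\cdot\frac{1}{N}\sum_k\|\v_{t+1}^k-\av_{t+1}\|^2$. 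The same per-device bound then yields $\frac{N-K}{N-1}\cdot\frac{4}{K}\eta_t^2 E^2 G^2$.

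The main obstacle is executing the without-replacement variance identity in Scheme~II cleanly; the with-replacement case of Scheme~I is essentially a one-line i.i.d.\ variance decomposition, but the without-replacement bookkeeping requires the correct finite-population correction factor $\tfrac{N-K}{N-1}$ to come out of the pair-inclusion probabilities. Once that identity is in hand, both bounds collapse onto the same per-device estimate that we inherit from the divergence-style calculation of Lemma~\ref{lem:conv_diversity}.
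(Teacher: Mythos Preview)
Your proposal is correct and follows essentially the same argument as the paper: the same i.i.d.\ variance decomposition for Scheme~I, the same pair-inclusion probability expansion for Scheme~II (your centered form $\PB(k,l\in\SM_{t+1})-K^2/N^2$ is equivalent to the paper's use of $\sum_i\|\cdot\|^2+\sum_{i\ne j}\langle\cdot,\cdot\rangle=0$), and the same per-device bound via Jensen, Assumption~\ref{asm:sgd_norm}, and $\eta_{t_0}\le 2\eta_t$. The only cosmetic difference is that since $t+1\in\IM_E$ the paper takes $t_0=t+1-E$ exactly, so the number of summands is $E$ rather than ``at most $E$'', but this does not affect the bound.
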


\subsection{Completing the Proof of Theorem~\ref{thm:w_replace} and \ref{thm:wo_replace}}
\begin{proof}
Note that
\begin{align*}
\left\| \aw_{t+1} - \w^* \right\|^2  
&=    \left\| \aw_{t+1} - \av_{t+1} + \av_{t+1} - \w^* \right\|^2 \\
& = \underbrace{\left\| \aw_{t+1} - \av_{t+1} \right\|^2}_{A_1} +
\underbrace{\left\| \av_{t+1} - \w^* \right\|^2}_{A_2} +
\underbrace{2\langle \aw_{t+1} - \av_{t+1} , \av_{t+1} - \w^*  \rangle}_{A_3}.
\end{align*}	
When expectation is taken over $\SM_{t+1}$, the last term ($A_3$) vanishes due to the unbiasedness of $\aw_{t+1}$. 

If $t+1 \notin \IM_E$, $A_1$ vanishes since $\aw_{t+1} = \av_{t+1}$. We use Lemma~\ref{lem:conv_v_w} to bound $A_2$. Then it follows that
\[  \EB \left\| \aw_{t+1} - \w^* \right\|^2   \leq  (1-\eta_t \mu) \EB \left\|\aw_{t}-\w^{\star}\right\|^{2}  + \eta_t^2 B. \]

If $t+1 \in \IM_E$, we additionally use Lemma~\ref{lem:conv_v_w} to bound $A_1$. Then
\begin{align}
\label{eq:conv_final}
\EB \left\| \aw_{t+1} - \w^* \right\|^2  
& = \EB \left\| \aw_{t+1} - \av_{t+1} \right\|^2  + \EB \left\| \av_{t+1} - \w^* \right\|^2 \nonumber \\
& \leq  (1-\eta_t \mu) \EB \left\|\aw_{t}-\w^{\star}\right\|^{2}  + \eta_t^2 (B+ C),
\end{align}
where $C$ is the upper bound of $\frac{1}{\eta_t^2} \ESt \left\| \av_{t+1} - \aw_{t+1} \right\|^2$ ($C$ is defined in Theorem~\ref{thm:w_replace} and~\ref{thm:wo_replace}). 

The only difference between~\eqref{eq:conv_final} and~\eqref{eq:conv_full_final} is the additional $C$.
Thus we can use the same argument there to prove the theorems here.
Specifically, for a diminishing stepsize, $\eta_t = \frac{\beta}{t + \gamma}$ for some $\beta > \frac{1}{\mu}$ and $\gamma > 0$ such that $\eta_1 \le \min\{\frac{1}{\mu}, \frac{1}{4L}\}=\frac{1}{4L}$ and $\eta_{t} \le 2 \eta_{t+E}$, we can prove $\EB \left\| \aw_{t+1} - \w^* \right\|^2   \le \frac{v}{\gamma + t}$ where $v = \max\left\{ \frac{\beta^2 (B+C)}{\beta \mu -1}, (\gamma+1)\|\w_1 -\w^*\|^2\right\}$.

Then by the strong convexity of $F(\cdot)$,
\[  \EB [F(\aw_t)] - F^* \le \frac{L}{2} \Delta_t  \le \frac{L}{2} \frac{v}{\gamma + t}.\]
Specifically, if we choose $\beta = \frac{2}{\mu},\gamma = \max\{8 \frac{L}{\mu}, E\}-1$ and denote $\kappa = \frac{L}{\mu}$, then  $\eta_t = \frac{2}{\mu} \frac{1}{\gamma + t}$ and 
\[  \EB [F(\aw_t)] - F^* \le   \frac{\kappa}{\gamma + t} \left( \frac{2(B+C)}{\mu} +  \frac{ \mu (\gamma+1)}{2} \|\w_1 -\w^*\|^2 \right). \]
\end{proof}

\subsection{Deferred proofs of key lemmas}

\begin{proof}[Proof of Lemma~\ref{lem:conv_v_w_e}]
We first give a key observation which is useful to prove the followings. Let $\{x_i\}_{i=1}^N$ denote any fixed deterministic sequence. We sample a multiset $\SM_t$ (with size $K$) by the procedure where
for each sampling time, we sample $x_k$ with probability $q_k$ for each time. Pay attention that two samples are not necessarily independent. We only require each sampling distribution is identically. Let $\SM_t = \{i_1, \cdots, i_K\} \subset [N]$ (some $i_k$'s may have the same value). Then
\[  \EB_{\SM_t}  \sumk x_k = \EB_{\SM_t}  \sum_{k=1}^K x_{i_k} = K \EB_{\SM_t}  x_{i_1} = K \sum_{k=1}^Nq_k x_k. \]
For Scheme I, $q_k = p_k$ and for Scheme II, $q_k = \frac{1}{N}$. It is easy to prove this lemma when equipped with this observation.
\end{proof}

\begin{proof}[Proof of Lemma~\ref{lem:conv_v_w}]
	We separately prove the bounded variance for two schemes. Let $\SM_{t+1} = \{i_1, \cdots, i_K \} $ denote the multiset of chosen indexes.
	
	(1) For Scheme I, $\aw_{t+1} = \frac{1}{K} \sum_{l=1}^K \v_{t+1}^{i_l}$. Taking expectation over $\SM_{t+1}$, we have
	
	\begin{equation}
	\label{eq:v_s1}
	\ESt \left\| \aw_{t+1} - \av_{t+1} \right\|^2 
= \ESt \frac{1}{K^2} \sum_{l=1}^K \left\| \v_{t+1}^{i_l} - \av_{t+1} \right\|^2 
=  \frac{1}{K} \sum_{k=1}^N p_k \left\| \v_{t+1}^{k} - \av_{t+1} \right\|^2
	\end{equation}
	where the first equality follows from $\v_{t+1}^{i_l}$ are independent and unbiased.
	
	To bound~\eqref{eq:v_s1}, we use the same argument in Lemma~\ref{lem:conv_v_w}. Since $t+1 \in \IM_E$, we know that the time $t_0 = t - E +1 \in \IM_E$ is the communication time, which implies  $\{\w_{t_0}^k\}_{k=1}^N$ is identical. Then 
	\begin{align*}
	\myave \left\| \v_{t+1}^{k} - \av_{t+1} \right\|^2 
	&=  \myave \left\| (\v_{t+1}^{k} - \aw_{t_0}) - ( \av_{t+1} - \aw_{t_0} ) \right\|^2 \\
	&\leq \myave \left\|\v_{t+1}^{k} - \aw_{t_0} \right\|^2
	\end{align*}
	where the last inequality results from $\myave (\v_{t+1}^k - \aw_{t_0}) = \av_{t+1} - \aw_{t_0}$ and $\EB \| \vx-\EB \vx \|^2 \leq \EB \| \vx \|^2$. Similarly, we have
	\begin{align*}
	\ESt \left\| \aw_{t+1} - \av_{t+1} \right\|^2 
	& \leq \frac{1}{K}   \myave \EB \left\|\v_{t+1}^k - \aw_{t_0} \right\|^2 \\
	& \leq \frac{1}{K}   \myave \EB \left\|\v_{t+1}^k - \w_{t_0}^k \right\|^2 \\
	& \leq \frac{1}{K} \myave   E \sum_{i=t_0}^t \EB \left\| \eta_i \nabla F_k(\w_i^k, \xi_i^k) \right\|^2 \\
	& \leq  \frac{1}{K} E^2 \eta_{t_0}^2 G^2 \leq \frac{4}{K} \eta_t^2 E^2 G^2
	\end{align*}
	where in the last inequality we use the fact that $\eta_t$ is non-increasing and $\eta_{t_0} \leq 2 \eta_t$.
	
	(2) For Scheme II, when assuming $p_{1}=p_{2}=\cdots=p_{N}=\frac{1}{N}$, we again have $\aw_{t+1} = \frac{1}{K} \sum_{l=1}^K \v_{t+1}^{i_l}$.
	
	\begin{align*}
	\ESt& \left\| \aw_{t+1} - \av_{t+1} \right\|^2 = \ESt \left\| \frac{1}{K} \sum_{i \in S_{t+1}} \v_{t+1}^i - \av_{t+1} \right\|^2 = \frac{1}{K^2} \ESt \left\|  \sum_{i=1}^N \IB{i \in S_t} (\v_{t+1}^i - \av_{t+1}) \right\|^2\\
	& = \frac{1}{K^2} \left[\sum_{i \in [N]}
	\PB\left(i \in S_{t+1}\right) \left\|  \v_{t+1}^i - \av_{t+1} \right\|^2 +
	\sum_{i\ne j} \PB\left( i,j \in S_{t+1}\right) \langle  \v_{t+1}^i - \av_{t+1} , \v_{t+1}^j - \av_{t+1} \rangle 
	\right]\\
	& = \frac{1}{KN}  \sum_{i=1}^N   \left\|  \v_{t+1}^i - \av_{t+1} \right\|^2 
	+ \sum_{i\ne j} \frac{K-1}{KN(N-1)}  \langle  \v_{t+1}^i - \av_{t+1} , \v_{t+1}^j - \av_{t+1}\rangle  \\
	& = \frac{1}{K(N-1)}\left(1-\frac{K}{N}\right) \sum_{i=1}^N \left\| \v_{t+1}^i - \av_{t+1} \right\|^2
	\end{align*}
	where we use the following equalities: (1) $\PB\left(i \in S_{t+1}\right) = \frac{K}{N}$ and $ \PB\left( i,j \in S_{t+1}\right) = \frac{K(K-1)}{N(N-1)}$ for all $i \ne j$ and (2)  $\sum_{i \in [N]} \left\|  \v_{t+1}^i - \av_{t+1} \right\|^2 +
	\sum_{i\ne j}  \langle  \v_{t+1}^i - \av_{t+1} , \v_{t+1}^j - \av_{t+1}\rangle = 0$.
	
	Therefore,
	\begin{align*}
	\EB \left\| \aw_{t+1} - \av_{t+1} \right\|^2 
	& =  \frac{N}{K(N-1)}\left(1-\frac{K}{N}\right) \EB\left[ \frac{1}{N} \sum_{i=1}^N \left\| \v_{t+1}^i - \av_{t+1} \right\|^2 \right] \\
	& \leq \frac{N}{K(N-1)}\left(1-\frac{K}{N}\right) \EB\left[ \frac{1}{N} \sum_{i=1}^N \left\| \v_{t+1}^i - \aw_{t_0} \right\|^2 \right] \\
	& \leq \frac{N}{K(N-1)}\left(1-\frac{K}{N}\right) 4 \eta_t^2 E^2 G^2.
	\end{align*}
	where in the last inequality we use the same argument in (1).
\end{proof}

\section{The empirical risk minimization example in Section~\ref{sec:example}}

\subsection{Detail of the example}
Let $p > 1$ be a positive integer. To avoid the trivial case, we assume $N>1$. Consider the following quadratic optimization 
\begin{equation}
\label{eq:quadratic}
\min_{\w} F(\w) \triangleq \frac{1}{2N}\left[ \w^\top \A \w - 2 \bb^\top\w \right] + \frac{\mu}{2}  \|\w\|_2^2  ,
\end{equation}
where $\A \in \RB^{(N p + 1) \times (Np + 1 )}$ , $\w, \bb \in \RB^{Np+1}$ and $\mu>0$. Specifically, let $\bb = \e_1 \triangleq (1, 0, \cdots, 0)^\top$, and $\A$ be a symmetric and tri-diagonal matrix defined by
\begin{equation}
\label{eq:A}
\left(\A\right)_{i,j} = 
\left\{
\begin{array}{ll}
2,   & i=j \in \left[1, N p+1\right],  \\
-1,  & |j-i| = 1 \ \text{and} \ i, j \in \left[1, Np+1\right], \\
0,  &  \text{otherwise},
\end{array} 
\right.
\end{equation}
where $i, j$ are row and column indices, respectively. We partition $\A$ into a sum of $N$ symmetric matrices ($\A = \sum_{k=1}^N \A_k$) and $\bb$ into $\bb = \sum_{k=1}^N \bb_k$. Specifically, we choose $\bb_1=\bb= \e_1$ and $\bb_2 = \cdots = \bb_N = 0$. To give the formulation of $\A_k$'s, we first introduce a series of sparse and symmetric matrices $\B_k \ (1 \le k \le N)$:
\begin{equation}
\label{eq:B}
\left(\B_k\right)_{i,j} = 
\left\{
\begin{array}{ll}
1,  &  i = j  \in  \{ (k-1)p +1 , kp+1   \},\\
2,   & i =j \text{ and }  (k-1)p +1 < i,j <  kp +1   ,\\
-1,  & |j-i| = 1  \ \text{and} \ i, j \in \left[ (k-1)p +1 , kp +1  \right],\\
0,  &  \text{otherwise.}
\end{array} 
\right.
\end{equation}
Now $\A_k$'s are given by $\A_1 = \B_1 + \E_{1,1}, \A_k = \B_k \ (2 \le k \le N-1)$ and $\A_N = \B_N + \E_{Np+1,Np+1}$, where $\E_{i,j}$ is the matrix where only the $(i, j)$th entry  is one and the rest are zero. 

Back to the federated setting, we distribute the $k$-th partition $(\A_k, \bb_k)$ to the $k$-th device and construct its corresponding local objective by 
\begin{equation}
\label{eq:quadratic-local}
F_k(\w) \triangleq\frac{1}{2}\left[\w^\top \A_k \w - 2 \bb_k^\top\w + \mu \|\w\|_2^2 \right].
\end{equation}

In the next subsection (Appendix~\ref{sec:A}), we show that the quadratic minimization with the global objective~(\ref{eq:quadratic}) and the local objectives~(\ref{eq:quadratic-local}) is actually a distributed linear regression. In this example, training data are not identically but balanced distributed. Moreover, data in each device are sparse in the sense that non-zero features only occur in one block. The following theorem (Theorem~\ref{thm:4failure}) shows that \texttt{FedAvg} might converge to sub-optimal points even if the learning rate is small enough. We provide a numerical illustration in Appendix~\ref{sec:exp4example} and a mathematical proof in Appendix~\ref{sec:proof4example}.

\begin{theorem}
	\label{thm:4failure} 
	In the above problem of the distributed linear regression, assume that each device computes exact gradients (which are not stochastic). With a constant and small enough learning rate $\eta$ and $E > 1$, \texttt{FedAvg} converges to a sub-optimal solution, whereas \texttt{FedAvg} with $E=1$ (i.e., gradient descent) converges to the optimum. Specifically, in a quantitative way, we have
	\[ 	\|\widetilde{\w}^* - \w^*\| \ge \frac{(E-1)\eta}{16} \big\| \A_1\A_2 \w^*\big\| \]
	where $\widetilde{\w}^*$ is the solution produced by \texttt{FedAvg} and $\w^*$ is the optimal solution.
\end{theorem}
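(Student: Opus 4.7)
The plan exploits the fact that each $F_k$ is quadratic, so \texttt{FedAvg} with a constant step size is an affine dynamical system whose fixed point can be computed in closed form. One full-batch local step on device $k$ is $\w \mapsto \H_k \w + \eta \bb_k$ with $\H_k := \I - \eta(\A_k + \mu\I)$; chaining $E$ such steps and averaging over all $N$ devices (balanced, $p_k = 1/N$) produces the one-round recursion
\begin{equation*}
\w_{t+E} \;=\; \T \w_t + \eta \g, \qquad \T := \tfrac{1}{N}\sum_{k=1}^N \H_k^E, \qquad \g := \tfrac{1}{N}\sum_{k=1}^N \Bigl(\sum_{i=0}^{E-1} \H_k^i\Bigr)\bb_k.
\end{equation*}
For $\eta$ small enough, $\T$ has spectral radius $1 - \Theta(E\eta\mu)$ and \texttt{FedAvg} therefore converges to the unique fixed point $\widetilde{\w}^* = \eta(\I - \T)^{-1}\g$. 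The problem then reduces to a purely algebraic comparison between $\widetilde{\w}^*$ and $\w^* = (\A/N + \mu\I)^{-1}\bb/N$.

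The second step is to Taylor expand in $\eta$. With $\M_k := \A_k + \mu\I$, $\bar\M := \A/N + \mu\I$, and $\overline{\M^2} := \tfrac{1}{N}\sum_k \M_k^2$, the identities
\begin{equation*}
\H_k^E = \I - E\eta\M_k + \binom{E}{2}\eta^2 \M_k^2 + \OM(\eta^3), \qquad \sum_{i=0}^{E-1}\H_k^i = E\I - \binom{E}{2}\eta\M_k + \OM(\eta^2)
\end{equation*}
make the $\OM(\eta)$ contributions to $\widetilde{\w}^*$ reproduce exactly the global-gradient equation $\bar\M\w^* = \bb/N$, which is the $E=1$ sanity check (ordinary gradient descent converges to the true optimum). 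Carrying the expansion one order further, and using $\bb_1 = \bb$ with $\bb_k = 0$ for $k \geq 2$, gives
\begin{equation*}
\widetilde{\w}^* - \w^* \;=\; \tfrac{E-1}{2}\,\eta\; \bar\M^{-1}\!\left[\overline{\M^2}\,\w^* - \tfrac{1}{N}\M_1\bb\right] + \OM(\eta^2).
\end{equation*}

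The third step is to extract a lower bound involving $\A_1\A_2\w^*$ from the bracket. Substituting $\bb/N = \bar\M\w^* = \tfrac{1}{N}\sum_j \M_j\w^*$ rewrites the bracket as $\tfrac{1}{N}\sum_k (\M_k - \M_1)\M_k\w^*$. The block-banded construction makes $\A_k\A_j = 0$ whenever $|k-j|\ge 2$, so almost every cross-product annihilates; after the $\mu$-proportional pieces are grouped into symmetric contributions that cancel in pairs, the only off-diagonal term that survives at leading order is $\A_1\A_2\w^*$ (with an analogous boundary term $\A_{N-1}\A_N\w^*$ contributing in the same direction). The diagonal perturbations $\E_{1,1}$ and $\E_{Np+1,Np+1}$ in the definitions of $\A_1$ and $\A_N$ are precisely what break the otherwise periodic structure of $\A$ so that this residual is generically nonzero.

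I expect the combinatorial bookkeeping in the third step to be the real obstacle: many terms of the form $\M_k\M_k\w^*$ and $\mu\M_k\w^*$ appear beside $\A_1\A_2\w^*$, and one must verify that no interior $\mu$-dependent cancellation can erase the surviving cross-term. Once the identity $\tfrac{1}{N}\sum_k(\M_k - \M_1)\M_k\w^* = c\,\A_1\A_2\w^* + (\text{aligned terms})$ is established for a constant $c > 0$, the conversion to the explicit $\tfrac{(E-1)\eta}{16}\|\A_1\A_2\w^*\|$ lower bound is routine: lower-bound $\|\bar\M^{-1}\vz\|$ by $\|\vz\|/\|\bar\M\|$, choose $\eta$ small enough that the $\OM(\eta^2)$ remainder is at most half the leading term, and absorb the resulting numerical factor (bounded by $2\|\bar\M\|$) into the $1/16$ coefficient that appears in the statement.
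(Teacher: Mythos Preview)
Your plan is essentially the paper's own proof: set up the one-round affine map $\w\mapsto \T\w+\eta\g$, compute the fixed point $\widetilde{\w}^*=\eta(\I-\T)^{-1}\g$, Taylor-expand $\I-\T$ and $\g$ to second order in $\eta$, and read off the leading $\tfrac{E-1}{2}\eta$ coefficient, then lower-bound via $\|\A^{-1}\x\|\ge\|\x\|/\|\A\|$ with $\|\A\|\le 4$ to produce the factor $1/16$. The only difference is that the paper first takes $\mu=0$ and handles the regularized case at the very end by the substitution $\A_k\mapsto\A_k+\mu\I$; with $\mu=0$ your bracket becomes exactly the paper's $(\V-\A_1\A)\w^*$ (where $\V=\sum_k\A_k^2$), and the ``third step'' you flag as the obstacle collapses to the single observation $\A_1\A_k=\mathbf{0}$ for $k\ge 3$, so the $\mu$-bookkeeping you anticipate never materializes.
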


\subsection{Numerical illustration on the example}
\label{sec:exp4example}
We conduct a few numerical experiments to illustrate the poor performance of \texttt{FedAvg} on the example introduced in Section~\ref{sec:example}. Here we set $N=5, p=4, \mu=2\times10^{-4}$. The annealing scheme of learning rates is given by $\eta_t =  \frac{1/5}{5+t \cdot a}$ where $a$ is the best parameter chosen from the set $\{10^{-2}, 10^{-4}, 10^{-6}\}$.
\begin{figure}[ht]
	\centering
	\hspace{-.2in} 
	\subfloat[Fixed learning rates]{
		\includegraphics[width=0.5\textwidth] {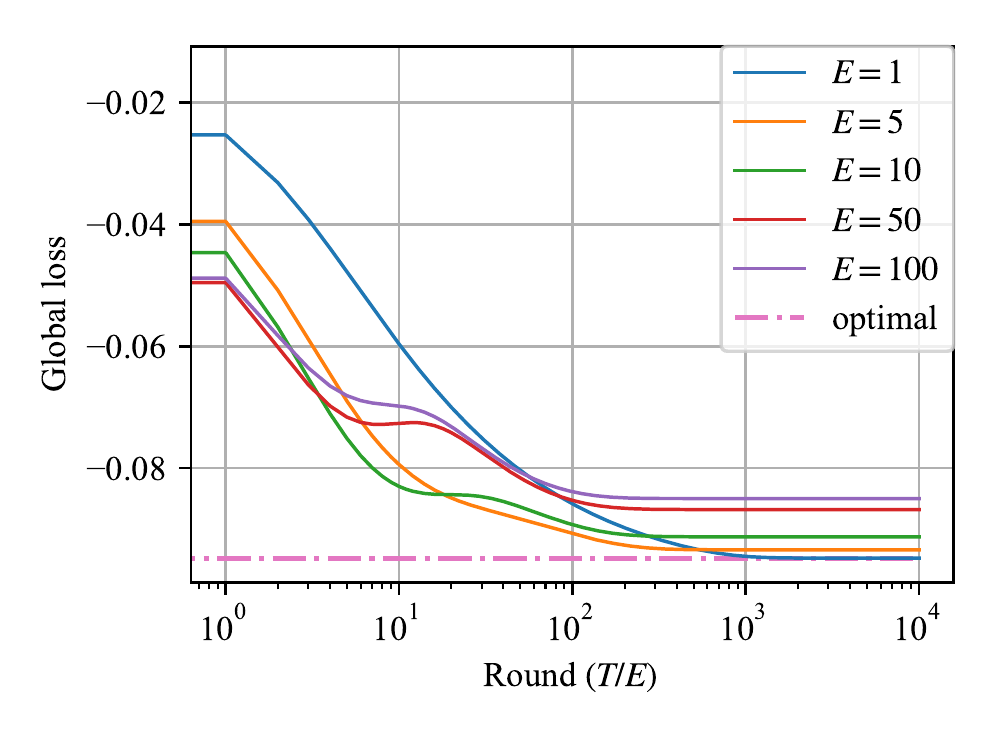}}
	\subfloat[Decayed learning rates]{
		\includegraphics[width=0.5\textwidth] {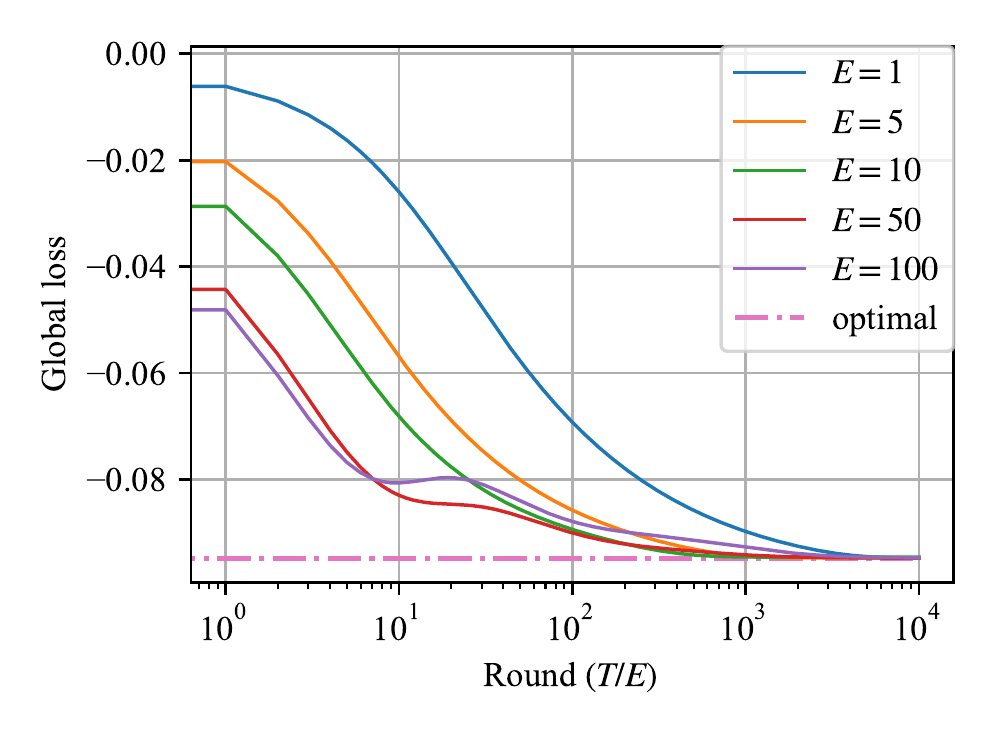}}
	\hspace{-.2in}\\
	\caption{The left figure shows that the global objective value that \texttt{FedAvg} converges to is not optimal unless $E=1$. Once we decay the learning rate, \texttt{FedAvg} can converge to the optimal even if $E > 1$.}
	\label{fig:example}
\end{figure}

\subsection{Some properties of the example}
\label{sec:A}

Recall that the symmetric matrix $\A \in \RB^{(Np+1) \times (Np+1)}$ is defined in~\eqref{eq:A}. Observe that $\A$ is invertible and for all vector $\w \in \RB^{Np+1}$,
\begin{equation}
\label{eq:smooth}
\w^\top \A \w = 2 \sum_{i=1}^{Np+1} \w_i ^2 - 2 \sum_{i=1}^{Np} \w_i \w_{i+1}   =\w_1^2 + \w_{Np+1}^2 + \sum_{i=1}^{Np}(\w_i - \w_{i+1})^2 \le 4 \|\w\|_2^2.
\end{equation}
which implies that $0 \prec \A \preceq 4 \I$.

The sparse and symmetric matrices $\B_k \ (1 \le k \le N)$ defined in~\eqref{eq:B} can be rewritten as
\begin{equation*}
\left(\B_k\right) = 
\left(\begin{array}{ccc}
\mathbf{0}_{(k-1)p \times (k-1)p}  &   &  \\
& \left(\begin{array}{ccccc}
1 & -1  &     &     &   \\
-1 & 2  & -1  &      &   \\
& -1 & \ddots   & \ddots    &   \\
&    & \ddots  & 2  & -1     \\
&    &     & -1  &  1    
\end{array}\right)_{(p+1)\times(p+1)}  &  \\
&   & \mathbf{0}_{(N-k)p \times (N-k)p}  
\end{array}\right).
\end{equation*}

From theory of linear algebra, it is easy to follow this proposition.

\begin{prop}
	\label{prop:A}
	By the way of construction, $\A_k$'s have following properties:
	\begin{enumerate}
		\item $\A_k$ is positive semidefinite with $\|\A_k\|_2 \le 4$;
		\item $\rk(\A_2) = \cdots = \rk(\A_{N-1}) = p$ and $\rk(\A_1) = \rk(\A_{N}) = p + 1$;
		\item For each $k$, there exist a matrix $\X_k \in \RB^{r_k \times (Np+1)}$ such that $\A_k = \X_k^\top \X_k$ where $r_k = \rk(\A_k)$. Given any $k$, each row of $\X_k$ has non-zero entries only on a block of coordinates, namely $\IM_k = \left\{ (k-1)p+1, (k-1)p+2, \cdots , kp+1 \right\}$. As a result, $\A = \sum_{k=1}^N \A_k = \X^\top \X$, where $\X = (\X_1^\top, \cdots, \X_N^\top)^\top \in \RB^{(Np+2) \times (Np+1)}$. 
		\item $\w^* = \A^{-1}\bb$ is the global minimizer of problem~\eqref{eq:quadratic}, given by $(\w^*)_i = 1 - \frac{i}{Np+2} \ (1 \le i \le Np+1)$. Let $\widetilde{\w} \triangleq (\underbrace{1, \cdots, 1}_{p+1}, \underbrace{0, \cdots, 0}_{(N-1)p})^\top \in \RB^{Np+1}$, then $\A_1\widetilde{\w} = \X_1^\top\X_1 \widetilde{\w} = \bb_1$.
	\end{enumerate}
\end{prop}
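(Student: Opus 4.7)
The plan is to prove all four claims through a single unified device: an explicit sum-of-squares (SOS) representation of the quadratic form $\w^\top \A_k \w$, from which the PSD property, the operator-norm bound, the rank, and the factorization $\A_k = \X_k^\top \X_k$ all fall out directly. Claim 4 will then be handled by a telescoping verification independent of the first three.

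First I would write out $\w^\top \B_k \w = \sum_{i=(k-1)p+1}^{kp}(\w_i - \w_{i+1})^2$ and observe that this is manifestly nonnegative and depends only on the coordinates in $\IM_k$; adding the rank-one corrections for $\A_1$ and $\A_N$ contributes an additional $\w_1^2$ or $\w_{Np+1}^2$ term, still nonnegative, yielding Claim 1's PSD assertion. For the norm bound I would note that each of these squared terms already appears as a summand in the global SOS $\w^\top\A\w = \w_1^2 + \w_{Np+1}^2 + \sum_{i=1}^{Np}(\w_i-\w_{i+1})^2 \le 4\|\w\|^2$ established in equation~(\ref{eq:smooth}); since every summand is nonnegative, $\w^\top \A_k \w \le \w^\top \A \w \le 4\|\w\|^2$.

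For Claim 3 I would read $\X_k$ off of the SOS directly: for a middle index $2 \le k \le N-1$ let the rows of $\X_k$ be the $p$ vectors $\e_i^\top - \e_{i+1}^\top$ for $i=(k-1)p+1,\dots,kp$; for $k=1$ append the row $\e_1^\top$, and for $k=N$ append $\e_{Np+1}^\top$. Each such row is supported in $\IM_k$ by construction, and $\X_k^\top \X_k = \A_k$ because the resulting quadratic form reproduces the SOS above. For Claim 2, I would check that these rows are linearly independent (the $p$ successive differences have a lower-triangular coefficient pattern on $\IM_k$; adding $\e_1^\top$ or $\e_{Np+1}^\top$ for $k\in\{1,N\}$ introduces a coordinate not spanned by the differences), so $\rk(\A_k) = \rk(\X_k)$ equals the stated values. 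Equivalently, and as a sanity check, one can compute $\ker(\A_k)$ directly: for the middle $k$, $\A_k\w=0$ forces $\w$ to be constant on $\IM_k$ (one direction) and unconstrained off $\IM_k$ ($Np-p$ directions), giving nullity $Np-p+1$ and rank $p$; for $k=1$ the extra $\w_1^2$ term collapses the constant to zero, dropping the nullity by one and raising the rank to $p+1$.

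For Claim 4, the explicit minimizer is easiest to verify rather than derive: for $(\w^*)_i = 1 - i/(Np+2)$ the differences $\w^*_i - \w^*_{i+1} = 1/(Np+2)$ are constant, so a direct row-by-row computation gives $(\A\w^*)_1 = 2\w^*_1 - \w^*_2 = 1$, $(\A\w^*)_i = -\w^*_{i-1} + 2\w^*_i - \w^*_{i+1} = 0$ for $1<i<Np+1$, and $(\A\w^*)_{Np+1} = -\w^*_{Np} + 2\w^*_{Np+1} = 0$, matching $\bb = \e_1$. Finally, since $\widetilde{\w}$ is constant on $\IM_1$, the SOS $\widetilde{\w}^\top \B_1\widetilde{\w} = \sum_{i=1}^{p}(\widetilde{\w}_i-\widetilde{\w}_{i+1})^2$ vanishes term by term, so $\B_1\widetilde{\w} = 0$; combining with $\E_{1,1}\widetilde{\w} = \widetilde{\w}_1 \e_1 = \e_1 = \bb_1$ gives $\A_1\widetilde{\w} = \X_1^\top\X_1\widetilde{\w} = \bb_1$.

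There is no real obstacle here — every claim reduces to explicit arithmetic once the SOS form is written down. The only item requiring mild care is Claim 2, where one must argue that the SOS vectors chosen are linearly independent so that $\rk(\X_k)$ equals the number of rows; this can be done either by the triangular-support argument above or by the independent null-space calculation, whichever the referee prefers.
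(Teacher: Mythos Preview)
Your proposal is correct. The paper itself does not give a proof of this proposition at all; it simply remarks ``From theory of linear algebra, it is easy to follow this proposition'' and moves on. So there is no approach in the paper to compare against, and your explicit sum-of-squares decomposition is exactly the natural way to fill in what the authors left implicit: it simultaneously yields PSD, the operator-norm bound via $\w^\top\A_k\w \le \w^\top\A\w \le 4\|\w\|^2$, the rank, and the factorization. Two small remarks: (i) in Claim~4, the step ``$\widetilde{\w}^\top\B_1\widetilde{\w}=0$ so $\B_1\widetilde{\w}=0$'' tacitly uses that $\B_1$ is PSD, which you have already established but should perhaps say explicitly; (ii) the assertion that $\A^{-1}\bb$ minimizes~\eqref{eq:quadratic} is strictly speaking only true for $\mu=0$ (the paper's own Theorem~\ref{thm:4failure} proof treats $\mu=0$ first and then perturbs), so your verification that $\A\w^*=\bb$ is the right thing to check and matches the paper's intent.
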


From Proposition~\ref{prop:A}, we can rewrite these local quadratic objectives in form of a ridge linear regression. Specifically, for $k=1$, 
\begin{align*}
F_1(\w) &= \frac{1}{2} \left[ \w^\top \A_1 \w - 2 \bb_1^\top \w + \mu \|\w\|^2 \right], 
\\&= \frac{1}{2} \left[ \w^\top\X_1^\top \X_1 \w - 2 \widetilde{\w}^\top\X_1^\top \X_1  \w + \mu \|\w\|^2 \right],
\\&= \frac{1}{2} \| \X_1 \left( \w - \widetilde{\w} \right)\|_2^2 +\frac{1}{2} \mu \|\w\|^2 + C,
\end{align*}
where $C$ is some constant irrelevant with $\w$). For $2 \le k \le N$, 
\begin{align*}
F_k(\w) &= \frac{1}{2} \left[  \w^\top \A_k \w - 2 \bb_k^\top \w + \mu \|\w\|^2\right] ,
\\&=\frac{1}{2}  \|\X_k \w\|_2^2+ \frac{1}{2}\mu \|\w\|^2.
\end{align*}
Similarly, the global quadratic objective~\eqref{eq:quadratic} can be written as $F(\w) =\frac{1}{2N} \|\X(\w - \w^*)\|_2^2 + \frac{1}{2}\mu\|\w\|^2$ . 

Data in each device are sparse in the sense that non-zero features only occur in the block $\IM_k$ of coordinates. Blocks on neighboring devices only overlap one coordinate, i.e., $|\IM_k \cap \IM_{k+1}|=1$. These observations imply that the training data in this example is not identically distributed. 

The $k$-th device has $r_k \ (=p \ \text{or} \ p+1)$ non-zero feature vectors which are vertically concatenated into the feature matrix $\X_k$. Without loss of generality, we can assume all devices hold $p+1$ data points since we can always add additional zero vectors to expand the local dataset. Therefore $n_1 = \cdots = n_N = p+1$ in this case, which implies that the training data in this example is balanced distributed.

\subsection{Proof of Theorem~\ref{thm:4failure}.}
\label{sec:proof4example}

\begin{proof}[Proof of Theorem~\ref{thm:4failure}.]
	To prove the theorem, we assume that (i) all devices hold the same amount of data points, (ii) all devices perform local updates in parallel, (iii) all workers use the same learning rate $\eta$ and (iv) all gradients computed by each device make use of its full local dataset (hence this case is a deterministic optimization problem). We first provide the result when $\mu=0$. 
	
	For convenience, we slightly abuse the notation such that $\w_t$ is the global parameter at round $t$ rather than step $t$. Let $\w_t^{(k)}$ the updated local parameter at $k$-th worker at round $t$. Once the first worker that holds data $(\A_1, \bb_1)$ runs $E$ step of SGD on $F_1(\w)$ from $\w_t$, it follows that
	\[ \w_t^{(1)} = (\I - \eta \A_1)^E \w_t + \eta \sum_{l=0}^{E-1} (\I - \eta \A_1)^{l} \bb_1.  \]
	For the rest of workers, we have $\w_t^{(k)} = (\I - \eta \A_i)^E \w_t \ (2 \le k \leq N)$.

	Therefore, from the algorithm, 
	\[  \w_{t+1} = \frac{1}{N} \sum_{k=1}^N \w_{t+1}^{(k)} =\left(  \frac{1}{N} \sum_{i=1}^N (\I - \eta \A_i)^E  \right)\w_t + 
	\frac{\eta}{N}\sum_{l=0}^{E-1} (\I - \eta \A_1)^{l} \bb_1.
	\]
	
	Define $\rho \triangleq \|  \frac{1}{N} \sum_{i=1}^N (\I - \eta \A_i)^E \|_2$. Next we show that when $\eta < \frac{1}{4}$, we have $\rho < 1$. From Proposition~\ref{prop:A}, $\|\A_k\|_2 \le 4$ and $\A_k \succeq 0$ for $\forall \ k \in [N]$. This means $\| \I-\eta\A_k \|_2 \leq 1$ for all $k \in [N]$. Then for any $\x \in \RB^{Np+1}$ and $\|x\|_2=1$, we have $\x ^\top (\I-\eta\A_k)^E \x \leq 1 $ and it is monotonically decreasing when $E$ is increasing. Then 
	\begin{align*}
	\x ^\top \left(  \frac{1}{N} \sum_{i=1}^N (\I - \eta \A_i)^E \right) \x &\leq \x ^\top \left(  \frac{1}{N} \sum_{i=1}^N (\I - \eta \A_i) \right) \x \\
	& = \x ^\top \left( \I- \frac{\eta}{N} \A \right) \x < 1
	\end{align*}
	since $0 \prec \A \preceq 4 \I$ means $ 0 \preceq (\I - \frac{\eta}{N} \A) \prec \I $.
	
	Then $\| \w_{t+1} - \w_t \|_2 \le \rho \|\w_t - \w_{t-1}\|_2 \le \rho^t \|\w_1 - \w_0\|_2$. By the triangle inequality,
	\[  \| \w_{t+n} - \w_t \|_2 \le \sum_{i=0}^{n-1}   \| \w_{t+i+1} - \w_{t+i}\|_2 \le  \sum_{i=0}^{n-1}  \rho^{t+i} \|\w_1 - \w_0\|_2 \le \rho^t  \frac{\|\w_1 - \w_0\|_2 }{1- \rho} \]
	which implies that $\{\w_t \}_{t \ge 1}$ is a Cauchy sequence and thus has a limit denoted by $\widetilde{\w}^*$. We have
	\begin{equation}
	\label{eq:w*}
	\widetilde{\w}^* = \left(  \I - \frac{1}{N} \sum_{i=1}^N (\I - \eta \A_i)^E  \right)^{-1} \left[  \frac{\eta}{N}\sum_{l=0}^{E-1} (\I - \eta \A_1)^{l} \bb  \right].
	\end{equation}
	
	Now we can discuss the impact of $E$.
	\begin{enumerate}[(1)]
			\item When $E=1$, it follows from~\eqref{eq:w*} that $ \widetilde{\w}^* = \A^{-1}\bb = \w^*$, i.e., \texttt{FedAvg} converges to the global minimizer.
		\item When $E = \infty$, $\lim\limits_{E \to \infty} \eta \sum_{l=0}^{E-1} (\I - \eta \A_1)^{l} \bb = 
		\A_1^{+}\bb_1 = \widetilde{\w}$ and $\lim\limits_{E \to \infty} \frac{1}{N}\sum_{i=1}^N (\I - \eta \A_i)^E = \diag\{(1-\frac{1}{N})\I_p; (1-\frac{1}{N}) \I - \frac{1}{N}\M; (1-\frac{1}{N})\I_p \}$ where $\M \in \RB^{(N-2)p+1 \times (N-2)p+1 }$ is some a symmetric matrix. Actually $\M$ is \textbf{almost} a diagonal matrix in the sense that there are totally $N-2$ completely the same matrices (i.e., $\frac{1}{p+1}\e \e^T \in \RB^{(p+1) \times (p+1)}$) placed on the diagonal of $\M$ but each overlapping only the lower right corner element with the top left corner element of the next block. 
		Therefore $\widetilde{\w}^* =  (\underbrace{1, \cdots, 1}_{p}, \underbrace{\V_{11}, \cdots, \V_{(N-2)p+1, 1}}_{(N-2)p+1}, \underbrace{0, \cdots, 0}_{p})^T$ where $\V = (\I - \M)^{-1}$. From (4) of Proposition~\ref{prop:A}, $\widetilde{\w}^*$ is different from $\w^*$ 
		\item When $2 \le E < \infty$, note that
		\begin{align}
		&\widetilde{\w}^* - \w^* \nonumber \\ =&
		\left(  \I - \frac{1}{N} \sum_{i=1}^N (\I - \eta \A_i)^E  \right)^{-1} \left[  \frac{\eta}{N}\sum_{l=0}^{E-1} (\I - \eta \A_1)^{l} \A -
		\left(  \I - \frac{1}{N} \sum_{i=1}^N (\I - \eta \A_i)^E   \right)  \right]\w^*. \label{eq:diff}
		\end{align}
		The right hand side of the last equation cannot be zero. Quantificationally speaking, we have the following lemma. We defer the proof for the next subsection.
		\begin{lem}
		\label{lem:diff}
		If the step size $\eta$ is sufficiently small, then in this example, we have 
		\begin{equation}
		\label{eq:lower}
		\|\widetilde{\w}^* - \w^*\| \ge \frac{(E-1)\eta}{16} \big\| \A_1\A_2 \w^*\big\|.
		\end{equation}
		\end{lem}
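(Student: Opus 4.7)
The plan is to expand both factors in~\eqref{eq:diff} as Taylor series in $\eta$ around $\eta=0$ and then extract the leading-order part of $\widetilde{\w}^*-\w^*$. Writing $M := \I - \tfrac{1}{N}\sum_i(\I-\eta\A_i)^E$ and letting $R$ denote the bracketed matrix in~\eqref{eq:diff}, the binomial expansion together with the identity $\sum_{l=0}^{E-1}l=\binom{E}{2}$ yields
\begin{align*}
M &= \tfrac{E\eta}{N}\A - \tfrac{\binom{E}{2}\eta^2}{N}\sum_{i=1}^{N}\A_i^2 + O(\eta^3), \\
R &= \tfrac{\binom{E}{2}\eta^2}{N}\bigl[\sum_{i=1}^{N}\A_i^2 - \A_1\A\bigr] + O(\eta^3).
\end{align*}
Consequently $M^{-1}=\tfrac{N}{E\eta}\A^{-1}+O(1)$, and a direct multiplication gives
\[
\widetilde{\w}^*-\w^* \;=\; \tfrac{(E-1)\eta}{2}\,\A^{-1}\bigl[\sum_{i=1}^{N}\A_i^2-\A_1\A\bigr]\w^* \;+\; O(\eta^2).
\]

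Next I would simplify the bracket using the block-sparsity from Proposition~\ref{prop:A}: because the row-column supports $\IM_1$ and $\IM_j$ are disjoint for every $j\ge 3$, we have $\A_1\A_j=0$ for $j\ge 3$, so $\A_1\A=\A_1^2+\A_1\A_2$. The bracket therefore collapses to $\sum_{i=2}^{N}\A_i^2-\A_1\A_2$, and the leading-order error reads
\[
\widetilde{\w}^*-\w^* \;=\; \tfrac{(E-1)\eta}{2}\,\A^{-1}\bigl[\sum_{i=2}^{N}\A_i^2-\A_1\A_2\bigr]\w^* \;+\; O(\eta^2).
\]

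The main obstacle is ruling out catastrophic cancellation between the two summands inside the bracket when passing to the norm. I plan to exploit the coordinate structure: the row index $p$ lies in $\IM_1$ but in no $\IM_i$ for $i\ge 2$, so $(\A_i^2\w^*)_p=0$ for every $i\ge 2$. A direct evaluation of $\A_1\A_2$, which has precisely four nonzero entries (at rows $p$, $p+1$ and columns $p+1$, $p+2$), together with the explicit formula $(\w^*)_j = 1-j/(Np+2)$, yields $(\A_1\A_2\w^*)_p = -1/(Np+2)$ and $(\A_1\A_2\w^*)_{p+1} = 1/(Np+2)$. The $p$-th coordinate of $[\sum_{i=2}^{N}\A_i^2-\A_1\A_2]\w^*$ thus equals $1/(Np+2)$, which forces
\[
\bigl\|[\sum_{i=2}^{N}\A_i^2-\A_1\A_2]\w^*\bigr\|_2 \;\ge\; \tfrac{1}{Np+2} \;=\; \tfrac{1}{\sqrt{2}}\|\A_1\A_2\w^*\|_2,
\]
where the last equality uses that $\A_1\A_2\w^*$ has exactly two nonzero entries of equal magnitude.

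Finally, combining $\|\A^{-1}\x\|_2\ge\|\x\|_2/\|\A\|_2\ge\|\x\|_2/4$ (from $\|\A\|_2\le 4$ established in~\eqref{eq:smooth}) with the previous display gives $\|\A^{-1}[\sum_{i=2}^{N}\A_i^2-\A_1\A_2]\w^*\|_2\ge \|\A_1\A_2\w^*\|_2/(4\sqrt{2})$, so the leading-order term of $\widetilde{\w}^*-\w^*$ already has norm at least $\tfrac{(E-1)\eta}{8\sqrt{2}}\|\A_1\A_2\w^*\|_2$. Since $1/(8\sqrt{2})>1/16$, choosing $\eta$ small enough absorbs the $O(\eta^2)$ correction via the reverse triangle inequality and yields the claimed bound $\|\widetilde{\w}^*-\w^*\|_2\ge \tfrac{(E-1)\eta}{16}\|\A_1\A_2\w^*\|_2$.
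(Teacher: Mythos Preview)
Your proposal is correct and follows the same strategy as the paper: Taylor-expand $M$ and the bracketed matrix $R$ in~\eqref{eq:diff} to second order, invert $M$ to obtain $M^{-1}=\tfrac{N}{E\eta}\A^{-1}+O(1)$, multiply out to isolate the leading term $\tfrac{(E-1)\eta}{2}\A^{-1}(\V-\A_1\A)\w^*$ with $\V=\sum_i\A_i^2$, and then use $\|\A^{-1}\x\|\ge\tfrac14\|\x\|$ together with an absorption of the higher-order remainder.

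The only substantive difference is in the very last algebraic step. The paper asserts $\V-\A_1\A=\A_1\sum_{i\ge 2}\A_i=\A_1\A_2$, which is in fact incorrect: since $\A_1\A=\A_1^2+\A_1\A_2$ (by $\A_1\A_i=0$ for $i\ge 3$), one actually gets $\V-\A_1\A=\sum_{i\ge 2}\A_i^2-\A_1\A_2$, exactly as you compute. Your coordinate argument---observing that row $p$ lies in $\IM_1$ but in no $\IM_i$ for $i\ge 2$, so the $p$-th entry of $(\V-\A_1\A)\w^*$ comes solely from $-\A_1\A_2\w^*$, and then evaluating $\A_1\A_2\w^*$ explicitly from the four nonzero entries of $\A_1\A_2$ and the closed form of $\w^*$---is a clean way to rule out cancellation and recover the lower bound $\tfrac{1}{\sqrt2}\|\A_1\A_2\w^*\|$. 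This repairs the paper's slip while preserving the stated constant $1/16$, since $1/(8\sqrt2)>1/16$ leaves room to absorb the $O(\eta^2)$ term via the reverse triangle inequality.
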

	Since $\A_1\A_2 \neq \0$ and $\w^*$ is dense, the lower bound in~\eqref{eq:lower} is not vacuous.
	\end{enumerate}

	Now we have proved the result when $\mu = 0$. For the case where $\mu>0$, we replace  $\A_i$ with $\A_i + \mu \I$ and assume $\mu < \frac{1}{4+\mu}$ instead of the original. The discussion on different choice of $E$ is unaffected.
\end{proof}

\subsection{Proof of Lemma~\ref{lem:diff}}
\begin{proof}
	We will derive the conclusion mainly from the expression~\eqref{eq:diff}. Let $f(\eta)$ be a function of $\eta$. We say a matrix $\T$ is $\Theta(f(\eta))$ if and only if there exist some positive constants namely $C_1$ and $C_2$ such that $C_1 f(\eta) \le\|\T\| \le C_2 f(\eta)$ for all $\eta > 0$. In the following analysis, we all consider the regime where $\eta$ is sufficiently small.
	
	Denote by $\V = \sum_{i=1}^N \A_i^2$. First we have 
	\begin{align}
	\label{eq:q1}
	\I - \frac{1}{N} \sum_{i=1}^N (\I - \eta \A_i)^E 
	&= \I - \frac{1}{N} \sum_{i=1}^N  (\I - E \eta \A_i + \frac{E(E-1)}{2}\eta^2 \A_i^2 + \Theta(\eta^3)) \nonumber \\
	&= \frac{E\eta}{N}\A - \frac{E(E-1)}{2N}\eta^2\V + \Theta(\eta^3). 
	\end{align}

	Then by plugging this equation into the right hand part of~\eqref{eq:diff}, we have
	\begin{align*}
	&\frac{\eta}{N}\sum_{l=0}^{E-1} (\I - \eta \A_1)^{l} \A -
	\left(  \I - \frac{1}{N} \sum_{i=1}^N (\I - \eta \A_i)^E   \right) \\
	=&\frac{\eta}{N} \sum_{l=0}^{E-1} (\I - l \eta \A_1 + \Theta(\eta^2))\A - \left( \frac{E\eta}{N}\A - \frac{E(E-1)}{2N}\eta^2\V + \Theta(\eta^3) \right)\\
	=&\frac{\eta^2}{N} \left( \frac{E(E-1)}{2}(\V - \A_1\A) + \Theta(\eta) \right)
	\end{align*}
	Second from~\eqref{eq:q1}, we have that
	\[ \left(  \I - \frac{1}{N} \sum_{i=1}^N (\I - \eta \A_i)^E  \right)^{-1}  = \left( \frac{E\eta}{N}\A + \Theta(\eta^2) \right)^{-1} = \frac{N}{E\eta} \A^{-1} + \Theta(1).  \]
	
	Plugging the last two equations into~\eqref{eq:diff}, we have
	\begin{align*}
	\|\widetilde{\w}^* - \w^*\| &= \bigg\|\left(\frac{N}{E\eta} \A^{-1} + \Theta(1)\right) \frac{\eta^2}{N} \left( \frac{E(E-1)}{2}(\V - \A_1\A) + \Theta(\eta) \right) \w^*\bigg\|\\
	&= \bigg\| \left( \frac{E-1}{2} \eta \A^{-1} (\V - \A_1\A )+ \Theta(\eta) \right) \w^* \bigg\| \\
	&\ge \frac{(E-1)\eta}{16} \| (\V - \A_1\A )\w^*\|\\
	&=\frac{(E-1)\eta}{16} \| \A_1\A_2\w^*\|
	\end{align*}
	where the last inequality holds because (i) we require $\eta$ to be sufficiently small and (ii) $\|\A^{-1} \x \| \ge \frac{1}{4} \|\x\|$ for any vector $\x$ as a result of $0 < \|\A\| \le 4$. The last equality uses the fact (i) $\V - \A_1\A = \A_1 \sum_{i=2}^n \A_i$ and (ii) $\A_1\A_i = \0$ for any $i \ge 3$.
\end{proof}

\section{Experimental Details}
\label{sec:appen:exp}
\subsection{Experimental Setting}
\paragraph{Model and loss.}
We examine our theoretical results on a multinomial logistic regression. Specifically, let $f(\w;x_i)$  denote the prediction model with the parameter $\w = (\W, \bb)$ and the form $f(\w;\x_i) = \text{softmax}(\W\x_i+\bb)$. The loss function is given by
\[F(\w) = \frac{1}{n}\sum_{i=1}^n \text{CrossEntropy}\left(f(\w;\x_i), \y_i \right) + \lambda \|\w\|_2^2.  \]
This is a convex optimization problem. The regularization parameter is set to $\lambda = 10^{-4}$.

\paragraph{Datasets.}
We evaluate our theoretical results on both real data and synthetic data. For real data, we choose MNIST dataset \citep{lecun1998gradient} because of its wide academic use. To impose statistical heterogeneity, we distribute the data among $N=100$ devices such that each device contains samples of only two digits. To explore the effect of data unbalance, we further vary the number of samples among devices. Specifically, for unbalanced cases, the number of samples among devices follows a power law, while for balanced cases, we force all devices to have the same amount of samples. 

Synthetic data allow us to manipulate heterogeneity more precisely. Here we follow the same setup as described in \citep{sahu2018convergence}. In particular, we generate synthetic samples $(\X_k, \Y_k)$ according to the model $ y = \argmax(\text{softmax}(\W_k x+\bb_k))$ with $x \in \RB^{60}, \W_k \in \RB^{10 \times 60}$ and $\bb_k \in \RB^{10}$, where $\X_k \in \RB^{n_k \times 60}$ and $\Y_k \in \RB^{n_k}$. We model each entry of $\W_k$ and $\bb_k$ as $\NM(\mu_k, 1)$ with $\mu_k \sim \NM(0, \alpha)$, and $(x_{k})_{j} \sim \NM(v_k, \frac{1}{j^{1.2}})$ with $v_k \sim \NM(B_k, 1)$ and $B_k \sim \NM(0, \beta)$. Here $\alpha$ and $\beta$ allow for more precise manipulation of data heterogeneity: $\alpha$ controls how much local models differ from each other and $\beta$ controls how much the local data at each device differs from that of other devices. There are $N=100$ devices in total. The number of samples $n_k$ in each device follows a power law, i.e., data are distributed in an unbalanced way. We denote by synthetic$(\alpha,\beta)$ the synthetic dataset with parameter $\alpha$ and $\beta$.

We summarize the information of federated datasets in Table~\ref{tab:data}.

\paragraph{Experiments.}
For all experiments, we initialize all runnings with $\w_0 = 0$. In each round, all selected devices run $E$ steps of SGD in parallel. We decay the learning rate at the end of each round by the following scheme $\eta_t = \frac{\eta_0}{1+t}$, where $\eta_0$ is chosen from the set $\{1, 0.1, 0.01\}$. We evaluate the averaged model after each global synchronization on the corresponding global objective. For fair comparison, we control all randomness in experiments so that the set of activated devices is the same across all different algorithms on one configuration.

\begin{table}
	\caption{Statistics of federated datasets}
	\label{tab:data}
	\centering
	\begin{tabular}{cccccc}
		\toprule
		Dataset & Details & $\#$ Devices $(N)$ & $\#$Training samples $(n)$ & \multicolumn{2}{c}{Samples/device}\\
		\cmidrule(r){5-6}
		&   &   &   &  mean     & std     \\
		\midrule
		\multirow{2}{*}{MNIST} 
		& balanced       & 100 & 54200 & 542 & 0\\
		& unbalanced & 100 & 62864 & 628  & 800 \\
		\midrule
		\multirow{2}{*}{Synthetic Data} 
		& $\alpha=0,\beta=0$ &100& 42522 & 425 & 1372 \\
		& $\alpha=1,\beta=1$ &100& 27348 & 273 & 421  \\
		\bottomrule
	\end{tabular}
\end{table}

\subsection{Theoretical verification}

\paragraph{The impact of $E$.}
From our theory, when the total steps $T$ is sufficiently large, the required number of communication rounds to achieve a certain precision is
\begin{equation*}
T_\epsilon/E \approx 
\frac{T_\epsilon}{E} 
\: \propto \:   \left(1+\frac{1}{K}\right)EG^2 + \frac{\myave ^2\sigma_k^2 + L \Gamma + \kappa G^2}{E} + G^2 ,
\end{equation*}
which is s a function of $E$ that first decreases and then increases. This implies that the optimal local step $E^*$ exists. What's more, the $T_\epsilon/E$ evaluated at $E^*$ is
\[ \mathcal{O}\left( G\sqrt{\myave ^2\sigma^2 +L\Gamma + \kappa G^2 }  \right), \] 
which implies that \texttt{FedAvg} needs more communication rounds to tackle with severer heterogeneity.

To validate these observations, we test \texttt{FedAvg} with Scheme I on our four datasets as listed in Table~\ref{tab:data}. In each round, we activate $K=30$ devices and set $\eta_0=0.1$ for all experiments in this part. For unbalanced MNIST, we use batch size $b=64$. The target loss value is $0.29$ and the minimum loss value found is $0.2591$. For balanced MNIST, we also use batch size $b=64$. The target loss value is $0.50$ and the minimum loss value found is $0.3429$. For two synthetic datasets, we choose $b=24$. The target loss value for synthetic(0,0) is $0.95$ and the minimum loss value is $0.7999$. Those for synthetic(1,1) are $1.15$ and $1.075$.

\paragraph{The impact of $K$.}
Our theory suggests that a larger $K$ may accelerate convergence since $T_\epsilon/E $ contains a term $\mathcal{O}\left( \frac{EG^2}{K}\right)$. We fix $E=5$ and $\eta_0=0.1$ for all experiments in this part. We set the batch size to 64 for two MNIST datasets and 24 for two synthetic datasets. We test Scheme I for illustration. Our results show that, no matter what value $K$ is, \texttt{FedAvg} converges. From Figure~\ref{fig:K}, all the curves in each subfigure overlap a lot. To show more clearly the differences between the curves, we zoom in the last few rounds in the upper left corner of the figure. It reveals that the curve of a large enough $K$ is slightly better. This result also shows that there is no need to sample as many devices as possible in convex federated optimization.

\begin{figure}[ht]
	\centering
	\subfloat[Balanced MNIST]{
		\includegraphics[width=0.4\textwidth] {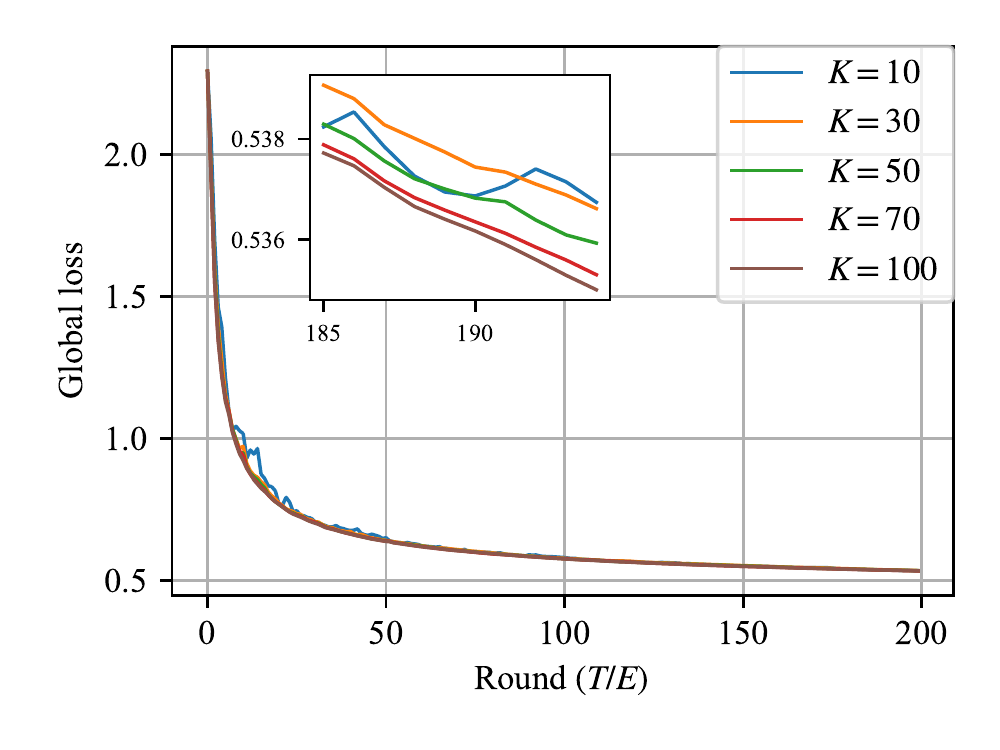}}
	\subfloat[Unbalanced MNIST]{
		\includegraphics[width=0.4\textwidth] {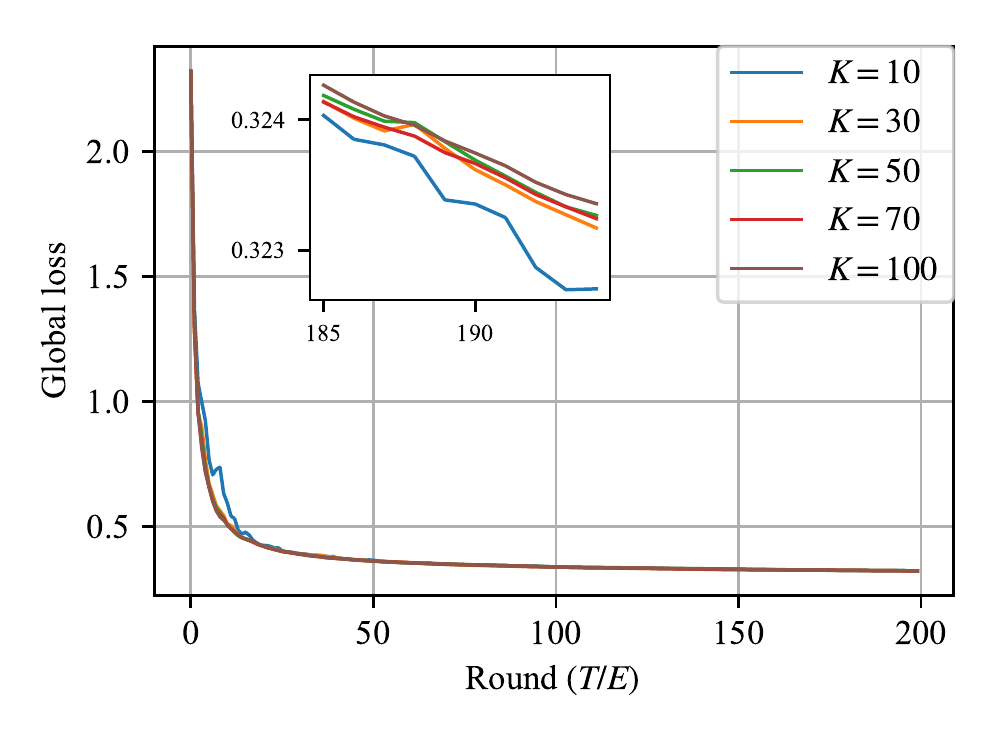}}
	\\
	\subfloat[Synthetic(0, 0)]{
		\includegraphics[width=0.4\textwidth] {fig/alpha0_beta0_niid_K.pdf}}
	\subfloat[Synthetic(1, 1)]{
		\includegraphics[width=0.4\textwidth] {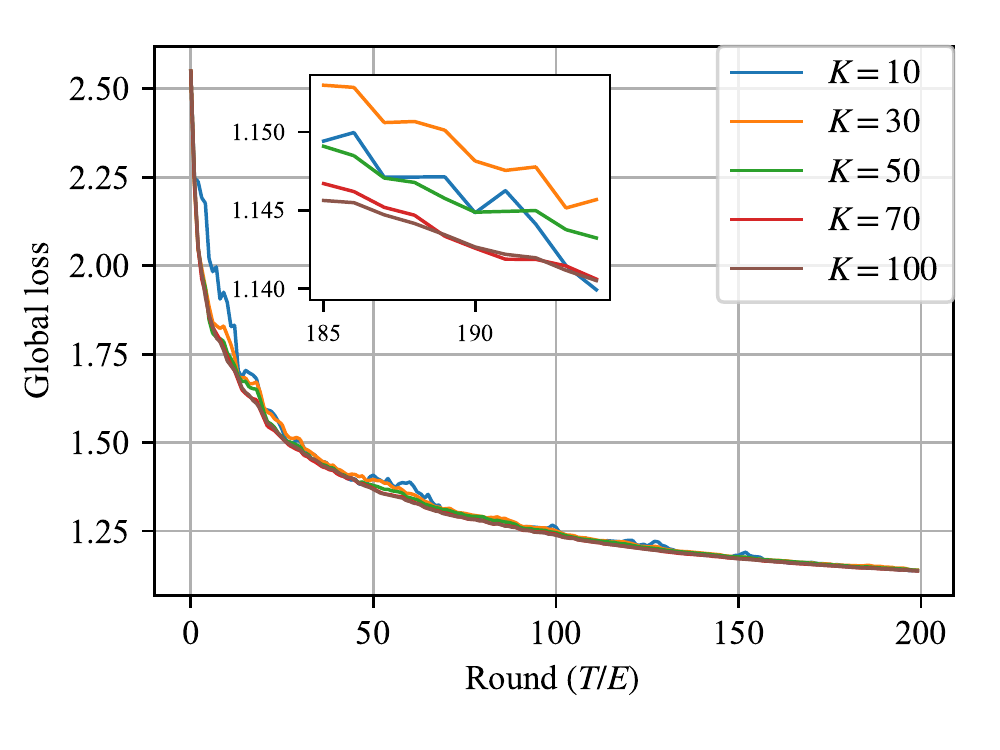}}
	\caption{The impact of $K$ on four datasets. To show more clearly the differences between the curves, we zoom in the last few rounds in the upper left corner of the box.}
	\label{fig:K}
\end{figure}

\paragraph{Sampling and averaging schemes.}
We analyze the influence of sampling and averaging schemes. As stated in Section~\ref{subsec:conv_partial}, \textbf{Scheme I} iid samples (with replacement) $K$ indices with weights $p_k$ and simply averages the models, which is proposed by \citet{sahu2018convergence}. \textbf{Scheme II} uniformly samples (without replacement) $K$ devices and weightedly averages the models with scaling factor $N/K$. \textbf{Transformed Scheme II} scales each local objective and uses uniform sampling and simple averaging. We compare Scheme I, Scheme II and transformed Scheme II, as well as the original scheme~\citep{mcmahan2016communication} on four datasets. We carefully tuned the learning rate for the original scheme. In particular, we choose the best step size from the set $\{0.1, 0.5, 0.9, 1.1\}$. We did not fine tune the rest schemes and set $\eta_0 = 0.1$ by default. The hyperparameters are the same for all schemes: $E=20, K=10$ and $b=64$. The results are shown in Figure~\ref{fig:test_figure_c} and~\ref{fig:test_figure_d}.

Our theory renders Scheme I the guarantee of convergence in common federated setting. As expected,  Scheme I performs well and stably across most experiments. This also coincides with the findings of \citet{sahu2018convergence}. They noticed that Scheme I performs slightly better than another scheme: server first uniformly samples devices and then averages local models with weight $p_k/\sum_{l \in S_t}p_l$. However, our theoretical framework cannot apply to it, since for $t \in \IM$, $\EB_{S_t} \aw_t = \av_t$ does not hold in general.

Our theory \textbf{does not} guarantee \texttt{FedAvg} with Scheme II could converge when the training data are unbalanced distributed. Actually, if the number of training samples varies too much among devices, Scheme II may even diverge. To illustrate this point, we have shown the terrible performance on \texttt{mnist unbalanced} dataset in Figure~\ref{fig:test_figure_b}. In Figure~\ref{fig:more_scheme}, we show additional results of Scheme II on the two synthetic datasets, which are the most unbalanced. We choose $b=24, K=10, E=10$ and $\eta_0=0.1$ for these experiments. However, transformed Scheme II performs well except that it has a lower convergence rate than Scheme I.

\begin{figure}[ht]
	\centering
	\subfloat[Synthetic(0,0)]{
		\includegraphics[width=0.4\textwidth] {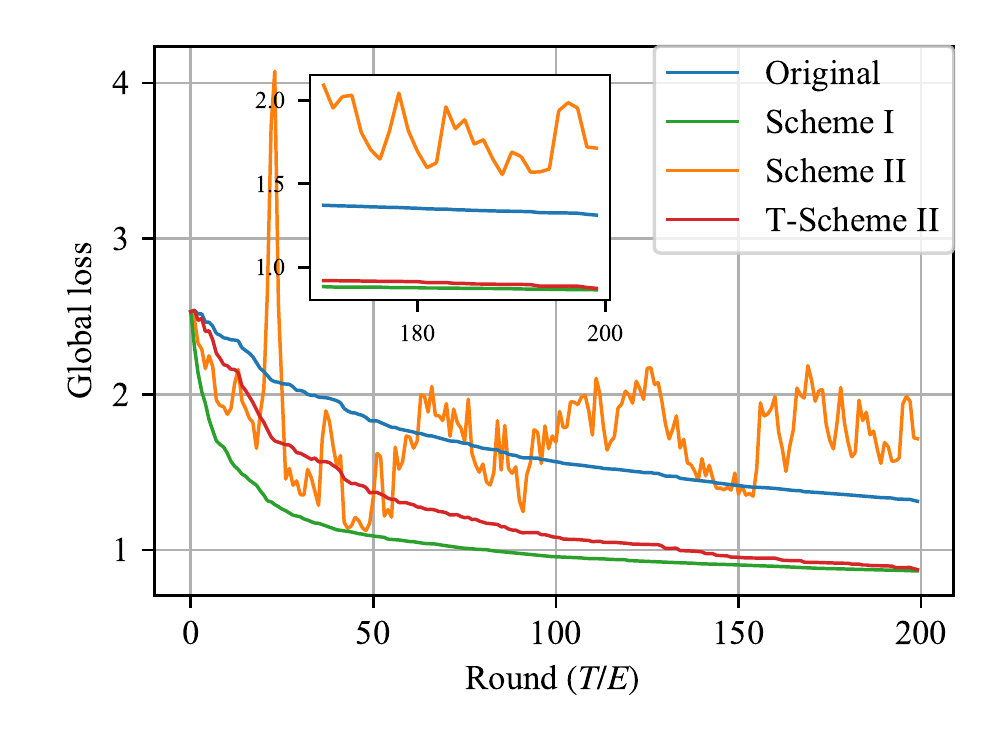}}
	\subfloat[Synthetic(1,1)]{
		\includegraphics[width=0.4\textwidth] {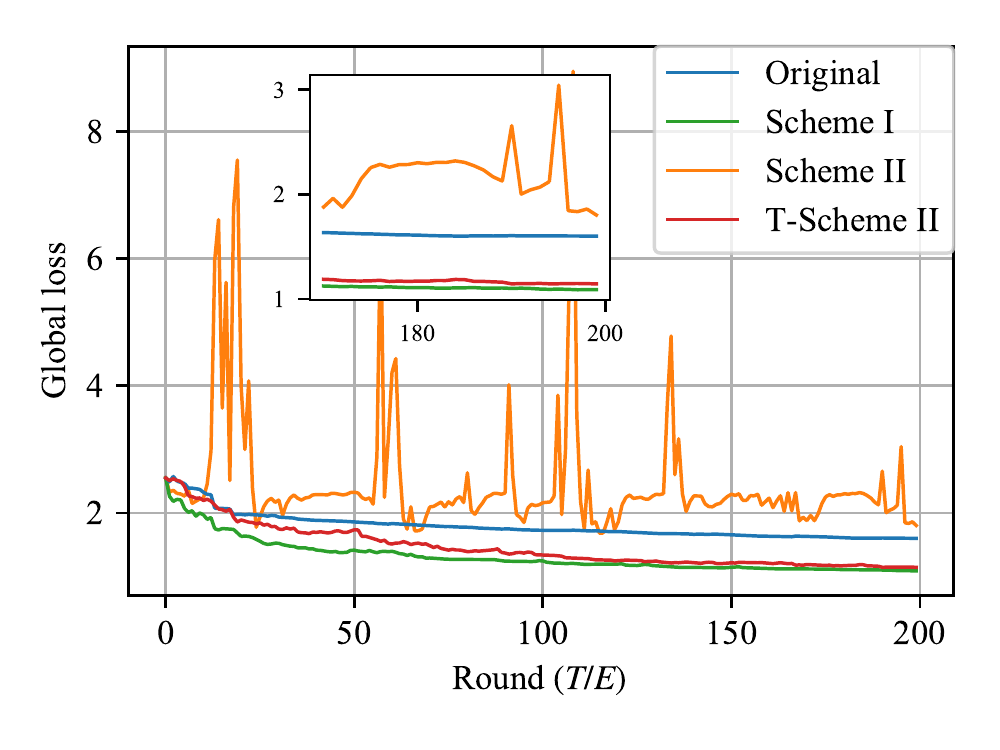}}
	\caption{The performance of four schemes on two synthetic datasets. The Scheme I performs stably and the best. The original performs the second. The curve of the Scheme II fluctuates and has no sign of convergence. Transformed Scheme II has a lower convergence rate than Scheme I.}
	\label{fig:more_scheme}
\end{figure}

\end{document}